\newcommand{\algmargin}{\the\ALG@thistlm}
\newlength{\whilewidth}
\algnewcommand{\parState}[1]{\State%
\parbox[t]{\dimexpr\linewidth-\algmargin}{\strut #1\strut}}
\theoremstyle{plain}
\newtheorem{theorem}{Theorem}
\newtheorem{comment}{Comment}
\newtheorem{lemma}{Lemma}
\newtheorem{corollary}{Corollary}
\newtheorem{assumption}{Assumption}
\theoremstyle{definition}
\newtheorem{definition}{Definition}
\newtheorem{assumptionA}{A.\!\!}
\newtheorem{assumptionB}{B.\!\!}
\begin{document}

\title[L\textsubscript{2}-Boosting: Inference of Treatment Effects]{Estimation and Inference of Treatment Effects with L\textsubscript{2}-Boosting in High-Dimensional Settings}
\author{Jannis Kueck, Ye Luo, Martin Spindler, Zigan Wang}\thanks{\singlespacing Corresponding author: Martin Spindler, University of Hamburg, Hamburg Business School, Moorweidenstr. 18, 20148 Hamburg, Germany, martin.spindler@uni-hamburg.de. We thank Ross Levine and Chen Lin who kindly shared their original data for the empirical application. We also thank Oliver Schacht who supported us with the simulation studies.
We are grateful for valuable comments to Philipp Bach, Victor Chernozhukov, Sven Klaassen, Whitney Newey and participants at the International Workshop on \textquotedblleft Causal Inference, Program Evaluation, and External Validity\textquotedblright\ (Luxemburg) and at the annual meeting of the German Economic Association.}
\date{June, 2021.}
\maketitle

\begin{footnotesize}
\textbf{Abstract.} 
Empirical researchers are increasingly faced with rich data sets containing many controls or instrumental variables, making it essential to choose an appropriate approach to variable selection. In this paper, we provide results for valid inference after post- or orthogonal $L_2$-Boosting is used for variable selection. We consider treatment effects after selecting among many control variables and instrumental variable models with potentially many instruments.  
To achieve this, we establish new results for the rate of convergence of iterated post-$L_2$-Boosting and orthogonal $L_2$-Boosting in a high-dimensional setting similar to Lasso, i.e., under approximate sparsity without assuming the beta-min condition. These results are extended to the 2SLS framework and valid inference is provided for treatment effect analysis.
We give extensive simulation results for the proposed methods and compare them with Lasso. In an empirical application, we construct efficient IVs with our proposed methods to estimate the effect of  pre-merger overlap of bank branch networks  in the US on the post-merger stock returns of the acquirer bank.

\textbf{Key words:} $L_2$-Boosting, inference, treatment effects,  instrumental variables, post-selection inference, high-dimensional data.

\end{footnotesize}
\doublespacing
\section{Introduction}\label{sec1}
Boosting algorithms are  popular in machine learning and have proven to be useful for prediction and variable selection (\cite{nr:buhlmann.hothorn:2007}).
Nevertheless, in many applications,  researchers are interested in inference on pre-specified variables, and this requires a different approach. 
Often these variables are so-called treatment or policy variables that  researchers would like to learn and make inferences about, particularly in high-dimensional settings.  



It is well-known that inference after model selection may lead to invalid results when the selection step is not taken into account or safeguarded against. For an overview of the pitfalls and challenges of post-selection inference in general, we refer to \cite{nr:leeb.potscher:2005}. In this paper, we describe our approach to address this issue. First, we substantially extend the techniques in \cite{SL_2016} to provide a set of novel convergence results on iterated post-$L_2$-Boosting and orthogonal $L_2$-Boosting. These results are derived under approximate sparsity and without assuming the beta-min condition. Second, we provide results for valid inference in high-dimensional settings in which post- or orthogonal $L_2$-Boosting is applied for variable selection and estimation. For this, we combine the convergence results with the  techniques of orthogonalized moment conditions as described in \cite{CCDDHN:2016}.

Boosting has proven very valuable for prediction. We show in this paper that it can also be applied to (causal)  inference. In particular, we consider the case of the estimation of a treatment effect with many control variables, and the case of instrumental variable (IV) estimation such as two stage least squares procedure with many potential instruments. The first case can also be interpreted as estimation and inference on a prespecified variable in a high-dimensional linear regression model estimated with $L_2$-Boosting, where all other variables are considered as nuisance variables. 
As the first ingredient of our estimation method, it relies on the so-called orthogonalized or Neyman moment conditions. 
This theory was developed by Belloni, Chernozhukov, Hansen, and coauthors in a series of papers.
The case of instrumental variables is analyzed in \cite{BCCH:2012}, and the treatment effect case in \cite{BelloniChernozhukovHansen2011}.
Papers with extensions of the general idea are \cite{CCDDHN:2016} and \cite{CHS:2016}.
The second ingredient is rates of convergence that are fast enough to estimate the nuisance part of the orthogonal moment condition. Our results only rely on the standard assumptions in the high-dimensional literature, and nearly-Lasso convergence rates are derived for the iterated post-$L_2$-Boosting ($\mathtt{I}$-$\mathtt{pBA}$) and orthogonal $L_2$-Boosting ($\mathtt{oBA}$) algorithms. In addition, a data-driven early stopping criterion is proposed and we show that such stopping criterion achieves nearly-Lasso convergence rate under our assumptions. These results might be of independent interest since Boosting is widely used and the results for Boosting in the high-dimensional case under such weak assumptions have been missing in the literature to date.

Additionally, we conduct extensive simulation studies and compare the Boosting algorithm with Lasso in various high-dimensional settings. These results might give guidance to empirical researchers to decide when to choose which algorithm. We conclude that $L_2$-Boosting with a theoretical-grounded stopping criterion is the more efficient method with respect to computation time and also provides the best estimation accuracy in our simulation study. We also show the computational advantage of $L_2$-Boosting over Lasso by comparing the CPU time in a high-dimensional (causal) inference framework. Finally, we conduct an empirical analysis of how the pre-merger overlap of bank branch networks in the United States can affect the post-merger stock returns of acquirer banks. In this setting, many potential instrumental variables are available. We show how $L_2$-Boosting can be used to select these in a data-driven, principled way and to estimate the treatment effect of interest.


Boosting algorithms are one of the major recent advances in machine learning and statistics.
One of the earliest examples is Freund and Schapire's AdaBoost algorithm for classification (\cite{freund:1997}),  which has led to numerous variants that have proven to be competitive in terms of their prediction accuracy in a variety of applications and their strong resistance to overfitting as shown in \cite{nr:buhlmann.hothorn:2007}.
Boosting methods were originally proposed as ensemble methods, which rely on the principle of generating multiple predictions and majority voting (averaging) of the individual classifiers. An important development in the analysis of Boosting algorithms was Breiman’s interpretation of Boosting as a gradient descent algorithm in a function space (\cite{nr:breiman:1996}, \cite{nr:breiman:1998}), inspired by numerical optimization and statistical estimation.
Building on this insight, \cite{nr:friedman.hastie.ea:2000} and \cite{nr:friedman:2001} embedded Boosting algorithms into the framework of statistical estimation and additive basis expansion.
This also enabled Boosting to be applied to regression analysis. 
Boosting for regression was proposed by \cite{nr:friedman:2001}, and then \cite{nr:buhlmann.yu:2003} defined and introduced $L_2$-Boosting. \cite{nr:zhang.yu:2005} show the consistency of $L_2$-Boosting and derive the rate of convergence in low-dimensional settings, and  \cite{buhlmann2006sparse} show the consistency of $L_2$-Boosting in high-dimensional settings without stating the rate. \cite{SL_2016} derive rates for boosting and variants in high dimensions under a beta-min condition and 
 exact sparsity assumption. An extensive overview of the development of Boosting and its manifold applications is given in \cite{nr:buhlmann.hothorn:2007}.

The paper is structured as follows: In Section \ref{sec2}, we discuss the problem of estimating treatment effects in high-dimensional settings. In Section \ref{sec3}, we introduce $L_2$-Boosting and two variants. In Section \ref{main_results}, we present the formal results for the predictive performance of Boosting and for valid inference on (low-dimensional) treatment effects in a possibly high-dimensional setting. In Sections \ref{sec5} and \ref{sec6}, we provide a simulation study and an empirical application. Finally, we conclude in Section \ref{sec7}. Proofs and additional results of the simulation study are available in Appendices  \ref{app:proofs} and \ref{app:add_sim}.

\section{Estimation of Treatment Effects in High Dimensions}\label{sec2}
The goal is to estimate the treatment effect $\alpha$ of a treatment variable $D$ on an outcome variable $Y$ in a high-dimensional regression model, namely
\begin{equation}
\label{treateff}
Y = \gamma + \alpha D +X\beta + \varepsilon, \quad\mathbb{E}(\varepsilon|D,X)=0,
\end{equation}
where $\gamma$ denotes the intercept and $\varepsilon$ a statistical error term. The coefficient $\alpha$ of treatment variable $D$ is the treatment effect of interest. 
There are two reasons for including covariates $X=(X_1,\ldots,X_p)$ in model (\ref{treateff}) for the estimation of the treatment effect.
First, in randomized control trials the treatment variable is assigned randomly, and additional covariates improve the precision of the estimate of the average treatment effect. This argument has already been made in \cite{cox1958} and, more recently, \cite{lin2013agnostic}.
Second, in observational studies, covariates might be need to be included to establish unconfoundedness, in order to ensure that $\mathbb{E}(\varepsilon|D,X)=0$. This means that given the variables $X$, the treatment can be regarded as if it had been randomly assigned and there are no unobserved confounders, as described in  \cite{ImbensRubin:2015}.

A related question is that of which variables to include in model (\ref{treateff}) from a set of potential covariates.
In high-dimensional settings, when the number of covariates $p$ is larger than the sample size $n$, variable selection is inevitable because the least squares estimate is not well defined.
Even when $p$ is smaller than $n$ but the ratio $p/n\rightarrow c$ for some positive constant $c>0$, ordinary least squares estimates are unreliable and, again, variable selection is needed because including too many (noisy) covariates might overfit and disguise the true treatment effect. In the empirical study in Section \ref{sec6}, we analyze how the pre-merger overlap of bank branch networks in the US affects the post-merger stock returns of acquirer banks. The data set contains 442 observations and 153 potential instrumental variables, comprising a high-dimensional setting in which variable selection is needed in the first-stage regression.

In a naive approach to estimating the treatment effect in model (\ref{treateff}), one might first select the relevant covariates by using classical $t$-tests or modern machine learning methods, such as Lasso or $L_2$-Boosting, and then estimate the treatment effect by including only the selected variables and continue with standard inference methods.
However, this procedure, while often used in applied work, might fail to provide valid post-selection inference. This is because many modern methods like Lasso or Boosting yield consistent model selection only under very strong assumptions that are often unrealistic, particularly in applications in economics. 

In general, machine learning methods may miss key variables  that are moderately correlated with the outcome but highly correlated with the treatment variable. Although this does not hurt predictive performance, it introduces an omitted variable bias that leads to invalid post-selection inference.

We demonstrate this by a simple simulation study with a binary treatment variable and $100$ covariates.
The data generating process is given by
$$y_i=d_i\alpha_0+ x_i^\intercal \beta+\varepsilon_i\quad\text{and}\quad d_i=1_{\{0.2\cdot x_i^\intercal \gamma+\nu_i>0\}}$$
with
$\alpha_0=0$ for $i=1,\dots,100$. The high-dimensional coefficient vectors $\beta\in\mathbb{R}^{100}$ and $\gamma\in\mathbb{R}^{100}$ are sparse with $\|\beta\|_0=3$ and $\|\gamma\|_0=3$, respectively. The noise $\varepsilon_i$ is normally distributed with $\varepsilon_i \sim N(0,3)$ and $\nu_i$ following a standard logistic distribution. For the high-dimensional vector $x_i=(x_{i,1},\ldots,x_{i,100})^\intercal$, we set
\[x_i\sim \tiny{N\left (0,  \left[\begin{array}{ccccc}  1 & 0.8 & ...& ... & ...\\ 0.8 & 1 & 0.8 & ... & ... \\ 0 & 0.8 & 1 & 0.8 & ... \\ .... & ... & ... & ... & ... \\ ... & ... & ... & 0.8 & 1\\
\end{array}\right] \right)}.\]
The results for $10,000$ repetitions of the estimate $\hat \alpha_0$ are displayed in Figure \ref{naivevsdouble}.
The resulting distribution using the naive approach is highly biased, shows heavy tails and is not in line with a standard normal distribution (see Figure \ref{Naive}).

To deal with such a problem and provide valid post-selection inference, we apply the double selection approach, which is described in detail in Section \ref{controlsetting}  and implicitly creates an orthogonal moment condition. 
Figure \ref{DS} shows the empirical distribution of the estimates when employing the double selection methods introduced in \cite{BelloniChernozhukovHansen2011} and reviewed and extended in \cite{CCDDHN:2016}. The estimates are nearly unbiased and can be approximated by a normal distribution. The intuition of the double selection method is that it cures the omitted variable bias seen with machine learning methods by running an auxiliary regression/step. This is equivalent to using an orthogonal moment function. This property means that estimating the nuisance parameters with machine learning methods and plugging it into the moment function has no first order effect on the estimation of the target parameter, as long as the machine learning estimator does not converge too slowly. 

 

\begin{figure}
  \begin{subfigure}[b]{0.4\textwidth}
    \includegraphics[width=\textwidth]{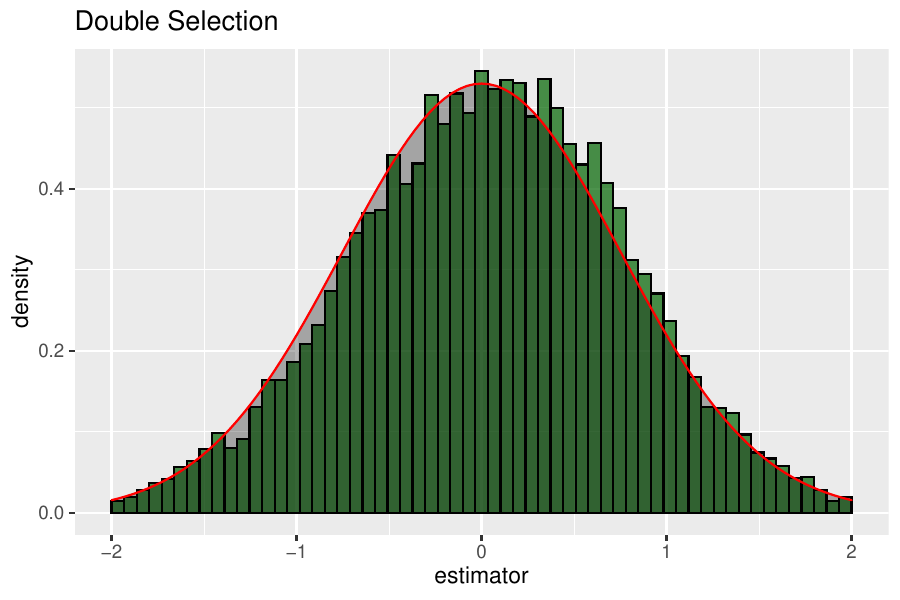}
    \caption{}
    \label{DS}
  \end{subfigure}
  \begin{subfigure}[b]{0.4\textwidth}
    \includegraphics[width=\textwidth]{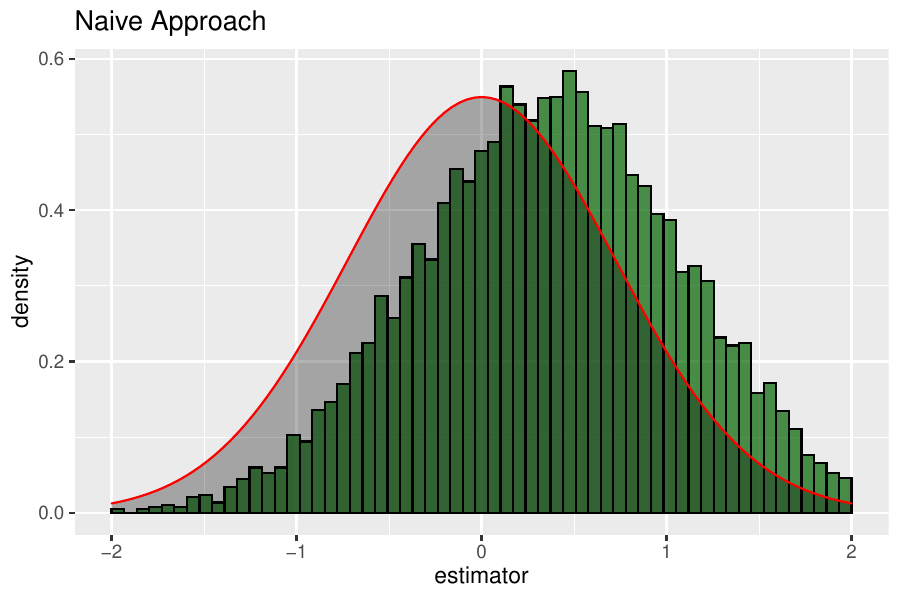}
    \caption{}
    \label{Naive}
  \end{subfigure}
	\caption{Histograms of the estimates $\hat \alpha$ of the treatment effect with the double selection method and the naive approach under a DGP with $\alpha_0=0$.}
	\label{naivevsdouble}
\end{figure}


\section{L\textsubscript{2}-Boosting}\label{sec3}
In this section, we describe the $L_2$-Boosting algorithm, namely the original Boosting algorithm for regression defined in \cite{nr:buhlmann.yu:2003} and two variants thereof (iterated post-$L_2$-Boosting and  orthogonal $L_2$-Boosting).
\subsection{L\textsubscript{2}-Boosting}
\noindent
To define the Boosting algorithm for linear models, we consider the following regression setting:
\begin{equation} y_i=x_i^\intercal \beta+\varepsilon_i, \quad i=1,\ldots,n, \end{equation}
where $x_i=(x_{i,1},\ldots,x_{i,p_n})^\intercal$ is a vector that consists of $p_n$ predictor variables. $\beta$ is a $p_n$-dimensional coefficient vector and $\varepsilon_i$ is a random, zero-mean error term with $\mathbb{E}[\varepsilon_i|x_i]=0$. We allow the dimension of the predictors $p_n$ to grow with the sample size $n$.
Also, the case $dim(\beta)=p_n\gg n$ is allowed.
In this setting, a so-called sparsity condition is unavoidable.
This means that there is a large set of potential variables, but the number of variables that have non-zero coefficients, denoted by $s$, is small compared to the sample size, i.e., $s<n$.
In our theoretical analysis, we weaken this assumption to approximate sparsity. In the rest of this  paper, we drop the dependence of $p_n$ on the sample size and denote it by $p$ so no confusion will arise.
Let $X$ denote the $n \times p$ design matrix where the single observations $x_i$ form the rows of $X$. Let
$X_j$ denote the $j$th column of the design matrix, and $x_{i,j}$ is the $j$th component of the vector $x_i$. For a set $\widetilde{T}\subset \{1,2,...,p\}$, we define $X_{\widetilde{T}}$ as the submatrix of $X$ with respect to column indices in $\widetilde{T}$. We assume a fixed design with $\max_{1\le j\le p}x_{i,j}\le C$ for all $i=1,\dots,n$ and $c\le\min_{1\le j\le p}\mathbb{E}_n[x_{i,j}^2]$ for absolute constants $0<c<C<\infty$. Without loss of generality, we consider standardized regressors, i.e., $\mathbb{E}_n[x_{i,j}]=0$ and $\mathbb{E}_n[x_{i,j}^2]=1$ for $j=1,\ldots,p$. Further assumptions will be imposed in the next sections.

The basic principle of $L_2$-Boosting is as follows: The criterion function that we want to  minimize is the sum of squared residuals, as is the case with ordinary least squares (OLS).
We initialize the estimator $\hat \beta$ to zero (strictly speaking, a $p$-dimensional vector consisting of zeros) and subsequently calculate the residuals, which in this case are equivalent to the observations in this first step.
Then, we conduct $p$ univariate regressions by  regressing the residuals (in the first round, the observations) on each of the $p$ regressors separately, resulting in $p$ univariate regressions.
After this, we select the variable  that explains most of the residuals and update the coordinate of our estimated coefficient vector in this direction. The above procedure is repeated until some stopping criterion is reached. 
The algorithm for $L_2$-Boosting with componentwise least squares is given below.

\begin{algorithm}[h]

	\caption{Standard $L_2$-Boosting ($\mathtt{BA}$)} 
	\label{alg1}
\begin{enumerate}
		\item Start/ initialization: $\beta^0 = 0$ ($p$-dimensional vector), $f^0=0$, set maximum number of iterations $m_{stop}$ and set iteration index $m$ to $0$.
    \item At the $(m+1)^{th}$ step, calculate the residuals $U_i^m=y_i - x_i^\intercal  \beta^m$.
    \item For each predictor variable $j=1,\ldots,p$, calculate the correlation with the residuals:
    \begin{equation*} \gamma^m_{j}:=\frac{\sum_{i=1}^n U_i^m x_{i,j}}{\sum_{i=1}^n x_{i,j}^2}=\frac{<U^m,X_j>_n}{\mathbb{E}_n[x_{i,j}^2]}.
    \end{equation*}
    Select the variable $j^m$ that is most correlated with the residuals, i.e., $$j^m:=\textrm{argmax}_{1\leq j\leq p}|corr(U^{m},X_{j})|.$$
    \item Update the estimator: $\beta^{m+1}:=\beta^{m}+\gamma^m e_{j^m}$, where $e_{j^m}$ is the $j^m$th index vector, $\gamma^m = \gamma^m_{j^m}$, 
     and
    $f^{m+1}:=f^{m}+\gamma^m X_{j^m}$.
    \item Increase $m$ by one. If $m<m_{stop}$, continue with (2); otherwise stop.
\end{enumerate}


\end{algorithm}

The version above and the orthogonal version, introduced later, are also known in deterministic settings as the pure greedy algorithm (PGA) and the orthogonal greedy algorithm (OGA), respectively.
Boosting is a gradient descent method.
In the $L_2$-case, the gradient of squared loss equals the residuals, and the residuals are iteratively fitted by a so-called base learner -- here, componentwise univariate regressions. Hence the gradient is approximated in each step  by a simple parametric model, namely a univariate linear model.
In the low-dimensional case, the estimator converges to the OLS solution.
In the high-dimensional case, overfitting can occur in the absence of the stopping criteria because the algorithms will ultimately build a model that explains all the variances in $y$.

The act of stopping is crucial for Boosting algorithms because stopping too late or never stopping  leads to overfitting, and therefore some kind of penalization is required. A suitable solution is to stop early, i.e., before overfitting takes place. Early stopping can be considered an unusual penalization/regularization scheme.
Similar to Lasso, early stopping might induce a bias through shrinkage.
A potential way to decrease the bias is to perform the post-$L_2$-Boosting approach  defined in the next section.

\subsection{Post- and Orthogonal L\textsubscript{2}-Boosting}\label{OGA}
Post-$L_2$-Boosting ($\mathtt{pBA}$) is a post-model selection estimator that applies ordinary least squares (OLS) to the model selected by  $L_2$-Boosting in the first step. To formally define this estimator, we make the following definitions:   $T_0:=supp(\beta)$ and $\hat{T}:=supp(\beta^{m})$, which are the support of the true model and the support of the model estimated by $L_2$-Boosting as described in Algorithm \ref{alg1} stopping at $m$. Further, $T^c$ denotes the complement of a set $T$ with regard to $\{1,\ldots,p\}$.

In the context of Lasso, OLS after model selection was analyzed in \cite{belloni:2013}.
Given the above definitions, the post-model selection estimator or post-$L_2$-Boosting estimator takes the form
\begin{equation}
\widetilde{\beta}= \arg\min\limits_{\beta \in \mathbb{R}^p} Q_n(\beta): \beta_j=0 \quad \mbox{for each} \quad j \in \hat{T}^c.
\end{equation}
where $Q_n(\beta)$ denotes the squared sum of residuals defined as $\sum_{i=1}^n (y_i - x_i^\intercal \beta)^2$.

In our theoretical analysis, we  consider a modified version of post-$L_2$-Boosting called iterated post-$L_2$-Boosting ($\mathtt{I}$-$\mathtt{pBA}$), which is given in Algorithm \ref{algo:iterated-pba} below. Instead of one final projection, a projection is conducted regularly, described by a sequence $l_m$, $m=0,1,...$

\begin{algorithm}[h]

	\caption{Iterated Post-$L_2$-Boosting ($\mathtt{I}$-$\mathtt{pBA}$)} 
	\label{algo:iterated-pba}
\begin{enumerate}
		\item Given a sequence $l_m\in \{0,1\}$, $m=0,1,...$. Start/ initialization: $\beta^0 = 0$ ($p$-dimensional vector), $f^0=0$, set maximum number of iterations $m_{stop}$ and set iteration index $m$ to $0$.
    \item At the $m^{th}$ step, calculate the residuals $U_i^m=y_i - x_i^\intercal \beta^m$, for $i=1,2,...,n$.
    \item For each predictor variable $j=1,\ldots,p$, calculate the correlation with the residuals:
    \begin{equation*} \gamma^m_{j}:=\frac{\sum_{i=1}^n U_i^m x_{i,j}}{\sum_{i=1}^n x_{i,j}^2}=\frac{<U^m,X_j>_n}{\mathbb{E}_n[x_{i,j}^2]}.
    \end{equation*}
    Select the variable $j^m$ that is the most correlated with the residuals, i.e., $$ j^m:=\textrm{argmax}_{1\leq j\leq p}|corr(U^{m},X_{j})|.$$
    \item If $l_m=0$, update the estimator according to gradient descent: $\beta^{m+1}:=\beta^{m}+\gamma^m_{j^m} e_{j^m}$, where $e_{j^m}$ is the $j^m$th index vector
     and
    $f^{m+1}:=f^{m}+\gamma^m_{j^m}X_{j^m}$.
    \item If $l_m=1$, perform post-$L_2$-Boosting:
   Define $\beta^{m+1}:=(X_{T^{m+1}}^\intercal X_{T_{m+1}})^{-1}X_{T_{m+1}}^\intercal Y$ with  $T^{m+1}:=\cup_{t=0}^m \{j^t\}$. 
    \item Increase $m$ by one. If $m<m_{stop}$, continue with (2); otherwise stop.
\end{enumerate}
\end{algorithm}

Another variant of the Boosting algorithm is orthogonal Boosting ($\mathtt{oBA}$), described in Algorithm \ref{alg3}, or the OGA in its deterministic version.
Only the updating step is changed in comparison to Algorithm \ref{alg1}. In every step, an orthogonal projection of the response variable is carried out on all the variables that have been selected up to that point.
The advantage of this method is that any variable is selected at the most once in this procedure, whereas in the previous version the same variable might be selected at different steps making the analysis much more complicated.
More formally, the method is described in Algorithm \ref{alg3} by modifying step (4) in Algorithm \ref{alg1}.

\begin{algorithm}[H]
	\caption{Orthogonal  $L_2$-Boosting ($\mathtt{oBA}$)} 
	\label{alg3}
\begin{enumerate}
		\item Start/ initialization: $\beta^0 = 0$ ($p$-dimensional vector), $f^0=0$, set maximum number of iterations $m_{stop}$ and set iteration index $m$ to $0$.
    \item At the $(m+1)^{th}$ step, calculate the residuals $U_i^m=y_i - x_i^\intercal \beta^m$, for $i=1,2,...,n$.
    \item For each predictor variable $j=1,\ldots,p$, calculate the correlation with the residuals:
    \begin{equation*} \gamma^m_{j}:=\frac{\sum_{i=1}^n U_i^m x_{i,j}}{\sum_{i=1}^n x_{i,j}^2}=\frac{<U^m,X_j>_n}{\mathbb{E}_n[x_{i,j}^2]}.
    \end{equation*}
    Select the variable $j^m$ that is most correlated with the residuals, i.e., $$j^m:=\textrm{argmax}_{1\leq j\leq p}|corr(U^{m},X_{j})|.$$
    \item Update the estimator: $\beta^{m+1}_o:=  ({X_{T^{m+1}_o}}^\intercal {X_{T^{m+1}_o}})^{-1} {X_{T^{m+1}_o}}^\intercal Y$, with $T_o^{m+1}:=\cup_{t=0}^m \{j^t\}$ 
     and
    $f^{m+1}:=X_{T^{m+1}_o} \beta_o^{m+1}$.
    \item Increase $m$ by one. If $m<m_{stop}$, continue with (2); otherwise stop.
\end{enumerate}
\end{algorithm}
As mentioned, early stopping is crucial for Boosting to avoid overfitting. Therefore, in the last step of Algorithms \ref{alg1}, \ref{algo:iterated-pba} or \ref{alg3}, an early stopping criteria is needed. The standard approaches for determining the ``optimal'' stopping time are cross-validation and a corrected Akaike information criterion (\cite{nr:buhlmann:2006}). Although both lack a theoretical foundation in a high-dimensional setting, they are applied by practitioners and often give competitive results. We provide a data-driven early stopping criterion later in Section \ref{main_results}, first developed in \cite{SL_2016}, with beta-min condition and exact sparsity needed for guaranteeing theoretical performance. In this paper, we prove the nearly-Lasso convergence rate of such early stopping criterion without the strong beta-min and exact sparsity conditions. We demonstrate the good performance of such early stopping criterion in our simulation study.

\subsection{Computational Perspective and Comparison to Lasso}

\citet{friedman2001elements} were the first to point out a strong relationship between $L_2$-Boosting with componentwise linear least squares and Lasso. Although these methods are generally not equivalent, \citet{efron2004least} proved an approximate equivalence between $L_2$-Boosting and Lasso and confirmed that $L_2$-Boosting and Lasso are closely related. Compared to Lasso, Boosting uses an unusual ``penalization scheme" by early stopping. $L_2$-Boosting can therefore be interpreted as an approximate and implicit regularized optimization, whereas Lasso directly solves a complex penalized optimization problem.

In this paper, we show that iterated  post-$L_2$-Boosting and orthogonal $L_2$-Boosting achieve nearly the same rate of convergence as Lasso in a high-dimensional linear regression model.  This aligns with the work of \citet{nr:buhlmann.hothorn:2007} who did not find Lasso to be superior overall to $L_2$-Boosting (or vice versa) in terms of empirical performance for prediction. More recently, \cite{hepp2016approaches} also performed an extensive simulation study comparing $L_2$-Boosting and Lasso. Their results suggest that in high-dimensional settings when $p>n$, Boosting has a slight advantage in terms of prediction accuracy, whereas Lasso leads, on average, to sparser models. They also point out that the amount of regularization imposed on the models clearly determines the quality of the estimation methods.

Next, we extend these results and perform simulation studies that compare $L_2$-Boosting and Lasso within advanced inferential procedures in high-dimensional linear regression models.
A simulation of this nature is lacking in the literature and in general little is known about the empirical performance and computation time of machine learning methods in causal/inferential settings (see, for example,  \cite{knaus2020double}).
As discussed, $L_2$-Boosting solves univariate regressions and is therefore computationally efficient and feasible in high dimensions, having a linear complexity in the number of predictor variables $p$ (cf. \cite{buhlmann2006sparse}, \cite{mayr2014evolution}). The computational complexity of $L_2$-Boosting, including $\mathtt{I}$-$\mathtt{pBA}$ and $\mathtt{oBA}$, is $O(npm_1)$, where $m_1=O(s\log(n))$ denotes the number of steps. Lasso has a computational complexity of $O(npm_2)$, where $m_2=O(\min\{n,p\})$ (cf. \citet{efron2004least}). This leads to a computational superiority of $L_2$-Boosting over Lasso in high-dimensional settings if $s\log(n)<< n$. This phenomenon has been observed in \citet{nr:buhlmann.hothorn:2007} who compare the computing time of $L_2$-Boosting to that of Lasso in high-dimensional regressions. Boosting is employed in practice in cases where explicitly solving regularized optimization problems is not practical. This  usually happens in very high-dimensional settings when $p>>n$, see, e.g., \citet{efron2004least}. In our simulation studies in Section \ref{sec5}, we confirm the computational advantage of Boosting over Lasso by comparing the CPU time in high-dimensional (causal) inference settings. Such advantage can be substantial when $n$ and $p$ become large.

Moreover, in settings where the number of observations $n$ is very large (\textquotedblleft big data\textquotedblright), an advantage of $L_2$-Boosting is that the estimation can be easily parallelized and distributed to different cores (e.g. CPUs) or machines. It is a gradient based method. In each step, $p$ univariate regressions are calculated  and are independent of each other and hence can be computed in a  distributed fashion. This allows for efficient implementation and the application to big data.

\section{Main Results}\label{main_results}

In this section, we provide the theoretical results of our paper, including  novel results  on the convergence rate of iterated post-$L_2$-Boosting ($\mathtt{I}$-$\mathtt{pBA}$) and orthogonal $L_2$-Boosting ($\mathtt{oBA}$) in subsection \ref{subsec:boosting-result}. Note that, compared to existing literature, the results rely only on approximate sparsity, and no beta-min condition is required. Based on these results, we present, in subsection \ref{boosting_inf}, inference results for treatment effects in a setting with high-dimensional controls and in an instrumental variable model.

\subsection{New Results on Boosting}\label{subsec:boosting-result}


Consider the following high-dimensional approximate sparse linear regression model
$$y_i=x_i^\intercal  \beta+r_{i}+\varepsilon_i, \quad i=1,\ldots,n,$$
where $r_i$ denotes the approximation error of the sparse model and $\varepsilon_i$ is i.i.d. across $i$ with variance $\sigma^2$.\footnote{In this subsection we consider homoscedastic error terms. Our proofs can be easily generalized to heteroscedastic errors.} We assume the following conditions to establish new rates of convergence for $L_2$-Boosting:
\begin{assumption}[Approximate Sparsity]\label{assump:AS}\
\begin{enumerate}
    \item[(i)] $\|\beta\|_0\le s$;
    
    \item[(ii)]  $\mathbb{E}_n[r_i^2]\leq \frac{C_r s\log(2p/\alpha)}{n}$ for some generic constant $C_r>0$ and $\alpha>0$ defined in Assumption \ref{assump:tail};
    
    \item[(iii)]  There exists a constant $K>1$ such that $\|X\beta\|_n^2\leq n^{K-1}$.
\end{enumerate}
\end{assumption}
\noindent
\begin{comment}
Assumption \ref{assump:AS} (i) and (ii) are standard sparsity conditions in high-dimensional linear regression literature, e.g., \cite{BelloniChernozhukovHansen2011}. Assumption \ref{assump:AS} (iii) is a very mild condition on the norm of the regression coefficient $\beta$.
\end{comment}

For the next assumption, we first define the smallest and largest restricted eigenvalues of a matrix:
\begin{definition}
The smallest and largest restricted eigenvalues of matrix $M$ with size $s'$ is defined as:
\begin{gather*}
    \phi_s(s',M) := \min_{W\in \Sigma(s',M)}\phi_s(W),\\
    \phi_l(s',M) := \max_{W\in \Sigma(s',M)} \phi_l(W),
\end{gather*}
where $\phi_s(W)$ and $\phi_l(W)$ denote the smallest and largest eigenvalues of matrix $W$ and $\Sigma(s',M)$ is the set of all $s'$-dimensional square diagonal-submatrices of $M$.
\end{definition}

\begin{assumption}[SE]\label{assump:SE}
Consider the Gram matrix $\Sigma = \mathbb{E}_n[x_i x_i^\intercal ]$. Assume that all the elements on the diagonal of $\Sigma$ are equal to one. We assume that there exist positive constants $c_\phi$ and $C_\phi$ such that 
$$0< c_{\phi}\leq \phi_s(s',\Sigma)\leq \phi_l(s',\Sigma)\leq C_{\phi}<\infty$$ 
holds for $s'\leq M_n$, where $M_n$ is a sequence such that $M_n\rightarrow \infty$ slowly along with $n$ and $M_n\geq  C_M s\log(n)$, where $C_M$ is a large enough generic constant.
\end{assumption}
\begin{comment}
This condition is standard for Lasso and other machine learning methods in a high-dimensional setting. It allows for a more general behavior requiring only that the sparse eigenvalues of the Gram matrix are bounded from above and away from zero. A standard assumption in traditional econometric research is to assume that the (population) Gram matrix has eigenvalues bounded from above and away from zero. The condition is fulfilled for many relevant designs. For various examples, we refer to \cite{belloni:2013}. An extensive overview of different conditions on matrices and how they are related is given in \cite{van2009conditions}.
\end{comment}

\noindent
Next, we require the following assumption on the tails of $X$ and $\varepsilon$:
\begin{assumption}[Tails]\label{assump:tail}
For any $\alpha$ small enough, with probability $\geq 1-\alpha$, we have:
$$\max_{1\leq j\leq p}|<X_{j},\varepsilon>_n|\leq 2\sigma \sqrt{\frac{\log (2p/\alpha)}{n}}:=\lambda_n,$$
where $<X_{j},\varepsilon>_n$ denotes the empirical inner product of two vectors with size $n\times 1$. In addition, we require that {$\hat\sigma^2:=\frac{1}{n}\sum_{i=1}^n\varepsilon_i^2 $} satisfies that 
\begin{equation*}
    |\hat\sigma^2-\sigma^2|\leq \eta \sigma^2,
\end{equation*}
for some small enough fixed constant $\eta\in (0,\frac{1}{2})$.
\end{assumption}
\noindent
\begin{comment}
Assumption \ref{assump:tail} holds, for example, if the error terms are i.i.d. normally distributed random variables. This in turn can be generalized/weakened to cases of non-normality by self-normalized random vector theory \citep{delapena} or the approach introduced in \cite{CCK:2014}.
\end{comment}

For the theoretical analysis, first, we consider the (iterated) Post-$L_2$-Boosting algorithm that is defined in Algorithm \ref{algo:iterated-pba}.
\begin{theorem}[Convergence of iterated-pBA ($\mathtt{I}$-$\mathtt{pBA}$)]\label{theo:PBA}
Consider Algorithm \ref{algo:iterated-pba}. 
Suppose that the sequence $l_m$ satisfies the condition that $l_m=1$ if and only if $m=k M$ for a positive integer $M\leq K_M\sqrt{s}$ and $k=1,2,\dots$, where $K_M$ is a fixed positive constant. Suppose the Assumptions \ref{assump:AS}-\ref{assump:tail} hold with $M_n\geq C_M s\log(n)$ for some large enough constant $C_M>0$. Let $m^*$ be a constant such that $m^*:=kM+1$ with $k=\left\lceil \frac{C_{m^*}s\log(n)}{M}\right\rceil$ such that 
    \begin{equation*}
    C_M-1\geq C_{m^*}\geq 1 + \left\lceil\frac{K}{\min\left(\frac{1}{2}, \frac{c_\phi^2}{16 C_\phi K_M^2},\frac{c_\phi^2}{64 C_\phi }\right)}\right\rceil.
\end{equation*}
Suppose that the growth condition $\frac{s\log(n)\log(2p/\alpha)}{n}\rightarrow 0$ holds. Then, for $n$ large enough, with probability $\geq 1-\alpha$, the iterated post-$L_2$-Boosting algorithm satisfies:
\begin{equation}\label{eq1pba}
    \|X(\beta^{m^*}-\beta)\|_n^2 \precsim \frac{s\log(p\vee n)\log n}{n}
\end{equation}
and 
\begin{equation}\label{eq2pba}
    \|\beta^{m^*}-\beta\|^2 \precsim \frac{s\log(p\vee n)\log n}{n}. 
\end{equation}
\end{theorem}

\begin{comment}
Theorem \ref{theo:PBA} shows that the iterated post-$L_2$-Boosting algorithm nearly achieves the Lasso rate of convergence $s\log(p\vee n)/n$. This algorithm implements post-$L_2$-Boosting iteratively and therefore, it shares the computational advance of the standard $\mathtt{pBA}$. The additional $\log(n)$ factor in the convergence rate result is inevitable due the nature of iterative algorithms like $\mathtt{I}$-$\mathtt{pBA}$ in the absence of beta-min conditions: in the best case, the $\mathtt{I}$-$\mathtt{pBA}$ algorithm and others alike will have $(1-\frac{\kappa}{s})^m$ convergence on $\|X(\beta^m-\beta)\|^2_n$ at $m^{th}$ step for some fixed constant $\kappa>0$. To accommodate that $s$ can  grow with $n$ and there can exist small coefficients in $\beta$, the number of iterations must be at least of order $O(s\log(n))$ in order to have either (1) the variables with small coefficients are already selected, or (2) the unselected variables' corresponding  coefficients must be small enough. The additional $\log(n)$ term can be removed if certain beta-min conditions are imposed.
It's not clear to us and in the literature if the standard $\mathtt{pBA}$ algorithm achieves the same rate of convergence under these mild assumptions and therefore, more restrictive growth conditions are required for $\mathtt{pBA}$ unless strong conditions such as the beta-min condition are employed.
\end{comment}

\begin{corollary}[$\mathtt{I}$-$\mathtt{pBA}$ Stopping Criterion]\label{coro:pba}
Assume that all assumptions in Theorem \ref{theo:PBA} hold. Consider the stopping criteria such that the Algorithm \ref{algo:iterated-pba} stops at the first $m$ such that:
\begin{equation*}
    \frac{\|U^{m+1}\|_n^2}{\|U^{m}\|_n^2}\geq 1- 4 \zeta_0 \frac{\log(2p/\alpha)}{n},
\end{equation*} where $\zeta_0$ is a fixed positive constant. Denote such $m$ as $m^*_A$. Suppose $M_n$ is large enough such that $M_n>\bar{M}$, where $\bar{M}$ is a generic  constant defined in \eqref{eq:def-C-prime} in the Appendix. 
Then, with probability $\geq 1-\alpha$, we have that:
$m^*_A\leq \bar{M}s\log(n)$. In addition, \eqref{eq1pba} and \eqref{eq2pba} hold when replacing $m^*$ with $m^*_A$.
\end{corollary}

\begin{comment}
Corollary \ref{coro:pba} is a natural extension of Theorem \ref{theo:PBA}. It provides a practical stopping criteria that achieves the same rate as stopping at $m^*$. In practice, $m^*_A$ could be much smaller than $m^*$. $\zeta_0$ is a tuning parameter that is similar to the penalty of Lasso. When $\zeta_0$ becomes larger, the algorithm stops earlier, leading to smaller number of iterations in the algorithm. In practice, we recommend to pick $\zeta_0=1$ as a rule of thumb, which is used in the simulation.
\end{comment}

The next theorem provides the same rates of convergence for the orthogonal $L_2$-Boosting ($\mathtt{oBA}$) algorithm that has also been defined in Section \ref{sec3}.

\begin{theorem}[Convergence of orthogonal $L_2$-Boosting  ($\mathtt{oBA}$)]\label{theo:OBA}
Suppose the Assumptions \ref{assump:AS}-\ref{assump:tail} hold with $M_n\geq C_M s \log(n)$ for some large enough constant $C_M>0$. Let $m^*$ be a constant such that $m^*:=\lceil C_{m^*}s\log(n)\rceil+1$ with 
    \begin{equation*}
    C_M -1 \geq C_{m^*}\geq 1 + \left\lceil\frac{K}{\min\left(\frac{1}{2}, \frac{c_\phi^2}{64 C_\phi }\right)}\right\rceil.
\end{equation*} Suppose that the growth condition $\frac{s\log(n)\log(2p/\alpha)}{n}\rightarrow 0$ holds. Then, for $n$ large enough,  with probability $\geq 1-\alpha$, we have:
\begin{equation}\label{eq1oba}
    \|X(\beta_o^{m^*}-\beta)\|_n^2 \precsim \frac{s\log(p\vee n)\log n}{n}
\end{equation}
and 
\begin{equation}\label{eq2oba}
    \|\beta_o^{m^*}-\beta\|^2 \precsim \frac{s\log(p\vee n)\log n}{n}. 
\end{equation}
\end{theorem}
\noindent
The proofs of Theorem \ref{theo:PBA} and Theorem \ref{theo:OBA} and the corresponding corollaries are given in Appendix \ref{app:proofs}.

\begin{comment}
Again, it is worth noting that neither exact sparsity nor the beta-min condition are required to show the results in Theorem \ref{theo:OBA}. Therefore, this theorem is a strong generalization of previous results in the literature.
\end{comment}
Similar to Corollary \ref{coro:pba}, the following corollary is a natural extension of Theorem \ref{theo:OBA}.
\begin{corollary}[$\mathtt{oBA}$ Stopping Criterion]\label{coro:oba}
Assume that all assumptions in Theorem \ref{theo:OBA} hold. Consider the stopping criteria such that the Algorithm \ref{alg3} stops at the first $m$ such that:
\begin{equation*}
    \frac{\|U^{m+1}\|_n^2}{\|U^{m}\|_n^2}\geq 1- 4\zeta_0 \frac{\log(2p/\alpha)}{n},
\end{equation*} where $\zeta_0$ is a fixed positive constant. Denote such $m$ as $m^*_A$. Suppose $M_n$ is large enough such that $M_n>\bar{M}$, where $\bar{M}$ is a constant defined in \eqref{eq:def-C-prime}. 
Then, with probability $\geq 1-\alpha$, we have that:
$m^*_A\leq \bar{M}s\log(n)$. In addition, \eqref{eq1oba} and \eqref{eq2oba} hold when replacing $m^*$ with $m^*_A$.
\end{corollary}

\subsection{Inference for Treatment Effects}\label{boosting_inf}
In this section, we  present  inference  results  for  treatment  effects using $L_2$-Boosting. 
We first consider the case in which a researcher is interested in estimating the treatment effect $\alpha_0$ of a treatment variable $d$ after selecting among many control variables as discussed in Section \ref{sec2}. Then, we provide new results for inference in an instrumental variable model with potentially many instruments when post- or orthogonal $L_2$-Boosting
is used for the variable selection. Essential ingredients for valid inference in these settings are the rates of convergence of $L_2$-Boosting that have been provided in the previous subsection.

\subsubsection{Inference Results after Selecting among Many Controls}\label{controlsetting}

We consider the model
\begin{eqnarray}
\label{Controls1}
y_i &=& d_i \alpha_0 +  x_i^\intercal \beta + r_{1,i} + \xi_i,\quad\mathbb{E}[\xi_i|d_i, x_i]=0\\
\label{Controls2}
d_i &=& x_i^\intercal \gamma + r_{2,i} + \nu_i,\quad\mathbb{E}[\nu_i|x_i]=0,
\end{eqnarray}
where $r_{1,i}$ and $r_{2,i}$ are approximation errors, $i=1,\dots,n$. The estimation method consists of the following three steps, the first two of which involve model selection with $L_2$-Boosting:
\begin{enumerate}
	\item Run an iterated  post- or orthogonal $L_2$-Boosting regression of $d_i$ on $x_i$.
The set of variables that is selected will be denoted by $\hat{I}_1$.
	\item Run an iterated  post- or orthogonal $L_2$-Boosting regression of $y_i$ on $x_i$.
The set of variables that is selected will be denoted by $\hat{I}_2$.
	\item Run an OLS regression of $y_i$ on the treatment variable $d_i$ and the set of variables selected in the first two steps.
This set can be augmented by additional variables.
\end{enumerate}
The estimated regression coefficient of the treatment variable in step (3) above is the double selection estimator $\check{\alpha}$. To analyze this estimator based on $L_2$-Boosting, we impose the following assumptions:
\begin{assumptionA}\label{A1} Let $c_\phi, C_\phi$ and $C_r$ be absolute positive constants. The following assumptions hold:
\begin{enumerate}
\item[(i)] We observe $w_i=(y_i,d_i,x_i)$ $i.i.d.$ \text{on}  $(\Omega, \mathcal{F}, P)$ obeying (\ref{Controls1}) and (\ref{Controls2}) for $i=1,\ldots,n$.
\item[(ii)] The model is approximately sparse: $\vert\vert \beta\vert\vert_0 \leq s$ and $\vert\vert \gamma\vert\vert_0 \leq s$, and the approximation errors $r_{1,i}$ and $r_{2,i}$ satisfy: 
$\mathbb{E}_n[r_{1,i}^2]\leq \frac{C_r s\log(2p/\alpha)}{n}$ and $\mathbb{E}_n[r_{2,i}^2]\leq \frac{C_r s\log(2p/\alpha)}{n}$ for some fixed $\alpha>0$. Furthermore, there exists a constant $K>1$ such that $\|X\beta\|_n^2\leq n^{K-1}$ and $\|X\gamma\|_n^2\leq n^{K-1}$.
\end{enumerate}
\end{assumptionA}
\begin{assumptionA}\label{A2} 
We assume that $0 <c_\phi\leq \phi_{s}(s',\mathbb{E}_n[x_i x_i^\intercal ])\leq \phi_{l}(s',\mathbb{E}_n[x_i x_i^\intercal ])\leq C_\phi < \infty$ for any $s'\leq M_n$,
where $M_n$ is a sequence such that $M_n\rightarrow \infty$ slowly along with $n$, and $M_n\geq C_M s\log(n)$, for some large enough constant $C_M$.
\end{assumptionA}


\begin{assumptionA}\label{A3}
With probability greater or equal $1-\alpha$, we have $$\max_{1\leq j \leq p} |<x_{ij},\varepsilon_i>_n| \leq 2 \widetilde\sigma \sqrt{\frac{\log (2p/\alpha)}{n}}$$ for $\varepsilon_i = \xi_i$ and $\varepsilon_i = \nu_i$ with $\widetilde\sigma:=\sqrt{Var(\varepsilon_i)}$ and $|\hat{\widetilde\sigma}^2-\widetilde\sigma^2|\leq \eta \widetilde\sigma^2$ for fixed $\eta\in (0,\frac{1}{2})$.
\end{assumptionA}

\begin{assumptionA}\label{A4} There exists an absolute constant $4<q<\infty$ such that
\item[(i)] $\mathbb{E}\left[ |\xi|^q + |\nu|^q \right] \leq C$. Further, it holds that $c\leq \mathbb{E}[\xi^2|d_i, x_i] \leq C$ a.s. and $c\leq \mathbb{E}[\nu^2|x_i]\leq C$ for all $i=1,\dots,n$, where $c$ and $C$ are some generic positive constants.
\item[(ii)] It holds that 
\begin{enumerate}
\item[(a)] $\frac{s^2 \log^2(p \vee n)\log(n)^2}{n} \rightarrow 0$, and
\item[(b)] $n^{2/q}\frac{s^2 \log(p \vee n)\log(n)^2}{n} \rightarrow 0$.
\end{enumerate}
\end{assumptionA}

Assumptions A.\ref{A1}--A.\ref{A3} allow us to apply Theorem \ref{theo:PBA} and Theorem \ref{theo:OBA} to ensure sufficiently fast estimation rates for $L_2$-Boosting in models (\ref{Controls1}) and (\ref{Controls2}). Assumption A.\ref{A4} (i) imposes weak technical conditions on the moments of the error terms to allow for valid inference. Assumption A.\ref{A4} (ii) restricts the growth of the number of parameters. Moreover, we assume that, in the $L_2$-Boosting algorithms, early stopping takes place.
This ensures that the conditions for the stopping time $m^*$ in Theorem \ref{theo:PBA} and Theorem \ref{theo:OBA} are satisfied. With these assumptions, we can now formulate our first main theorem regarding inference.

\begin{theorem}
\label{ControlTheorem}

Let $\{P_n\}$ be a sequence of data generating processes for which Assumptions A.\ref{A1}-A.\ref{A4} hold for $P=P_n$ and each $n$.
Then, the double-selection estimator based on iterated post-$L_2$-Boosting/orthogonal $L_2$-Boosting $\check{\alpha}$ satisfies
\begin{equation}
\hat{\sigma}_n^{-1} \sqrt{n}(\check{\alpha} - \alpha_0) \rightarrow_D N(0,1)
\label{eq:}
\end{equation} 
with $$\hat{\sigma}^2_n = [\mathbb{E}_n \hat{\nu}_i^2]^{-1}  \mathbb{E}_n [\hat{\nu}_i^2 \hat{\xi}_i^2] [\mathbb{E}_n \hat{\nu}_i^2]^{-1}$$
 for $\hat{\xi_i}:=(y_i - d_i \check{\alpha} - x_i^\intercal \hat{\beta})(n/(n-\hat{s}-1))^{1/2}$ and $\hat\nu_i:=d_i - x_i^\intercal  \hat{\gamma}, i=1,\ldots,n$, where $\hat{\beta}$ 
denotes the post-double selection estimator and $\hat{s}=||\hat{T}||_0$.
\end{theorem}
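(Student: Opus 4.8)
The plan is to run the now-standard Neyman-orthogonality / post-double-selection argument of \cite{BelloniChernozhukovHansen2011} and \cite{CCDDHN:2016}, but feeding in the sparsity and rate guarantees for post- and orthogonal $L_2$-Boosting from \cite{SL_2016} wherever that literature invokes Lasso bounds. The estimator $\check\alpha$ is the OLS coefficient of $d_i$ in the regression of $y_i$ on $d_i$ and the controls $x_{\hat I}$, $\hat I=\hat I_1\cup\hat I_2$ (possibly amended); by Frisch--Waugh this is the root of the empirical orthogonal moment $\mathbb{E}_n[\psi(w_i;\check\alpha,\hat\eta)]=0$ with score $\psi(w;\alpha,\eta)=(y-d\alpha-x'\beta)(d-x'\gamma)$ and nuisance $\eta=(\beta,\gamma)$ plugged in at the post-double-selection value $\hat\eta=(\hat\beta,\hat\gamma)$. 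The first step is to record the two defining facts: $\bar E[\psi(w_i;\alpha_0,\eta_0)]=0$ and the Gateaux derivative of $\eta\mapsto\bar E[\psi(w_i;\alpha_0,\eta)]$ vanishes at $\eta_0$, both immediate from $\bar E[\xi_i\mid d_i,x_i]=0$ and $\bar E[\nu_i\mid x_i]=0$ under A.1. This gives the representation
\[ \sqrt n(\check\alpha-\alpha_0) = -\hat J_n^{-1}\bigl\{\,\mathbb{G}_n[\nu_i\xi_i] + R_{1n} + R_{2n}\,\bigr\},\qquad \hat J_n=\mathbb{E}_n[(d_i-x_i'\hat\gamma)d_i], \]
where $\mathbb{G}_n=\sqrt n(\mathbb{E}_n-\bar E)$, $R_{1n}$ collects the terms linear in $\hat\eta-\eta_0$, and $R_{2n}$ collects the quadratic bias and the empirical-process remainder.

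\textbf{Boosting inputs and bias control (the heart).} I would invoke \cite{SL_2016}: under A.2--A.5 and the stated ``stop when the improvement in fit falls below a threshold'' rule, post-/orthogonal $L_2$-Boosting applied to \ref{Controls2} and to the reduced form for $y_i$ produces estimators with $\|\hat\gamma\|_0\vee\|\hat\beta\|_0\lesssim_P s$ and prediction-norm rate $\|x_i'(\hat\gamma-\gamma)\|_{2,n}\vee\|x_i'(\hat\beta-\beta)\|_{2,n}\lesssim_P\sqrt{s\log(p\vee n)/n}$, with $\ell_1$-rate $\lesssim_P\sqrt{s^2\log(p\vee n)/n}$; by the sparse-eigenvalue bounds in A.2 the double-selection refit on $x_{\hat I}$ inherits these rates and $\hat s=|\hat I|\lesssim_P s$. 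Then $R_{2n}$ splits into (a) the product bias $\sqrt n\,\mathbb{E}_n[x_i'(\hat\gamma-\gamma)\,x_i'(\hat\beta-\beta)]\le\sqrt n\,\phi_l(\,\cdot\,)\|x_i'(\hat\gamma-\gamma)\|_{2,n}\|x_i'(\hat\beta-\beta)\|_{2,n}\lesssim_P\sqrt n\cdot s\log(p\vee n)/n=o_P(1)$ by A.1(iii), and (b) an empirical-process term $\mathbb{G}_n[(\hat\eta-\eta_0)\text{-linear expressions}]$ that is $o_P(1)$ via a maximal inequality over the random sparse support combined with the moderate-deviation controls A.3(iv)--A.4. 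The structural point, and the role of \emph{double} selection, is that because $\hat I_1$ captures the controls predictive of $d_i$ and $\hat I_2$ those predictive of $y_i$, the surviving piece of $R_{1n}$ is of omitted-variable type supported on $\hat I^C$; the $\beta$-min condition A.5 guarantees all strong signals are selected, so the projections of $\bar E[\xi_i x_i]$ and $\bar E[\nu_i x_i]$ onto $x_{\hat I^C}$ are negligible and $R_{1n}=o_P(1)$ as well.

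\textbf{CLT, denominator, and feasible variance.} With $R_{1n}+R_{2n}=o_P(1)$, I would show $\hat J_n\to_P\bar E[\nu_i^2]$ using consistency of $\hat\gamma$, $\bar E[\nu_i x_i']\gamma$-type bounds, and A.3, and apply a Lindeberg/Lyapunov CLT for the i.n.i.d.\ triangular array $\{\nu_i\xi_i\}$, whose Lyapunov condition follows from $\bar E[|\nu_i|^q+|\xi_i|^q]\le C$, $q>4$, in A.1(iv)/A.3(i). This yields $\sigma_n^{-1}\sqrt n(\check\alpha-\alpha_0)\to_D N(0,1)$ with $\sigma_n^2=(\bar E\nu_i^2)^{-1}\bar E[\nu_i^2\xi_i^2](\bar E\nu_i^2)^{-1}$. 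For the feasible statement I would expand $\mathbb{E}_n[\hat\nu_i^2]$ and $\mathbb{E}_n[\hat\nu_i^2\hat\xi_i^2]$ around $\nu_i,\xi_i$, bound the cross terms by Cauchy--Schwarz using the rates above and the moment bounds in A.3(ii), and note $n/(n-\hat s-1)\to_P 1$ since $\hat s=o_P(n)$; hence $\hat\sigma_n^2/\sigma_n^2\to_P 1$ and the conclusion follows by Slutsky. Uniformity over $P_n\in\mathbb{P}_n$ is obtained throughout by keeping every bound expressed only in terms of the constants appearing in A.1--A.5 and in $n$, never in the particular $P_n$.

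\textbf{Main obstacle.} The delicate part is the bias control in the second paragraph: unlike Lasso, $L_2$-Boosting does not optimize a convex penalized objective, so the crucial facts $\hat s\lesssim_P s$ and the prediction-norm rate must be imported verbatim from \cite{SL_2016} and their data-driven stopping rule must be shown compatible with A.4--A.5 (in particular, that $\beta$-min ensures all relevant controls enter $\hat I$, so the omitted-variable piece of $R_{1n}$ is genuinely second order). The accompanying uniform maximal inequality for the empirical-process remainder over a data-dependent sparse support is routine given A.3(iv) and A.4 but is the technical core that must be written carefully.
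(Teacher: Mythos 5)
Your proposal is correct and follows essentially the same route as the paper: both reduce the problem to the post-double-selection theory of Belloni, Chernozhukov and Hansen, with the sparsity and prediction-norm rates for post-/orthogonal $L_2$-Boosting from \cite{SL_2016} substituted wherever that theory uses Lasso performance bounds. The only difference is one of packaging --- the paper simply verifies the high-level conditions HLMS(P), ASTE(P), SM(P) and SE(P) from A.1--A.5 together with \cite{SL_2016} and then invokes Theorem 2 of Belloni et al. (2014) as a black box, whereas you unfold the orthogonal-moment decomposition, remainder bounds, CLT and variance-consistency steps that lie behind that cited theorem.
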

The proof of Theorem \ref{ControlTheorem} is given in Appendix \ref{app:proofs}. This result can be used to conduct valid inference on the regression coefficient $\alpha_0$. The construction of uniformly valid confidence intervals is given in the following corollary.
\begin{corollary}\label{coro1}
Let $\mathbf{P}_n$ the collection of all data generating processes $P$ for which the assumptions of Theorem \ref{ControlTheorem} hold for given $n$. Further, let $\mathbf{P}$ be the collection of data-generating processes for which the conditions above hold for all $n\ge n_0$, and define $c(1-\xi):=\Phi^{-1}(1-\xi/2)$. The confidence regions based upon $\check{\alpha}$ and $\hat{\sigma}_n$ are uniformly valid in $P \in \mathbf{P}$:
\[ \lim_{n \rightarrow \infty} \sup_{P \in \mathbf{P}} |P(\alpha_0 \in [\check{\alpha} \pm c(1-\xi)\hat{\sigma}_n/\sqrt{n}]) - (1-\xi)|=0.\]
\end{corollary}

\subsubsection{Inference on Treatment Effects in an Instrumental Variable Model}\label{IVsetting}
In this section, we consider the following instrumental variable model with potentially very many instruments
\begin{align*}
y_i &= d_i \alpha_0 + x_i^\intercal \beta + \varepsilon_i,\quad \mathbb{E}[\varepsilon_i|z_i]=0\\
d_i &= z_i^\intercal \gamma+r_i+\nu_i
\end{align*}
with instrument function $D_i=D(z_i)=\mathbb{E}[d_i|z_i]= z_i^\intercal \gamma+r_i$. Again, we allow for approximate sparsity in the second equation. For simplicity, in our technical analysis, we consider the model above without controls $x_i$ in the first equation and a regular fix design $Z$ with observations $z_i$:
\begin{eqnarray}
\label{ins1}
y_i &=& d_i \alpha_0+ \varepsilon_i,\quad \mathbb{E}[\varepsilon_i|z_i]=0\\
\label{ins2}
d_i &=&z_i^\intercal \gamma +r_i+\nu_i.
\end{eqnarray}
To estimate the coefficient $\alpha_0$ of the endogenous treatment variable, we employ the following two-stage least squares (2SLS) procedure:
In the first step, we estimate and predict the instrument $\hat{D}_i=z_i^\intercal \hat{\gamma}$ by iterated post- or orthogonal $L_2$-Boosting. Finally, we estimate $\hat{\alpha}_0$ by regressing the outcome variable $y$ on the predicted instrument $\hat{D}_i$. To analyze this estimator based on $L_2$-Boosting, we impose the following assumptions:
\begin{assumptionB} \label{B1} Let $c_\phi,C_\phi$ and $C_r$ be absolute constants. The following assumptions hold:
\begin{enumerate}
\item[(i)] 
The data  $(y_i, d_i, z_i)$ are i.i.d. on $(\Omega, \mathcal{F}, P)$ and obey  the linear IV model in (\ref{ins1}) and (\ref{ins2}).
\item[(ii)] The optimal instrument function $D_i=z_i^\intercal \gamma +r_i$ can be approximated sufficiently well by $s$ instruments:
\[||\gamma||_0 \leq s,\quad \mathbb{E}_n[r_{i}^2]\leq  \frac{C_r s\log(2p/\alpha)}{n}\quad \text{and}\quad \|Z\gamma\|_n^2\leq n^{K-1}\]
for a constant $K>1$.
\end{enumerate}
\end{assumptionB}
\begin{assumptionB}\label{B2}
We assume that $0 < c_\phi\leq \phi_{s}(s',\mathbb{E}_n[z_i z_i^\intercal ])\leq \phi_{l}(s',\mathbb{E}_n[z_i z_i^\intercal ])\leq C_\phi < \infty$ for any $s'\leq M_n$,
where $M_n$ is a sequence such that $M_n\rightarrow \infty$ slowly along with $n$, and $M_0\geq C_M s\log(n)$ for some large enough positive constant $C_M$.
\end{assumptionB}
\begin{assumptionB}\label{B3}
With probability greater or equal $1-\alpha$, it holds $\max_{1\leq j \leq p} |<z_{ij},\nu_i>_n| \leq 2 \widetilde\sigma \sqrt{\frac{\log (2p/\alpha)}{n}}$ for $\widetilde\sigma:=\sqrt{Var(\nu_i)}$ and $|\hat{\widetilde\sigma}^2-\widetilde\sigma^2|\leq \eta \widetilde\sigma^2$ for $\eta\in (0,\frac{1}{2})$.
\end{assumptionB}

\begin{assumptionB}\label{B4} There exists an absolute constant $q > 4$ such that
\item[(i)] $\mathbb{E}[|\varepsilon|^{q}] + \mathbb{E}[|\nu|^{q}]\le C$. Furthermore, it holds that $c\leq \mathbb{E}[\varepsilon^2|z_i]\leq C$ for all $i=1,\dots,n$, where $c$ and $C$ are absolute positive constants.
\item[(ii)] It holds that 
		\begin{enumerate}
			\item[(a)] $\frac{s^2 \log^2(p \vee n)\log^2(n)}{n} \rightarrow 0$.
		\end{enumerate}
\end{assumptionB}
Assumptions B.\ref{B1}-B.\ref{B4} are essentially the same as  Assumptions A.\ref{A1}-A.\ref{A4} except for some small modifications in B.\ref{B4} related to the different underlying setting in subsection \ref{controlsetting}. It is worth noting that the growth condition B.\ref{B4}(ii) is slightly weaker than the growth condition A.\ref{A4}(ii) because Assumption A.\ref{A4}(ii) implies Assumption B.\ref{B4}(ii). Assumption A.\ref{A1}(ii)(b) is needed to ensure consistency of the respective variance estimators. With these assumptions, we can show that the instrumental variable estimator $\hat{\alpha}_0$ following the two-stage least squares (2SLS) procedure is asymptotically normally distributed. This result is provided by the following theorem.
\begin{theorem}\label{ivtheorem}
Let $\{P_n\}$ be a sequence of data generating processes for which Assumptions B.\ref{B1}-B.\ref{B4} hold for $P=P_n$ and each $n$. Then, the instrumental variable estimator $\hat{\alpha}$ based on iterated post-$L_2$-Boosting or orthogonal $L_2$-Boosting of the optimal instrument satisfies
\[ (\hat{Q}^{-1} \hat{\Omega} \hat{Q}^{-1})^{-1/2} \sqrt{n} (\hat{\alpha}_0 - \alpha_0) \rightarrow_D N(0,1) \]
for $\hat{\Omega}:=\mathbb{E}_n[\hat{\varepsilon}_i^2 \hat{D}(z_i)^2]$ and $\hat{Q}:=\mathbb{E}_n[\hat{D}(z_i)^2]$ with $\hat{\varepsilon}_i=y_i-d_i\hat{\alpha}_0$ and $\hat{D}(z_i)=z_i^\intercal \hat{\gamma}$.
\end{theorem}
\noindent
This also enables us to construct uniformly valid confidence intervals for the treatment effect as in Corollary \ref{coro1}. The proof of Theorem \ref{ivtheorem} is given in Appendix \ref{app:proofs}.

\section{Simulation Study}\label{sec5}
In this section, we present simulation results for both settings.
\subsection{Setting with High-Dimensional Controls}\label{sim_controls}
First, we consider the following i.i.d. data generating process:
\begin{align}
y_i &= d_i \alpha_0 + x_i^\intercal \theta_g + \xi_i\\
d_i &= x_i^\intercal \theta_m + \nu_i,
\end{align}
where $x_i \sim N(0, \Sigma)$. The second auxiliary equation captures the confounders. We consider two different covariance structures $\Sigma$: First, an i.i.d. structure with $\Sigma=I_p$ being the $p \times p$ identity matrix and secondly a structure with correlated covariates, for which we generate a Toeplitz design $\Sigma_{kj}=0.5^{|j-k|}$ but then perform a random covariate rotation. 
The parameter of interest, $\alpha_0$, is equal to $0.5$.
We consider both a sparse setting (\textit{Control-1}) and an approximately sparse setting (\textit{Control-2}). In the sparse setting, the first $s=20$ coefficients are set to one and all other parameters $p-s$ are equal to zero. In the approximately sparse setting, the first $s=10$ coefficients are set to one and the remaining coefficients are declining:
\begin{align*}
\theta_g=\theta_m=\begin{cases}1 \quad & \text{for }j\le s\\
0.8^{(j-s)} & \text{for }j>s.\end{cases}
\end{align*}
The residuals $\xi_i\sim N(0,\sigma^2)$ and $\nu_i\sim N(0,\sigma^2)$ are generated independently with $$\sigma = \sqrt{\theta_g^\intercal\Sigma\theta_g/\mathrm{SNR}}$$ to control the empirical signal-to-noise ratio ($\mathrm{SNR}$),
$$\mathrm{SNR}=\mathbb{E}_n[(x_i^\intercal\theta_g)^2]/\mathbb{E}_n[\xi_i^2]=\mathbb{E}_n[(x_i^\intercal\theta_m)^2]/\mathbb{E}_n[\nu_i^2].$$
We vary the sample size $n$, the number of covariates $p$ and the $\mathrm{SNR}$. The number of repetitions is $R=500$.
For estimation of the treatment effect $\alpha_0$ we use the double selection procedure presented in Section \ref{controlsetting}. Because we use different approaches for variable selection, we are able to compare $L_2$-Boosting and Lasso in terms of how capable they are in treatment effect estimation. The results are provided in subsection \ref{sim_results}.

\subsection{IV Estimation with Many Instruments}\label{sim_IV}
In the setting with many instrumental variables, we consider the following i.i.d. data generating process, which is  similar to that used in the simulation experiment in \cite{BCCH:2012}:
\begin{align}
y_i &= d_i\alpha_0+ \varepsilon_i,\\
d_i &= \gamma^\intercal  z_i + \nu_i
\end{align}
with
\begin{align*}
(\varepsilon_i,\nu_i) &\sim N \left(0 ,\left( \begin{array}{cc}
	\sigma_\varepsilon^2 & \sigma_{\varepsilon\nu}\\
	\sigma_{\varepsilon\nu} & \sigma_\nu^2
\end{array} \right) \right).
\end{align*}
The parameter of interest is set to $\alpha_0=1$. The regressors $z_i=(z_{i1}, \ldots, z_{ip})^\intercal$ are drawn from a normal distribution $N(0, \Sigma_Z)$ for the same covariance structures as in subsection \ref{sim_controls}. Further, we set $corr(\varepsilon,\nu)=0.1$ and $\sigma^2_\varepsilon=1$.
Let $\sigma^2_{\nu}=1-\gamma^\intercal\Sigma_z \gamma$ such that the unconditional variance of the endogenous variable equals one.
The first-stage coefficients are set according to $\gamma= C \widetilde{\gamma}$.
For the high-dimensional regression coefficient $\widetilde{\gamma}$, we use a sparse design, i.e., $\widetilde{\gamma}=(1, \ldots,1,0,\ldots,0)$ with $s$ coordinates equal to one and all other $p-s$ equal to zero. The constant $C$ is set in such a way that we generate data with an SNR of $1$ and $3$. This leads to the concentration parameter $\mu^2 = \frac{n \gamma^\intercal\Sigma_z \gamma}{\sigma_{\nu}^2}$ of size $n$ or $3n$, which determines the behavior of the instrumental variable estimators as described in \cite{hansen2008estimation}. As in subsection \ref{sim_controls}, we also vary the sample size $n$ and the number of covariates $p$.
The number of repetitions in the simulations study is again $R=500$. We subsequently employ two-stage least squares with $L_2$-Boosting and Lasso to compare the performance in estimating the treatment effect $\alpha_0$ in a high-dimensional setting with many instrumental variables.

\subsection{Simulation Results}\label{sim_results}
The results of both simulation settings described in subsection \ref{sim_controls} and \ref{sim_IV} are summarized in Tables \ref{simresults_n400} -- \ref{simresults_n800_SNR3} in Appendix \ref{app:add_sim}.
 We present the mean absolute error (MAE) of $\hat{\alpha}$,
$$\frac{1}{R}\sum\limits_{r=1}^R|\hat{\alpha}_r-\alpha_0|,$$
the standard deviation of the estimators and the empirical rejection rates at a significance level $0.05$. In our analysis, \texttt{pBA} refers to post-$L_2$-Boosting, \texttt{I-pBA} to the iterated version, and \texttt{oBA} to the orthogonal variant as in Section \ref{OGA}. Each of these uses the theoretically justified stopping criterion from Corollary \ref{coro:pba} and Corollary \ref{coro:oba}. The \texttt{Lasso} method refers to the Post-Lasso estimator with theoretically justified selection of the shrinkage parameter $\lambda$.
For comparison, we also use an implementation of Post-$L_2$-Boosting with cross-validated stopping (\texttt{CV-pBA}) 
as well as a Lasso implementation with cross-validated $\lambda$.
The estimators based on $L_2$-Boosting yield a low MAE throughout all these settings. On average, the methods based on $L_2$-Boosting also provide empirical rejection rates that are close to the nominal level of $0.05$. 
Compared to Lasso, the MAE provided by $L_2$-Boosting is slightly lower and, on average, 
the empirical rejection rates are slightly closer to the nominal level of $0.05$. Nevertheless, the differences in MAE between $L_2$-Boosting and Lasso are all within one standard error of each other. In summary, both $L_2$-Boosting and Lasso provide empirical rejection rates that are close to the nominal level of $0.05$. This is in line with the theory that both $L_2$-Boosting and Lasso can be used for valid estimation of treatment effects in high-dimensional settings. Further, the cross-validated methods yield a higher MAE and higher standard deviations than the methods with a theoretically justified tuning parameter. Table \ref{simresults_n600_part} below summarizes the results for the setting with $n=600$ and $SNR=1$.

\begin{table}[h]
\resizebox{0.8\textwidth}{!}{%
\begin{tabular}{|r|rrrrrr|rrrrrr|r|}
  \hline
  &\multicolumn{6}{c|}{iid covariates}&\multicolumn{6}{c|}{correlated covariates}&\\
  \hline
 & \textbf{Lasso} & \textbf{pBA}& \textbf{I-pBA} &  \textbf{oBA} & \textbf{CV-pBA} & \textbf{CV-Lasso} & \textbf{Lasso} & \textbf{pBA}& \textbf{I-pBA} & \textbf{oBA} & \textbf{CV-pBA} & \textbf{CV-Lasso} &  \\  
   \hline
  \textbf{p}&\multicolumn{12}{c|}{Mean Absolute Error}&\\
  \hline
  200 & 0.033 & 0.033 & 0.035&0.034 & 0.037 & 0.038 & 0.035 & 0.036 & 0.035&0.036 & 0.038 & 0.037 & \parbox[t]{2mm}{\multirow{5}{*}{\rotatebox[origin=c]{-90}{Control-1}}}\\ 
  600 & 0.033 & 0.034 & 0.037&0.034 & 0.042 & 0.050 & 0.035 & 0.036 & 0.035&0.036 & 0.041 & 0.044 &\\ 
  1000 & 0.037 & 0.036 &0.036& 0.036 & 0.049 & 0.059 & 0.036 & 0.036 &0.036& 0.037 & 0.044 & 0.052& \\ 
  1400 & 0.038 & 0.037 &0.038& 0.038 & 0.050 & 0.066 & 0.038 & 0.035 &0.035& 0.037 & 0.048 & 0.061& \\ 
  1800 & 0.042 & 0.037 &0.039& 0.038 & 0.053 & 0.073 & 0.041 & 0.037 &0.037& 0.038 & 0.050 & 0.066& \\  \hline
  200 & 0.036 & 0.036 & 0.033&0.036 & 0.039 & 0.039 & 0.035 & 0.035 & 0.033&0.035 & 0.038 & 0.035& \parbox[t]{2mm}{\multirow{5}{*}{\rotatebox[origin=c]{-90}{Control-2}}} \\ 
  600 & 0.036 & 0.034 & 0.035&0.034 & 0.043 & 0.048 & 0.036 & 0.035 & 0.036&0.035 & 0.044 & 0.044& \\ 
  1000 & 0.035 & 0.034 &0.034& 0.034 & 0.048 & 0.051 & 0.035 & 0.035 &0.035& 0.035 & 0.045 & 0.047& \\ 
  1400 & 0.038 & 0.037 &0.035& 0.038 & 0.052 & 0.056 & 0.037 & 0.037 &0.035& 0.038 & 0.047 & 0.053 &\\ 
  1800 & 0.036 & 0.035 &0.036& 0.035 & 0.052 & 0.059 & 0.036 & 0.035 &0.034& 0.035 & 0.050 & 0.054& \\  \hline
  200 & 0.045 & 0.044 & 0.044&0.044 & 0.043 & 0.044 & 0.079 & 0.079 & 0.079&0.078 & 0.070 & 0.077  &  \parbox[t]{2mm}{\multirow{5}{*}{\rotatebox[origin=c]{-90}{IV}}}\\ 
  600 & 0.046 & 0.045 & 0.045&0.046 & 0.045 & 0.047 & 0.091 & 0.086 & 0.087&0.086 & 0.073 & 0.098 &\\ 
  1000 & 0.043 & 0.042 &0.042& 0.042 & 0.044 & 0.044 & 0.093 & 0.090 &0.091& 0.090 & 0.073 & 0.208& \\ 
  1400 & 0.046 & 0.046 &0.045& 0.046 & 0.047 & 0.047 & 0.096 & 0.095 &0.095& 0.094 & 0.078 & 0.380& \\ 
  1800 & 0.046 & 0.046 &0.044& 0.046 & 0.048 & 0.047 & 0.098 & 0.102 &0.096& 0.100 & 0.080 & 0.895& \\ 
  \hline
  \textbf{p}&\multicolumn{12}{c|}{Standard Deviation}&\\
  \hline
  200 & 0.042 & 0.041 & 0.043&0.041 & 0.047 & 0.044 & 0.044 & 0.045 & 0.043&0.045 & 0.048 & 0.046 & \parbox[t]{2mm}{\multirow{5}{*}{\rotatebox[origin=c]{-90}{Control-1}}}\\ 
  600 & 0.040 & 0.040 & 0.045&0.040 & 0.052 & 0.046 & 0.043 & 0.043 & 0.044&0.043 & 0.051 & 0.046& \\ 
  1000 & 0.045 & 0.044 &0.044& 0.044 & 0.061 & 0.056 & 0.044 & 0.044 &0.042& 0.044 & 0.056 & 0.050& \\ 
  1400 & 0.045 & 0.044 &0.045& 0.045 & 0.064 & 0.053 & 0.044 & 0.042 &0.041& 0.043 & 0.061 & 0.055& \\ 
  1800 & 0.047 & 0.044 &0.044& 0.045 & 0.068 & 0.059 & 0.044 & 0.043 &0.044& 0.044 & 0.063 & 0.053& \\  \hline
  200 & 0.043 & 0.043 & 0.043&0.043 & 0.048 & 0.045 & 0.044 & 0.044 & 0.043&0.044 & 0.049 & 0.045 & \parbox[t]{2mm}{\multirow{5}{*}{\rotatebox[origin=c]{-90}{Control-2}}}\\ 
  600 & 0.044 & 0.044 & 0.044&0.044 & 0.054 & 0.054 & 0.044 & 0.044 & 0.044&0.044 & 0.055 & 0.050 &\\ 
  1000 & 0.044 & 0.043 &0.043& 0.043 & 0.059 & 0.054 & 0.044 & 0.043 &0.043& 0.044 & 0.057 & 0.051& \\ 
  1400 & 0.048 & 0.046 &0.044& 0.047 & 0.065 & 0.060 & 0.046 & 0.046 &0.044& 0.047 & 0.060 & 0.054 &\\ 
  1800 & 0.045 & 0.043 &0.044& 0.043 & 0.063 & 0.058 & 0.046 & 0.045 &0.043& 0.045 & 0.062 & 0.054& \\  \hline
  200 & 0.056 & 0.054 & 0.054&0.055 & 0.052 & 0.055 & 0.097 & 0.095 & 0.094&0.093 & 0.080 & 0.093  &  \parbox[t]{2mm}{\multirow{5}{*}{\rotatebox[origin=c]{-90}{IV}}}\\ 
  600 & 0.057 & 0.057 & 0.057&0.057 & 0.053 & 0.056 & 0.112 & 0.105 & 0.106&0.105 & 0.076 & 0.195 &\\ 
  1000 & 0.053 & 0.053 &0.052& 0.053 & 0.048 & 0.052 & 0.114 & 0.106 &0.107& 0.106 & 0.072 & 0.782 &\\ 
  1400 & 0.057 & 0.058 &0.057& 0.057 & 0.051 & 0.057 & 0.119 & 0.115 &0.111& 0.114 & 0.079 & 2.407& \\ 
  1800 & 0.057 & 0.058 &0.056& 0.058 & 0.052 & 0.057 & 0.117 & 0.119 &0.113& 0.117 & 0.077 & 5.709 &\\ 
        \hline
  \textbf{p}&\multicolumn{12}{c|}{Rejection Rate}&\\
  \hline
  200 & 0.056 & 0.058 & 0.064&0.056 & 0.048 & 0.070 & 0.064 & 0.072 & 0.066&0.072 & 0.064 & 0.072 & \parbox[t]{2mm}{\multirow{5}{*}{\rotatebox[origin=c]{-90}{Control-1}}}\\ 
  600 & 0.052 & 0.050 & 0.088&0.052 & 0.032 & 0.184 & 0.062 & 0.072 & 0.060&0.074 & 0.046 & 0.126 &\\ 
  1000 & 0.082 & 0.076 &0.072& 0.072 & 0.062 & 0.268 & 0.064 & 0.074 &0.058& 0.072 & 0.056 & 0.186& \\ 
  1400 & 0.080 & 0.076 &0.086& 0.084 & 0.052 & 0.314 & 0.074 & 0.064 &0.062& 0.068 & 0.060 & 0.248& \\ 
  1800 & 0.110 & 0.078 &0.072& 0.086 & 0.060 & 0.358 & 0.104 & 0.082 &0.070& 0.098 & 0.044 & 0.312& \\  \hline
  200 & 0.066 & 0.056 & 0.060&0.050 & 0.048 & 0.076 & 0.070 & 0.074 & 0.060&0.068 & 0.074 & 0.068 & \parbox[t]{2mm}{\multirow{5}{*}{\rotatebox[origin=c]{-90}{Control-2}}}\\ 
  600 & 0.062 & 0.064 & 0.060&0.064 & 0.062 & 0.176 & 0.074 & 0.068 & 0.054&0.060 & 0.054 & 0.124& \\ 
  1000 & 0.080 & 0.064 &0.056& 0.066 & 0.058 & 0.210 & 0.062 & 0.058 &0.052& 0.058 & 0.070 & 0.148& \\ 
  1400 & 0.088 & 0.068 &0.076& 0.084 & 0.084 & 0.206 & 0.088 & 0.074 &0.058& 0.086 & 0.064 & 0.224& \\ 
  1800 & 0.072 & 0.044 &0.066& 0.056 & 0.052 & 0.238 & 0.082 & 0.066 &0.054& 0.074 & 0.058 & 0.226 &\\  \hline
  200 & 0.046 & 0.042 & 0.042&0.044 & 0.052 & 0.048 & 0.070 & 0.070 & 0.062&0.072 & 0.088 & 0.074 &  \parbox[t]{2mm}{\multirow{5}{*}{\rotatebox[origin=c]{-90}{IV}}} \\ 
  600 & 0.066 & 0.066 & 0.046&0.072 & 0.110 & 0.068 & 0.072 & 0.074 & 0.056&0.082 & 0.122 & 0.084 &\\ 
  1000 & 0.044 & 0.052 &0.048& 0.052 & 0.062 & 0.046 & 0.068 & 0.054 &0.072& 0.054 & 0.116 & 0.052 &\\ 
  1400 & 0.066 & 0.072 &0.064& 0.068 & 0.090 & 0.072 & 0.070 & 0.074 &0.076& 0.078 & 0.126 & 0.066& \\ 
  1800 & 0.066 & 0.066 &0.052& 0.078 & 0.106 & 0.078 & 0.062 & 0.066 &0.078& 0.064 & 0.142 & 0.060& \\ 
  \hline
  \end{tabular}}
  \caption{Simulation results for $n=600$ and $SNR=1$.} 
\label{simresults_n600_part}
\end{table}

In addition, we provide the average CPU time of the proposed methods for a single iteration in Tables \ref{simresults_n400} -- \ref{simresults_n800_SNR3} in the Appendix \ref{app:add_sim}. These results refer to an implementation of all methods in Python to make the different computation times of the applied methods comparable. \footnote{We are using the Python version 3.6.5. The operating system is Linux/Ubuntu. The hardware is Intel(R) Xeon (R) CPU E5-2650 v3 @2.30GHz.} We provide the results for the setting with $n=600$ and $SNR=1$ again in Figure \ref{time_n600} below. It is clearly visible that $L_2$-Boosting has a computational advantage over Lasso. For example, in the sparse setting (\textit{Control-1}) for i.i.d. covariates with $p=1800$, post-Lasso requires a CPU time of $0.213$ seconds per iteration, whereas post-$L_2$-Boosting (\texttt{pBA}) takes only $0.096$ seconds. On average, the CPU time for Lasso is more than twice that for post-L2-Boosting. For an overview of how the computation time increases with sample size $n$, we refer to Figure \ref{time_all} in the appendix.
Obviously, the methods with cross-validated penalty are significantly slower than those with a theoretically justified tuning parameter, cf.~Appendix \ref{app:add_sim}. In summary, we conclude that $L_2$-Boosting with a theoretically justified stopping criterion is the most efficient method with respect to computation time, and it also provides the best estimation accuracy in our simulation study.
 \begin{figure}[H]
\centering
\includegraphics[width=\linewidth]{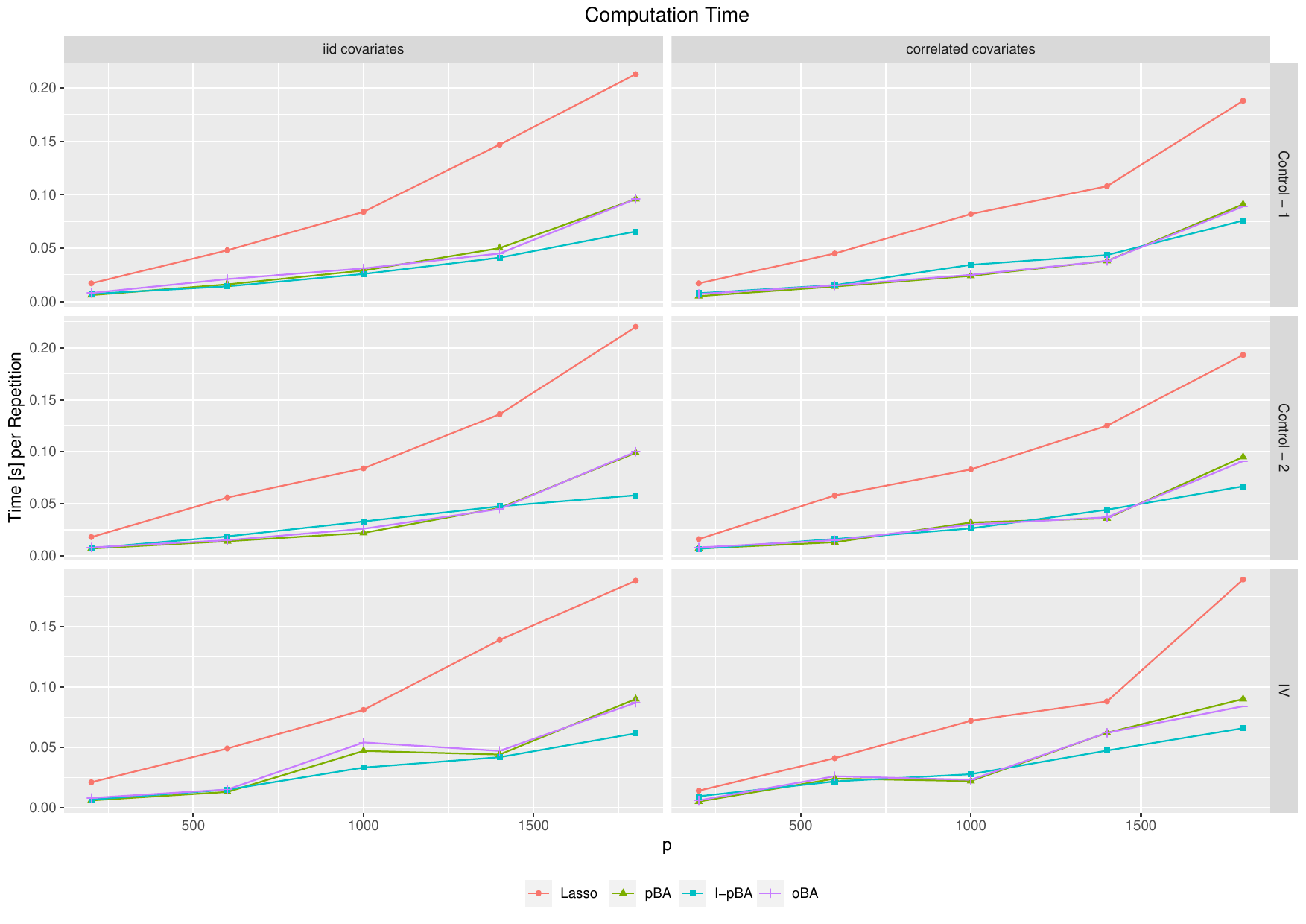}
\caption{Computation Time for $n=600$ and $SNR=1$.}
\label{time_n600}
\end{figure}

\section{Application: Empirical Analysis of Bank Mergers}\label{sec6}
In this section, we apply $L_2$-Boosting for instrumental variable estimation in a setting with many instruments. We build on the empirical study in \cite{levine2020bank} and show how our methodology can improve IV estimation in high-dimensional settings.

\subsection{Brief Introduction}

We use a data set from \cite{levine2020bank}, who study how the pre-merger overlap of bank branch networks in the US can affect the post-merger stock returns of acquirer banks over the period 1984 to 1995. The target variable of interest is the cumulative abnormal returns (CAR) of the acquirer bank within a window of five days around the announcement of the acquisition, denoted CAR(-2,2).\footnote{Acquirer CAR(-2, 2) is the 5-day CAR (cumulative abnormal
returns) during the window (-2, +2), where day 0 is the announcement date of the acquisition. Abnormal returns are defined by using the difference between actual and projected returns. For a definition of the projected returns we refer to \cite{levine2020bank}.} 

We are interested in estimating the causal effect of the overlap of bank branch networks on the outcome of the acquisition, i.e., CAR(-2,2). This overlap is measured by three variables, namely the \textit{Overlap},  \textit{Correlation}, and \textit{Cosine Distance}. The \textit{Overlap} variable is measured as the quotient of the  number of states in which both banks are present and the total number of states in which at least one bank is present. \textit{Correlation} and \textit{Cosine Distance} are derived from the vector for each bank, which denotes if they are present in each state.  For details on the construction of the three measurements of bank branch network overlap, we refer to \cite{levine2020bank}. Additionally, 13 variables are available as controls.\footnote{The covariates are: Acquirer run-up (-200, -11), Acquirer Net Income, Acquirer Tobin’s Q, Deal size adjusted by Acquirer Assets, Cash deal dummy, Stock Deal dummy, Attitude dummy, Target public dummy, Percentage of shares acquired, Acquirer Total Assets, Acquirer/Target Assets Ratio, a dummy for whether the acquiring banks acquired another target during the last three years, and number of offers that the target received. The regressions also contain Year fixed effects and Acquirer Bank State*Target Bank State fixed effects.}

The endogeneity problem arises because measures of the overlap of bank branch networks  and bank merger decisions and outcomes may be affected by unobservable hidden factors. In many states in the US, banks were not allowed to set up branches in other states. In the 1980s and 1990s, states gradually relaxed  this restriction, which was largely exogenous to the individual banks (\cite{Jayaratne}). 
Therefore, a connection forms when a state allows another state’s banks to open branches, and these connections generate a directed spatial state network between the 50 states in the United States and Washington D.C.. As a result, interstate deregulation events generate  exogenous changes of the state network, which allows information from such state network to serve as instruments to the endogenous bank network, e.g., the three overlap measures mentioned above.


It is common to utilize neighborhood structure of nodes in a network to serve as instruments to deal with endogeneity in the literature, e.g., \cite{Chandrasekhar}. Specifically, for any pairs of nodes $S_1,S_2$, we can define $d(S_1,S_2)$ as the minimum number of edges that link $S_1$ to $S_2$. For any $k\in \mathbb{Z}^+$, we can define the $k^{th}$ order neighbor of a node $S_1$, denoted as a set $N_k(S_1)$, as the collection of those nodes $S_2$ such that $d(S_1,S_2) = k$. 

Note that the state network has a spatial structure. Consider the case that a bank (the acquirer) in $S_1$ acquires a bank in $S_2$. Consider measuring ``overlap" in the state network for two states $S_1,S_0$ subjective to the target bank's state, $S_2$. We can  construct $k^{th}$ order (geographical) distance-weighted overlap between two states $S_1$ and $S_0$, subject to $S_2$, denoted as $k\textrm{-}NN(S_0,S_1,S_2)$, as follows:
\begin{equation}\label{eq:knn1}
    k\textrm{-}NN(S_0,S_1,S_2):=\frac{\sum_{S\in \mathcal{S}} 1(S\in N_k(S_1) \cap N_k(S_0)) }{\sum_{S\in \mathcal{S}} 1(S\in N_k(S_1)\cup N_k(S_0)) }GDR(S_0,S_1,S_2),
\end{equation}
where $GDR(S_0,S_1,S_2)$ is a function that measures ``geographic distance ratio", and $\mathcal{S}$ is the collection of all states and Washington D.C., which has a cardinality of $51$. In \cite{levine2020bank}, the authors define three different ways to construct geographic distance ratio (GDR) and $k$-$NN$. For details, see Appendix \ref{details_application}.

Specifically, in \eqref{eq:knn1}, we call $1\textrm{-}NN$ as FNN, and $2\textrm{-}NN$ as SNN, referring to ``first" and ``second order neighbor's distance weighted network overlap", respectively. Note that for each acquisition, given $S_1,S_2$, there exists $51$ possible $S_0\in \mathcal{S}$ and three ways to measure FNN, leading to $51\times 3=153$ FNN instruments.

Using the state network, \cite{levine2020bank} construct three IVs based on SNNs, namely weighted \textit{Overlap}, \textit{Correlation  Coefficient}, and \textit{Cosine Distance} of states that allowed market  entry.\footnote{Details on the construction can be found in the online appendix of \cite{levine2020bank}.}  We instead use the $153$ FNN variables as potential IVs. Because we have $442$ observations, this leads to a high $p/n$ ratio, only a small set of which, however, might be relevant.\footnote{Note that the number of observations differ slightly to the original study because we corrected a minor data mistake in the sample. This mistake is confirmed after communication with one of the authors.} In fact, our later analysis shows that there are only four to six FNNs with non-zero coefficients after applying iterated post- or orthogonal $L_2$-Boosting, meaning that most FNNs are filtered out during the selection step.\footnote{Note that in our estimation with $L_2$-Boosting, we do not include the original, constructed IVs that \cite{levine2020bank} use. However, if we include them in the selection process, the results do not change – indeed, these are still better than those obtained using the combination of simple instrumental variables.}

The FNNs appear to be much weaker than SNNs because none of the FNNs are found to be significantly correlated to the explanatory variable in the first stage. For this reason, \cite{levine2020bank} use SNNs instead of FNNs. However, discarding the FNNs leads to a loss of efficiency. The advantage of $L_2$-Boosting algorithm is that it utilizes the useful information of FNNs and selects a combination of FNNs to form IVs that may be even better than SNNs.

\subsection{Results}
We apply iterated post-$L_2$-Boosting and orthogonal $L_2$-Boosting on the bank merger data, conduct inference as described in Section \ref{IVsetting}, and present the estimates in Table \ref{application_results}. We control for all firm-year characteristics, year fixed effects, and acquirer-state times target-state fixed effects as was done in the original study. The numbers in parentheses are standard errors.

Columns 1-3 in Table \ref{application_results} give the estimated coefficients and standard errors using original IVs (SNNs) of \cite{levine2020bank}.\footnote{Note that the estimates differ slightly to the original study since we corrected a minor data mistake in the sample.} 
Columns 4-6 give the  estimates of the coefficients and standard errors using iterated post-$L_2$-Boosting (I-pBA). Rows 4-6 in Table \ref{application_selection} give the FNNs  that are ultimately selected for use. For example, as shown in Row 4, six FNNs ($FNN1_{DC}$, $FNN1_{IA}$, $FNN1_{VA}$, $FNN2_{MD}$, $FNN2_{NJ}$, $FNN2_{KS}$) are selected to jointly instrument for \textit{Overlap} in our estimation procedure. These six instruments provide a smaller standard error (3.53) than the original IVs of \cite{levine2020bank} (5.27) -- indicating the selection of more efficient IVs. The standard errors using the IVs selected from I-pBA for \textit{Correlation Coefficient} and \textit{Cosine Distance} are also smaller than those obtained using the  original IVs. This shows that the estimation is more efficient when using our selection procedure. Moreover, all estimated coefficients using I-pBA are also statistically significant.

Orthogonal $L_2$-Boosting also works well on this sample. Columns 7-9 present the estimates of the coefficients and standard errors  using orthogonal $L_2$-Boosting. Rows 7-9 in Table \ref{application_selection} present the FNNs that are selected. For example, as shown in Row 7, six IVs ($FNN1_{AR}$, $FNN1_{DC}$, $FNN1_{IA}$, $FNN1_{VA}$, $FNN2_{MD}$, $FNN2_{NJ}$) are selected to jointly instrument for \textit{Overlap} and produce a smaller standard error (3.52) than that produced by the SNNs (5.27). The standard errors using the combined FNNs selected from orthogonal $L_2$-Boosting for \textit{Correlation Coefficient} and \textit{Cosine Distance} are also smaller than the SNNs in \cite{levine2020bank}. 

In the last three Columns 10-12 in Table \ref{application_results}, we present the estimates using post-Lasso for a comparison. As shown in the bottom row of the table, the computing time for post-Lasso is higher than that for post-$L_2$-Boosting.

\begin{landscape}
	\begin{table}
		\caption{Second stage estimates using iterated post- and orthogonal $L_2$-Boosting.}
		\label{application_results}
	\begin{scriptsize}
\resizebox{2.35\textwidth}{!}{
	\begin{tabularx}{38cm}{lcccccccccccc}
		\cline{1-13}
	\multirow{3}{*}{} & (1) 	& (2)	&	(3)	&	(4) &	(5)	&	(6) 	&	(7)	&	(8)	&	(9)&	(10)&	(11)	&	(12) \\
			 &  \multicolumn{12}{c}{Dependent Variable: Acquirer CAR (-2, +2)}     \\
			\cline{2-13}	& \multicolumn{3}{c}{Original IV}	& \multicolumn{3}{c}{iterated post-$L_2$-Boosting}	& \multicolumn{3}{c}{orthogonal $L_2$-Boosting}	& \multicolumn{3}{c}{post-Lasso}	\\
			\cline{1-13} &&&&&&&&&&&& \\
			Overlap & 17.18***	&	&	& 8.827**	& 	&	& 8.753**	&	&	& 3.575	&	& \\
			& (5.27)	&	&	& (3.53)	&	&	& (3.52)	&	&	& (4.39)	&	& \\
			Correlation Coefficient &	& 12.93***	&	&	& 11.50***	&	&	& 12.34***	&	&	& 5.102	& \\
			&							& (4.86)	&	&	& (4.44)	&	&	& (4.68)	&	&	& (4.73)	& \\
			Cosine Distance 			&	& 	& -17.66***	&	& 	& -13.64***	&	& 	& -8.386	&	& 	& -5.364 \\
										&	&	& (5.63)	&	&	& (5.28)	&	&	& (5.12)  & &  & (4.84)  \\
			&	&	&	&	&	&	&	&	&	&	&	& \\
			Firm-year Controls & Yes	& Yes	& Yes	& Yes	& Yes	& Yes	& Yes	& Yes	& Yes	& Yes	& Yes	& Yes \\
			Year Fixed Effects & Yes	& Yes	& Yes	& Yes	& Yes	& Yes	& Yes	& Yes	& Yes	& Yes	& Yes	& Yes \\
			Acquirer-State $\times$ Target-State Fixed Effects & Yes	& Yes	& Yes	& Yes	& Yes	& Yes	& Yes	& Yes	& Yes	& Yes	& Yes	& Yes \\
			&&&&&&&&&&&& \\
			Observations & 442	&442	&442	&442	&442	&442	&442	&442	&442	&442	&442	&442 \\
			R-squared 	& 0.088	& 0.298	& 0.144	& 0.404	& 0.338	& 0.288	& 0.406	& 0.315	& 0.410	& 0.466	& 0.446	& 0.446 \\
			Algorithm Execution Time (ms) & & & & 3.313 & 2.381 & 2.426 & 3.658 & 3.866 & 4.227 & 19.928 & 5.260 & 5.298 \\
			
			\cline{1-13}
	\end{tabularx}
	}	
\end{scriptsize}
\end{table}

\begin{table}
	\caption{Selected instrumental variables that are used in Table \ref{application_results}.}
	\label{application_selection}
	\begin{scriptsize}
		\begin{tabularx}{.7\textwidth}{cl}
			\hline
			\textbf{Column} 	&	\textbf{Instrumental Variables} \\
			\hline
			(1)				&  Original SNN IV for Overlap \\
			(2)				&  Original SNN IV for Correlation Coefficient \\
			(3)				&  Original SNN IV for Cosine Distance \\
			(4)				&  $FNN1_{DC}, FNN1_{IA}, FNN1_{VA}, FNN2_{MD}, FNN2_{NJ}, FNN2_{KS}$ \\
			(5)				&  $FNN1_{AR}, FNN1_{VA}, FNN2_{MD}, FNN3_{KS}$ \\
			(6)				&  $FNN1_{AR}, FNN2_{TX}, FNN3_{KS}, FNN3_{NY}, FNN3_{RI}$\\
			(7) 			&  $FNN1_{AR}, FNN1_{DC}, FNN1_{IA}, FNN1_{VA}, FNN2_{MD}, FNN2_{NJ}$ \\
			(8) 			&  $FNN1_{AR}, FNN1_{IA}, FNN1_{VA}, FNN2_{MD}, FNN3_{KS}$\\
			(9) 			&  $FNN1_{AR}, FNN1_{IA}, FNN2_{OK}, FNN2_{TX}, FNN3_{KS}, FNN3_{RI}$\\
			(10) 			&  $FNN2_{MI}, FNN2_{NY}, FNN3_{AK}, FNN3_{LA}$ \\
			(11) 			&  $FNN2_{MI}, FNN2_{NY}, FNN3_{LA}$\\
			(12) 			&  $FNN2_{MI}, FNN2_{NY,} FNN2_{PA}, FNN3_{LA}$\\
			\hline \hline
		\end{tabularx}
	\end{scriptsize}
\end{table}
\end{landscape}

\section{Conclusion}\label{sec7}
In this paper, we apply $L_2$-Boosting, namely the iterated post- and orthogonal versions, for estimation and inference of treatment effects in the setting of many controls and many instruments. We derive uniformly valid results for the asymptotic distribution of estimated treatment effects.
We use the framework of orthogonalized moment conditions introduced by Belloni, Chernozhukov, Hansen, and coauthors in a series of papers to derive these results.
We also provide new results on the rate of convergence of iterated post- and orthogonal $L_2$-Boosting, which are needed as an ingredient of our approach, but might also be of independent interest.

To derive these rates, we do not require the beta-min condition, and we rely only on approximate sparsity, which is a substantial  improvement on earlier work. In an extensive simulation study, we show that our proposed method works well and has a computational advantage over Lasso. Finally, we analyze how the pre-merger overlap of bank branch networks in the US can affect the post-merger stock returns of acquirer banks. Our results show that IV selection with our procedure gives more efficient estimates than standard approaches and is capable of selecting the relevant instruments from among a large set of potential instruments.

\newpage
\appendix


\section{Proofs of Theoretical Results.}\label{app:proofs}
\noindent
Throughout this section, we assume that Assumptions \ref{assump:AS} - \ref{assump:tail} hold. 
Define $$T^m:=\cap_{t=0}^{m-1} \{j^t\}$$ as the set of selected variables at the beginning of the   $m^{th}$ iteration.
We require the following lemma for proving our main results. For any set $T\subset\{1,2,...,p\}$, define $\mathcal{P}[X_T]$ as the projection operator
$$X_{T}(X_{T}^\intercal X_{T})^{-1}X_{T}^\intercal.$$

\begin{lemma}[Bounds on Residuals]\label{lemma:bound-res}
Given the Assumptions \ref{assump:AS}-\ref{assump:tail}, for any $T\subset \{1,2,...,p\}$ such that $|T|\leq M_n$, we have:
\begin{align}
    \|\mathcal{P}[X_T]\varepsilon\|_n^2&\leq \frac{1}{c_\phi}|T| \lambda_n^2,\\
    \|\mathcal{P}[X_T](r+\varepsilon)\|^2_n&\leq 2\left({\frac{C_r s\log(2p/\alpha) + 4\sigma^2/c_{\phi}\log(2p/\alpha)|T|}{n}}\right).
\end{align}
\end{lemma}
\begin{proof}
We know that 
\begin{equation}\label{eq:p-ep}
     \|\mathcal{P}[X_T]\varepsilon\|_n^2 = \frac{1}{n}(\varepsilon^\intercal X_T)\left(\frac{X_T^\intercal X_T}{n}\right)^{-1} \frac{1}{n}(X_T^\intercal \varepsilon).
\end{equation}
By Assumption \ref{assump:tail}, we have that 
\begin{equation}\label{eq:X-ep}
    \frac{1}{n}\|X_T^\intercal \varepsilon\|_n^2 = \sum_{j\in T} <X_j,\varepsilon>_n^2\leq |T|\lambda_n^2.  
\end{equation}
By Assumption \ref{assump:SE}, since $|T|\leq M_n$, we have that
\begin{equation}\label{eq:eigen-min}
    \phi_{max}\left(\frac{X_T^\intercal X_T}{n}\right)^{-1}\leq \frac{1}{\phi_{min}\left(\frac{X_T^\intercal X_T}{n}\right)}\leq \frac{1}{c_\phi}.
\end{equation}
Plugging in \eqref{eq:X-ep} and \eqref{eq:eigen-min} to \eqref{eq:p-ep}, we have that:
\begin{equation*}
     \|\mathcal{P}[X_T]\varepsilon\|_n^2\leq \frac{1}{c_\phi}|T| \lambda_n^2.
\end{equation*}
Consequently, by statement (ii) of Assumption \ref{assump:AS}, we have that:
\begin{align*}
    \|\mathcal{P}[X_T](r+\varepsilon)\|^2_n &\leq 2(\|\mathcal{P}[X_T]r\|^2_n+\|\mathcal{P}[X_T]\varepsilon\|^2_n)\\
    &\leq 2(\|r\|_n^2+\|\mathcal{P}[X_T]\varepsilon\|^2_n)\\
    &\leq 2\left(\frac{C_r s\log(2p/\alpha) + 4\sigma^2/c_{\phi}\log(2p/\alpha)|T|}{n}\right).
\end{align*}
\end{proof}
From now on, denote $V^m:=X(\beta-\beta^m)$ as the vector of errors between the true model and the estimated model at period $m$. By definition, it is easy to see that for all $m\geq 0$, we have that:
\begin{equation*}
    U^m = Y-X\beta^m = V^m+r+\varepsilon.
\end{equation*}

\begin{proof}[Proof of Theorem \ref{theo:PBA}]
If the number of selected variables is less than $M_n$, it is easy to see that statement (\ref{eq1pba}) implies statement (\ref{eq2pba}) due to Assumption \ref{assump:SE}.
Therefore, it suffices to prove that (a) statement (\ref{eq1pba}) is true and (b) the number of variables selected in $T^{m^*}$ is less than or equal to $M_n$, which is implied by our assumption that $m^*\leq M_n$. WLOG., we can assume that $ \frac{C_{m^*}s\log(n)}{M} $ is an integer so that $k=\lceil \frac{C_{m^*}s\log(n)}{M}\rceil = \frac{C_{m^*}s\log(n)}{M}$. As a result, $m^*=kM+1 = C_{m^*}s\log(n)+1$.

By assumption, define $m_k:=$ the $k^{th}$ value of $m$ such that  $l_m=1$. Further, define $m_0=0$ and the $k^{th}$ epoch as the periods of $m_{k-1}+1,...,m_k$, for $k\geq 1$.  
At the beginning of the $k^{th}$ epoch, the corresponding vector of residuals $U^{m_{k-1}+1}$ is 
\begin{equation}\label{eq:epoch}
    U^{m_{k-1}+1}:=\left(I-\mathcal{P}[X_{{T}^{m_{k-1}+1}}]\right) Y.
\end{equation}
Recall that $T_0:=supp(\beta)$ is the support of the true parameter $\beta$.
Define $\widetilde{T}^m:=T_0\backslash T^m$ for any $m\geq 0$. Assume that $m_{k-1}+1\leq m_{k}+1\leq m^* \leq M_n$.\\ \\
If $\widetilde{T}^{m_{k-1}+1}=\emptyset$, then all the variables in $T$ are already selected by the algorithm at $m_{k-1}+1$. As a result, $\widetilde{T}^{m^*}$ must also be $\emptyset$.  Consequently, by Lemma \ref{lemma:bound-res}, we have that:
\begin{align}
    \|V^{m^*}\|_n^2 &= \| \mathcal{P}[X_{{T}^{m^*}}](r+\varepsilon)\|_n^2 \nonumber\\ 
    &\leq \frac{(2 C_r s\log(2p/\alpha) + 8 m^* \sigma^2/c_\phi\log(2p/\alpha) ) }{ n} \nonumber \\ 
    &= \frac{(2 C_r\log(2p/\alpha)  + 8 \sigma^2 C_{m^*}/c_\phi\log(2p/\alpha)\log(n)) s}{ n},\label{eq:v-bound}
\end{align}
which concludes the result.\\ \\
Now, suppose that $\widetilde{T}^{m_{k-1}+1}\neq \emptyset$. By construction of Algorithm \ref{algo:iterated-pba}, we have that:
\begin{equation}\label{eq:v-exp}
    V^{m_{k-1}+1} = X\beta-X\beta^{m_{k-1}+1} = (I-\mathcal{P}[X_{{T}^{m_{k-1}+1}}])X\beta - \mathcal{P}[X_{{T}^{m_{k-1}+1}}](r+\varepsilon).
\end{equation}
Define $\widetilde{V}^{m_{k-1}+1} = (I-\mathcal{P}[X_{{T}^{m_{k-1}+1}}])X\beta$, which differs from $V^{{m_{k-1}+1}}$ by $$-\mathcal{P}[X_{{T}^{m_{k-1}+1}}](r+\varepsilon).$$
Note that $\|\widetilde{V}^{m_{k-1}+1}\|_n^2$ is decreasing in $k$ as $T^{m_{k-1}+1}\subset T^{m_{k}+1}$ for all $k\geq 0$. If \begin{equation}\|\widetilde{V}^{m_{k-1}+1}\|_n^2\leq \frac{C_V s\log (2p/\alpha)\log (n)}{n}\end{equation} holds for some generic constant $C_V>1$ and $m_{k-1}+1\leq m^*$, by Lemma \ref{lemma:bound-res}, we have that: 
\begin{align}
    \|V^{m^*}\|_n^2 &= \|\widetilde{V}^{m^*}+\mathcal{P}[X_{{T}^{m_{k-1}+1}}](r+\varepsilon)\|_n^2 \nonumber\\
    &\leq 2\|\widetilde{V}^{m^*}\|_n^2 + 2\|\mathcal{P}[X_{{T}^{m_{k-1}+1}}](r+\varepsilon)\|_n^2 \nonumber\\ 
    &\leq 2\|\widetilde{V}^{{m_{k-1}+1}}\|_n^2 + 2\left(\frac{C_r s\log(2p/\alpha) + 4\sigma^2 m^*/c_{\phi}\log(2p/\alpha)}{n}\right)\nonumber \\
    &\leq \frac{(2 C_r\log(2p/\alpha)  + (2C_V + 8 \sigma^2 C_{m^*}/c_\phi)\log(2p/\alpha)\log(n)) s}{n},\label{eq:C-v-bound}
\end{align}
which concludes our result.\\ \\
From now on, we assume that \begin{equation}\label{eq:V-bound-assump}\|\widetilde{V}^{m_{k-1}+1}\|_n^2 > \frac{C_V s\log (2p/\alpha)\log (n)}{n}\end{equation} for all $k$ such that $ m_{k-1}+1\leq m_{k}+1\leq m^*$. Our strategy is to show that this is impossible if $C_{m^*}$ is large enough and $C_V$ is properly chosen.
By construction of the algorithm, we know that for any $m=m_{k-1}+1,...,m_{k}+1$, we have that \footnote{ The inequality \eqref{eq:U-descend} holds as equality for $m=m_{k-1}+1,...,m_k$ by definition of $\mathtt{pBA}$  steps.}: 
\begin{equation}\label{eq:U-descend}
    \|U^m\|_n^2 \leq   \|U^{m_{k-1}+1}\|_n^2 - \sum_{t=m_{k-1}+1}^{m-1} \gamma_{j^t}^2.
\end{equation}
Define $\delta_{m,k}:=\sum_{t=m_{k-1}+1}^{m-1}\gamma_{j^t}e_{j^t}$ as the change of the parameter $\beta^m - \beta^{m_{k-1}+1}$, and define $T_{m,k}:=\cup_{t=m_{k-1}+1}^{{m-1}}\{j^t\}$. 
For $m=m_{k-1}+1,...,m_k+1$, denote \begin{align*}
    \mathcal{V}^m&:=(I-\mathcal{P}[X_{{T}^{m_{k-1}+1}}])X\beta-\sum_{t=m_{k-1}+1}^{m-1}\gamma_{j^t}X_{j^t} \\ 
    &= (I-\mathcal{P}[X_{{T}^{m_{k-1}+1}}])X_{\widetilde{T}^{m_{k-1}+1}}\beta_{\widetilde{T}^{m_{k-1}+1}} - X_{T_{m,k}}\delta_{m,k},
\end{align*}
with \begin{equation}\label{eq:V-T-1}\mathcal{V}^{m_{k-1}+1} = \widetilde{V}^{m_{k-1}+1}\end{equation}
and \begin{equation}\label{eq:V-T-2}(I-P[X_{T^{m_k+1}}])\mathcal{V}^{m_{k}+1} = \widetilde{V}^{m_{k}+1}.\end{equation}
\noindent
Next, we divide our discussion into two cases, Case ($A_1$) and Case ($A_2$).
\paragraph{\textbf{Case} ($A_1$)} For a fixed $\omega\in (0,1)$ such that \begin{equation}\label{eq:omega-def}
    \omega = \frac{c_\phi^2}{4MC_\phi},
\end{equation} assume that
$\sum_{t=m_{k-1}+1}^{{m_k}} \gamma_{j^t}^2 = \omega_m \|\widetilde{V}^{m_{k-1}+1}\|^2_n$ with $\omega_m\geq \omega$. 

Then, by \eqref{eq:U-descend},  we have that:
\begin{align}
    \|{U}^{m_k+1}\|_n^2 &
    \leq \|{U}^{{m}_{k-1}+1}\|_n^2 - \sum_{t=m_{k-1}+1}^{m_k} \gamma_{j^t}^2 \nonumber \\
    &\leq \|{U}^{{m}_{k-1}+1}\|_n^2-\omega_m \|\widetilde{V}^{m_{k-1}+1}\|_n^2 
   \leq \|{U}^{{m}_{k-1}+1}\|_n^2-\omega \|\widetilde{V}^{m_{k-1}+1}\|_n^2  \label{eq:U-m-k}.
\end{align}

\paragraph{\textbf{Case} ($A_2$)}
Assume $\sum_{t=m_{k-1}+1}^{m_k}\gamma_{j^t}^2 \leq \omega \|\widetilde{V}^{m_{k-1}+1}\|_n^2$. 
Consequently, for any $m=m_{k-1}+1,...,m_k+1$, we have that:
\begin{equation}\label{eq:delta-bound}
    \|\delta_{m,k}\|^2\leq {M} \sum_{t=m_{k-1}+1}^{m_k}\gamma_{j^t}^2 \leq M\omega \|\widetilde{V}^{m_{k-1}+1}\|_n^2\leq M\omega  \|X_{\widetilde{T}^{m_{k-1}+1}}\beta_{\widetilde{T}^{m_{k-1}+1}} \|_n^2. 
\end{equation}
Then, we have:
\begin{align}
    {} & <\mathcal{V}^m, X_{\widetilde{T}^{m_{k-1}+1}}\beta_{\widetilde{T}^{m_{k-1}+1}}>_n \nonumber
     \\
    {} & = < (I-\mathcal{P}[{X_{{T}^{m_{k-1}+1}}}])X_{\widetilde{T}^{m_{k-1}+1}}\beta_{\widetilde{T}^{m_{k-1}+1}} - X_{T_{m,k}}\delta_{m,k}, X_{\widetilde{T}^{m_{k-1}+1}}\beta_{\widetilde{T}^{m_{k-1}+1}}>_n \nonumber\\
    &\geq c_\phi\|X_{\widetilde{T}^{m_{k-1}+1}}\beta_{\widetilde{T}^{m_{k-1}+1}}\|^2_n - C_\phi^\frac{1}{2}\|X_{\widetilde{T}^{m_{k-1}+1}}\beta_{\widetilde{T}^{m_{k-1}+1}}\|_n^2 \| \delta_{m,k}\|\label{eq:cor-1}\\
    &\geq c_\phi\left(1- \frac{\sqrt{M\omega C_\phi}}{c_\phi}\right)\|X_{\widetilde{T}^{m_{k-1}+1}} \beta_{\widetilde{T}^{m_{k-1}+1}}\|_n^2 = \frac{c_\phi}{2}\|X_{\widetilde{T}^{m_{k-1}+1}} \beta_{\widetilde{T}^{m_{k-1}+1}}\|_n^2,\label{eq:cor-lowerbound}
\end{align}
where \eqref{eq:cor-1} follows from Assumption \ref{assump:SE} and \eqref{eq:cor-lowerbound} follows from \eqref{eq:delta-bound} and \eqref{eq:omega-def}.\\ \\
Therefore by definition of $\gamma_{j^m}$, for $m=m_{k-1}+1,...,m_k+1$, we have:
\begin{align}
    |\gamma_{j^m}| \|\beta_{\widetilde{T}^{m_{k-1}+1}} \|_1
    &\geq\ <U^m, X_{\widetilde{T}^{m_{k-1}+1}}\beta_{\widetilde{T}^{m_{k-1}+1}}>_n \nonumber
    \\
    &=\ <\mathcal{V}^m, X_{\widetilde{T}^{m_{k-1}+1}}\beta_{\widetilde{T}^{m_{k-1}+1}}>_n \nonumber\\
    & + <(I-\mathcal{P}[{X_{{T}^{m_{k-1}+1}}}]) (r+\varepsilon), X_{\widetilde{T}^{m_{k-1}+1}}\beta_{\widetilde{T}^{m_{k-1}+1}}>_n \nonumber\\
    &\geq \frac{c_\phi}{2}\|X_{\widetilde{T}^{m_{k-1}+1}} \beta_{\widetilde{T}^{m_{k-1}+1}}\|_n^2 - \sqrt{\frac{C_r s\log(2p/\alpha)}{n}}\|X_{\widetilde{T}^{m_{k-1}+1}}\beta_{\widetilde{T}^{m_{k-1}+1}}\|_n  \nonumber
    \\ & + <\varepsilon,X_{\widetilde{T}^{m_{k-1}+1}}\beta_{\widetilde{T}^{m_{k-1}+1}}>_n - < \mathcal{P}[{X_{{T}^{m_{k-1}+1}}}]\varepsilon,X_{\widetilde{T}^{m_{k-1}+1}}\beta_{\widetilde{T}^{m_{k-1}+1}}>_n \nonumber\\
    & \geq \left(\frac{c_\phi}{2}-\sqrt{\frac{C_r }{C_V\log(n)}}\right)\|X_{\widetilde{T}^{m_{k-1}+1}} \beta_{\widetilde{T}^{m_{k-1}+1}}\|_n^2 \nonumber \\
    & - \|\varepsilon^\intercal X_{\widetilde{T}^{m_{k-1}+1}}  \|_n  \|\beta_{\widetilde{T}^{m_{k-1}+1}}\| \label{eq:gamma-1}\\ & - \sqrt{\frac{{C_{m^*}s\log(n)}}{c_\phi}}\lambda_n \| X_{\widetilde{T}^{m_{k-1}+1}} \beta_{\widetilde{T}^{m_{k-1}+1}}\|_n \label{eq:gamma-2} \\
    {}&  \geq \left(\frac{c_\phi}{2}-\sqrt{\frac{C_r }{C_V\log(n)}} - 2\sigma \sqrt{\frac{C_{m^*}}{C_V c_\phi}}\right)\|X_{\widetilde{T}^{m_{k-1}+1}} \beta_{\widetilde{T}^{m_{k-1}+1}}\|_n^2 \label{eq:gamma-3}\\
    {}& - \sqrt{s}\lambda_n \sqrt{\frac{1}{c_{\phi}}} \|X_{\widetilde{T}^{m_{k-1}+1}} \beta_{\widetilde{T}^{m_{k-1}+1}}\|_n\label{eq:gamma-4}\\
    {} & \geq \kappa_{n,1} \|X_{\widetilde{T}^{m_{k-1}+1}} \beta_{\widetilde{T}^{m_{k-1}+1}}\|_n^2,\label{eq:kappa-n-1}
\end{align}
with 
\begin{equation}
    \kappa_{n,1} := \frac{c_\phi}{2}-\sqrt{\frac{C_r  }{C_V\log
    (n)}}  - 2\sigma \sqrt{\frac{C_{m^*}}{C_V c_\phi}}- 2\sigma\sqrt{\frac{1}{C_V c_\phi \log(n)}},
\end{equation}
where \eqref{eq:gamma-1} holds by Cauchy-Schwarz inequality, \eqref{eq:gamma-2} holds by Lemma \ref{lemma:bound-res} and \eqref{eq:C-v-bound}, \eqref{eq:gamma-3} holds by \eqref{eq:C-v-bound}, \eqref{eq:gamma-4} holds by Assumption \ref{assump:SE}, Assumption \ref{assump:tail} and $|\widetilde{T}^{m_{k-1}+1}|\leq s$, and \eqref{eq:kappa-n-1} holds by \eqref{eq:C-v-bound}.\\ \\
For fixed $C_V > \frac{64\sigma^2 C_{m^*}}{c_\phi^3}$ and $n$ large enough, we have that $\kappa_{n,1}>\frac{c_\phi}{4}$. Therefore, 
\begin{align}
    |\gamma_{j^m}|&\geq \frac{c_\phi}{4}\frac{\|X_{\widetilde{T}^{m_{k-1}+1}} \beta_{\widetilde{T}^{m_{k-1}+1}}\|_n^2}{\|\beta_{\widetilde{T}^{m_{k-1}+1}} \|_1}\\
    &\geq \frac{c_\phi}{4}\frac{\|X_{\widetilde{T}^{m_{k-1}+1}} \beta_{\widetilde{T}^{m_{k-1}+1}}\|_n^2}{\sqrt{s}\|\beta_{\widetilde{T}^{m_{k-1}+1}} \|}\\
    &\geq \frac{c_\phi}{4\sqrt{s C_\phi}} \|X_{\widetilde{T}^{m_{k-1}+1}} \beta_{\widetilde{T}^{m_{k-1}+1}}\|_n,
\end{align}
where the last inequality follows from Assumption \ref{assump:SE}.
Therefore, we have that:
\begin{align}
\sum_{m=m_{k-1}+1}^{m_k} |\gamma_{j^m}|^2
&\geq \frac{M c_\phi^2}{16C_\phi s} \|X_{\widetilde{T}^{m_{k-1}+1}} \beta_{\widetilde{T}^{m_{k-1}+1}}\|_n^2\nonumber\\
& \geq \frac{M c_\phi^2}{16C_\phi s}\|\widetilde{V}_{T^{m_{k-1}+1}}\|_n^2.
\end{align}
Consequently, we have
\begin{align}
    \|{U}^{m_{k}+1}\|_n^2 &\leq \|{U}^{m_{k-1}+1}\|_n^2 - \sum_{t=m_{k-1}+1}^m |\gamma_{j^t}|^2 \nonumber \\
    &\leq \|{U}^{m_{k-1}+1}\|_n^2 - \frac{M c_\phi^2}{16C_\phi s}\|\widetilde{V}_{T^{m_{k-1}+1}}\|_n^2. \label{eq:exp-decay-2}
\end{align}
Combining \textbf{Case ($A_1$)} and \textbf{Case ($A_2$)}, based on \eqref{eq:U-m-k} and \eqref{eq:exp-decay-2}, we have that:
\begin{equation}\label{eq:exp-decay}
    \|\widetilde{U}^{m_{k}+1}\|_n^2 \leq  \|\widetilde{U}^{m_{k-1}+1}\|_n^2 - \kappa_{n,k} \|\widetilde{V}^{m_{k-1}+1}\|_n^2,
\end{equation}
where $\kappa_{n,k} := \min\left(\frac{c_\phi^2}{4MC_\phi}, \frac{M c_\phi^2}{16C_\phi s} \right)>0$. By assumption that $M\leq K_M\sqrt{s}$, we have that: 
\begin{equation}
    \kappa_{n,k} \geq \kappa_2 \frac{M}{s},
\end{equation}
where $\kappa_2=\min\left(\frac{c_\phi^2}{4 K_M^2  C_\phi },\frac{c_\phi^2}{16  C_\phi }\right)$ is a generic positive constant.\\ \\
Define $q:=\lceil \frac{s}{M}\rceil$ as the smallest integer $\geq \frac{s}{M}$. WLOG., we can assume that $q= \frac{s}{M}$ for simplicity, i.e., $\frac{s}{M}$ is an integer. By applying \eqref{eq:exp-decay} to $l=k+1,k+2,...,k+q$, we have that:
\begin{equation}\label{eq:s-decay}
    \|U^{m_{k+q}+1}\|_n^2\leq \|U^{m_{k}+1} \|_n^2 -\kappa_2 \frac{M}{s} \sum_{l=k}^{k+q-1}\|\widetilde{V}^{m_{l}+1}\|_n^2. 
\end{equation}
Note that $\|\widetilde{V}^{m_{l}+1}\|_n^2 = \|(I-P[X_{T^{m_l+1}}])X\beta\|_n^2$ is a decreasing sequence in $l$.\\ \\
We divide our analysis into two Cases: \textbf{Case} $(B_1)$ and \textbf{Case} $(B_2)$.
\paragraph{\textbf{Case} $(B_1)$} 
\begin{equation}\label{eq:V-bound1}
\|\widetilde{V}^{m_k+q}\| > \frac{1}{2} \| \widetilde{V}^{{m_k+1}}\|.
\end{equation}
Then, \eqref{eq:V-bound1} and \eqref{eq:s-decay} implies that: 
\begin{align}
    \|U^{m_{k+q}+1}\|_n^2&\leq \|U^{m_{k}+1} \|_n^2 -\kappa_2 \frac{M}{s} \sum_{l=k}^{k+q-1}\|\widetilde{V}^{m_{l}+1}\|_n^2 \nonumber\\
    &\leq \|U^{m_{k}+1} \|_n^2-\kappa_2 \frac{M}{s} q \frac{1}{2}\|\widetilde{V}^{m_{k}+1}\|_n^2 \nonumber\\
    &=\|U^{m_{k}+1} \|_n^2-\frac{\kappa_2}{2}\|\widetilde{V}^{m_{k}+1}\|_n^2.\label{eq:U-decay-1}
\end{align}
Recall that for any positive integer $l$,
\begin{equation*}
    U^{m_l+1} = \widetilde{V}^{m_l+1} + (I-P[X_{T^{m_l+1}}])(r+\varepsilon).
\end{equation*}
Consequently, we have:
\begin{align}
    &\quad\|U^{m_l+1}\|_n^2 - \|\widetilde{V}^{m_l+1}\|_n^2 - \|(r+\varepsilon)\|_n^2 \nonumber\\
    &= 2<\widetilde{V}^{m_l+1}, (I-\mathcal{P}[{X_{T^{m_l+1}}}](r+\varepsilon))>_n - \|\mathcal{P}[{X_{T^{m_l+1}}}] (r+\varepsilon)\|_n^2.\label{eq:uv-expand}
\end{align}
Plugging in \eqref{eq:uv-expand} with $l=k$ and $l=k+q$ in \eqref{eq:U-decay-1}, we have that:
\begin{align}
    \|\widetilde{V}^{m_{k+q}+1}\|_n^2 &\leq \left(1-\frac{\kappa_2}{2}\right)\|\widetilde{V}^{m_{k}+1}\|_n^2
    \nonumber \\
    &+ 2\underbrace{<(\widetilde{V}^{m_{k}+1}, (I-\mathcal{P}[{X_{T^{m_k+1}}}])(r+\varepsilon)>_n}_{=:\Psi_1} \nonumber\\ 
    &- 2\underbrace{<(\widetilde{V}^{m_{k+q}+1}, (I-\mathcal{P}[{X_{T^{m_{k+q}+1}}}])(r+\varepsilon)>_n}_{=:\Psi_2}\nonumber\\
    &+ \underbrace{\|\mathcal{P}[{X_{T^{m_{k+q}+1}}}](r+\varepsilon)\|_n^2 - \|\mathcal{P}[{X_{T^{m_{k}+1}}}](r+\varepsilon)\|_n^2}_{=:\Psi_3}.\label{eq:V-decay-main}
\end{align}
Next, we establish bounds on $\Psi_1,\Psi_2$ and $\Psi_3$. Note that $\Psi_1$ and $\Psi_2$ only differ by the indices $k$ and $k+q$. It is sufficient to bound $\Psi_1$, then $\Psi_2$ will obey the same bound.\\ \\
Note that $\widetilde{V}^{m_{k}+1}=(I-\mathcal{P}[{X_{T^{m_{k}+1}}}]) X_{\widetilde{T}^{m_{k}+1}}\beta_{\widetilde{T}^{m_{k}+1}}$. By Assumption \ref{assump:SE}, we have:
\begin{equation}\label{eq:project-bound}
    \|\widetilde{V}^{m_{k}+1}\|_n^2 \geq c_\phi \|\beta_{\widetilde{T}^{m_{k}+1}}\|^2 \geq \frac{c_\phi}{C_\phi}\|X_{\widetilde{T}^{m_{k}+1}}\beta_{\widetilde{T}^{m_{k}+1}}\|_n^2.
\end{equation}
Therefore, 
\begin{align}
    |\Psi_1| &=|<\widetilde{V}^{m_{k}+1}, (I-\mathcal{P}[{X_{T^{m_{k}+1}}}])(r+\varepsilon)>_n| \nonumber \\
    &= \left|\frac{1}{n}(r+\varepsilon)^\intercal (I-\mathcal{P}[{X_{T^{m_{k}+1}}}]) X_{\widetilde{T}^{m_{k}+1}}\beta_{\widetilde{T}^{m_{k}+1}}\right|\nonumber\\
    &\leq \sqrt{\frac{C_r s\log(2p/\alpha)}{n}}\|\widetilde{V}^{m_{k}+1}\|_n + \left|\frac{1}{n}\varepsilon^\intercal X_{\widetilde{T}^{m_{k}+1}} \beta_{\widetilde{T}^{m_{k}+1}}\right| + \left| \frac{1}{n}\varepsilon^\intercal \mathcal{P}[{X_{T^{m_{k}+1}}}] X_{\widetilde{T}^{m_{k}+1}} \beta_{\widetilde{T}^{m_{k}+1}}\right| \nonumber \\
    &\leq \sqrt{\frac{C_r s\log(2p/\alpha)}{n}}\|\widetilde{V}^{m_{k}+1}\|_n + \sqrt{|\widetilde{T}^{m_{k}+1}|}\lambda_n \|\beta_{\widetilde{T}^{m_{k}+1}}\| + \sqrt{\frac{|\widetilde{T}^{m_{k}+1}|}{c_\phi}}\lambda_n \|X_{\widetilde{T}^{m_{k}+1}} \beta_{\widetilde{T}^{m_{k}+1}} \|_n \nonumber\\
    &\leq \kappa_{n,3}\sqrt{\frac{s\log(n)\log(2p/\alpha)}{n}}\|\widetilde{V}^{m_{k+q}+1}\|_n,
\end{align}
where the third last inequality follows from statement (ii) of Assumption \ref{assump:AS}, the second last inequality follows from Lemma \ref{lemma:bound-res}, and the last inequality follows from \eqref{eq:project-bound}, and 
\begin{equation}
    \kappa_{n,3} = \sqrt{\frac{C_r}{\log(n)}}+2\sigma \sqrt{C_{m^*}}\left(\sqrt{\frac{1}{c_\phi}}+\sqrt{\frac{C_{\phi}}{c_{\phi}^2}}\right)\label{eq:bound-res-V}
\end{equation}
is a constant bounded by $\kappa_3:= 3\sigma \sqrt{C_{m^*}}\left(\sqrt{\frac{1}{c_\phi}}+\sqrt{\frac{C_{\phi}}{c_{\phi}^2}}\right)$ for $n$ large enough.\\ \\
Note that we assume $\|\widetilde{V}^{m_{k}+1}\|_n^2 > \frac{C_V s\log(n)\log(2p/\alpha)}{n}$. Hence, \eqref{eq:bound-res-V} implies that
\begin{equation}\label{eq:psi-1-bound}
    |\Psi_1|\leq \frac{\kappa_3}{C_V^\frac{1}{2}}\|\widetilde{V}^{m_{k}+1}\|_n^2.
\end{equation}
Similarly, we have
\begin{equation}\label{eq:psi-2-bound}
    |\Psi_2|\leq \frac{\kappa_3}{C_V^\frac{1}{2}}\|\widetilde{V}^{m_{k+q}+1}\|_n^2\leq \frac{\kappa_3}{C_V^\frac{1}{2}}\|\widetilde{V}^{m_{k}+1}\|_n^2.
\end{equation}
Note the $T^{m_{k+q}+1}\supset T^{m_k+1}$, so we have that
\begin{equation}\label{eq:psi-3-bound}
    \Psi_3\leq 0.
\end{equation}
Plugging in \eqref{eq:psi-1-bound}, \eqref{eq:psi-2-bound} and \eqref{eq:psi-3-bound} in \eqref{eq:V-decay-main}, we have that:
\begin{align}
   \|\widetilde{V}^{m_{k+q}+1}\|_n^2&\leq \left(1-\frac{\kappa_2}{2} +  2\frac{\kappa_3}{C_V^\frac{1}{2}}\right)\|\widetilde{V}^{m_{k}+1}\|_n^2 \nonumber\\
   &\leq \left(1-\frac{\kappa_2}{4}\right)\|\widetilde{V}^{m_{k}+1}\|_n^2,\label{eq:V-decay-B-1}
\end{align}
given that $C_V$ is chosen such that $C_V\geq \frac{64 \kappa_3^2}{ \kappa_2^2}$. 
\paragraph{\textbf{Case} $(B_2)$} \begin{equation}\label{eq:V-decay-B-2}
\|\widetilde{V}^{m_{k+q}+1}\|\leq \frac{1}{2} \| \widetilde{V}^{m_k+1}\|.
\end{equation}
Therefore, combining \textbf{Case $(B_1)$} and \textbf{Case $(B_2)$}, we have that:
\begin{equation}\label{eq:V-bound-final}
    \|\widetilde{V}^{m_{k+q}+1}\|_n^2\leq (1-\zeta)\|\widetilde{V}^{m_{k}+1}\|_n^2,
\end{equation}
where $\zeta = \min(\frac{1}{2},\frac{\kappa_2}{4})\in (0,1)$ is a generic constant.
Following \eqref{eq:V-bound-final}, for all $k$ such that $m_{kq}+1\leq m^*$, we have that:
\begin{gather}
    \|\widetilde{V}^{m_{kq}+1}\|_n^2 \leq (1-\zeta )^k \|\widetilde{V}^0\|_n^2.\label{eq:V-k}
\end{gather}
Since $m_{kq}+1 = k M q+1 = k s+ 1$, let $k_0 = \lceil\frac{K}{\zeta}\rceil \log(n)\leq C_{m^*}\log(n) -\log(n)$, i.e., 
\begin{equation*}
    C_{m^*}\geq 1 + \left\lceil\frac{K\log(n)}{\min\left(\frac{1}{2}, \frac{c_\phi^2}{16 C_\phi K_M^2},\frac{c_\phi^2}{64 C_\phi }\right)}\right\rceil,
\end{equation*} we have that $m_{k_0 q}+1\leq m^*$. Replace $k$ with $k_0$ in \eqref{eq:V-k}, by statement (iii) of Assumption \ref{assump:AS}, we have that for $n$ large enough:
\begin{align}
    \|\widetilde{V}^{m_{k_0 q}+1}\|_n^2 &\leq (1-\zeta )^{k_0} \|\widetilde{V}^0\|_n^2\nonumber \\
   & \leq ((1-\zeta)^\frac{1}{\zeta})^{K \log(n)} n^{K-1}\nonumber \\
    &\leq \exp(-K\log(n))n^{K-1} = \frac{1}{n}\nonumber \\
    &< \frac{C_V s\log(n)\log(2p/\alpha)}{n},\nonumber
\end{align}
which contradicts with the assumption stated in \eqref{eq:V-bound-assump}. It implies that either \eqref{eq:v-bound} or \eqref{eq:C-v-bound} must hold. Therefore, we have the conclusion.
\end{proof}

\begin{proof}[Proof of Corollary \ref{coro:pba}]
By construction of Algorithm \ref{algo:iterated-pba}, we have that $$\|U^{m+1}\|_n^2\leq \|U^m\|_n^2 -\gamma_{j^m}^2,$$ for all non-negative integers $m$. Therefore, \begin{equation}
    \frac{\|U^{m+1}\|_n^2}{\|U^{m}\|_n^2}\leq 1-\frac{\gamma_{j^m}^2}{\|U^{m}\|_n^2}.
\end{equation}
It implies that the algorithm will not stop if 
\begin{equation}\label{eq:gamma-lower-1}
    \frac{\gamma_{j^m}^2}{\|U^{m}\|_n^2}\geq 4\zeta_0 \frac{\log(2p/\alpha)}{n}.
\end{equation}
Suppose the algorithm does not stop before $M':= C' s\log(n)$ for some $C'\leq C_M$ but $C'>C_{m^*}$. By Theorem \ref{theo:PBA}, we know that at some  $m_k+1\leq m^*$, we have that 
\begin{equation}\label{eq:V-bound-up}
    \|\widetilde{V}^{m_k+1}\|_n^2\leq \frac{C_V s\log(n)\log(2p/\alpha)}{n}.
\end{equation}
Since $\|\widetilde{V}^{m_l+1}\|_n^2$ is decreasing in $l$, \eqref{eq:V-bound-up} holds for all $l\geq k$ when $k$ is replaced by $l$.
For any $m_l+1$ such that $k\leq l\leq \frac{C's\log(n)}{M}$, we have that:
\begin{align}
    &\quad|\|U^{m_l+1}\|_n^2- \|r+\varepsilon\|_n^2|\\
    &= |\|\widetilde{V}^{m_l+1}+(I-\mathcal{P}[{X_{T^{m_l+1}}}])(r+\varepsilon)\|_n^2 -\|r+\varepsilon\|_n^2|  \nonumber\\ 
    &=  2<\widetilde{V}^{m_l+1} - \mathcal{P}[{X_{T^{m_l+1}}}](r+\varepsilon), r+\varepsilon >_n+\|\widetilde{V}^{m_l+1} - \mathcal{P}[{X_{T^{m_l+1}}}](r+\varepsilon)\|_n^2\nonumber\\
    &\leq 2<X_{\widetilde{T}^{m_l+1}}\beta_{\widetilde{T}^{m_l+1}}, r+\varepsilon >_n \nonumber \\ 
    &\quad- 2\| \mathcal{P}[{X_{T^{m_l+1}}}](r+\varepsilon)\|_n^2 + 2\|\widetilde{V}^{m_l+1}\|_n^2 +  2\|\mathcal{P}[{X_{T^{m_l+1}}}](r+\varepsilon)\|_n^2 \nonumber\\
    & = 2 <X_{\widetilde{T}^{m_l+1}}\beta_{\widetilde{T}^{m_l+1}}, r >_n+ <X_{\widetilde{T}^{m_l+1}}\beta_{\widetilde{T}^{m_l+1}}, \varepsilon >_n + 2\|\widetilde{V}^{m_l+1}\|_n^2\\
    &\leq 2\|\widetilde{V}^{m_l+1}\|_n \|r\|_n + 2\sqrt{|T^{m_l+1}|}\lambda_n \|\beta_{\widetilde{T}^{m_l+1}}\| + 2\|\widetilde{V}^{m_l+1}\|_n^2 \nonumber\\ 
    &\leq \kappa_{n,4}\frac{s\log(2p/\alpha)\log(n)}{n},\label{eq:U-lower-bound}
\end{align}
 where the second last inequality follows from Assumption \ref{assump:AS}, Assumption \ref{assump:tail}, and Lemma \ref{lemma:bound-res}, the last inequality follows from Assumption \ref{assump:SE} and \eqref{eq:V-bound-up}, and 
\begin{equation}
    \kappa_{n,4} := 2\left(\sqrt{\frac{C_r C_V}{\log(n)}}+ \sqrt{\frac{C_\phi C_V C'}{c_\phi}}+C_V \right),
\end{equation}
is a constant bounded by \begin{equation}
    \kappa_4=2+{2\sqrt{\frac{C_\phi C_V C'}{c_\phi}}}+2C_V,
\end{equation}
for $n$ large enough. Hence, given the growth condition that $\frac{s\log(2p/\alpha)\log(n)}{n}\rightarrow 0$, for $n$ large enough, we have that 
\begin{align*}
    \|U^{m_l+1}\|_n^2 & \geq \|r+\varepsilon\|_n^2 - \kappa_{n,4}\frac{s\log(2p/\alpha)\log(n)}{n}\\ & \geq \|\varepsilon\|_n^2 - 2<r,\varepsilon>_n+\|r\|_n - \kappa_{n,4}\frac{s\log(2p/\alpha)\log(n)}{n} \\ & \geq  \|\varepsilon\|_n^2 - 2\|r\|_n\|\varepsilon\|_n - \kappa_{n,4}\frac{s\log(2p/\alpha)\log(n)}{n}\\
    & \geq (1-\eta)\sigma^2-2\sqrt{\frac{C_r\log(2p/\alpha)}{n}}\sqrt{(1+\eta)\sigma^2} - \kappa_4 \frac{s\log(2p/\alpha)\log(n)}{n}\\
    & \geq (1-2\eta)\sigma^2,
\end{align*}
where $\eta>0$ is a small positive constant defined in Assumption \ref{assump:tail} and the second last inequality follows from Assumption \ref{assump:tail}.
Therefore, for any $m\leq m_l+1$, and for $n$ large enough, we have that:
\begin{equation*}
    \|U^m\|_n^2\geq \|U^{m_k+1}\|_n^2 \geq (1-2\eta){\sigma}^2.
\end{equation*}
Consequently, \eqref{eq:gamma-lower-1} implies that:
\begin{equation}
    \gamma_{j^m}^2\geq 4\zeta_0 (1-2\eta)\sigma^2 \frac{\log(2p/\alpha)}{n}.
\end{equation}
As a result, assume that $m_l-m_k = M\lfloor \frac{{(C'-C_{m^*})} s \log(n)}{M}\rfloor$ such that $m_l\leq C's\log(n)$. WLOG., assume that $ \frac{{(C'-C_{m^*})} s \log(n)}{M}$ is an integer so that $m_l-m_k = {(C'-C_{m^*})} s \log(n)$. On the one hand, we have that: 
\begin{align}
    \|U^{m_l+1}\|_n^2&\leq \|U^{m_k+1}\|_n^2 -\sum_{t=m_k+1}^{m_l} \gamma_{j^t}^2 \nonumber\\
    &\leq \|U^{m_k+1}\|_n^2 - (m_l-m_k)4\zeta_0 (1-2\eta)\sigma^2 \frac{\log(2p/\alpha)}{n}\nonumber\nonumber\\
    &\leq \|U^{m_k+1}\|_n^2 - \zeta (1-2\eta)\sigma^2 {(C'-C_{m^*})} s \log(n)  \frac{\log(2p/\alpha)}{n}.\label{eq:U-diff-1}
\end{align}
On the other hand, for $n$ large enough, \eqref{eq:U-lower-bound} suggests that:
\begin{align}
    |\|U^{m_l+1}\|_n^2 - \|U^{m_k+1}\|_n^2|&\leq |\|U^{m_l+1}\|_n^2-\|(r+\varepsilon)\|_n^2| + |\|U^{m_k+1}\|_n^2-\|(r+\varepsilon)\|_n^2| \nonumber\\ 
    &\leq 2\kappa_{n,4}\frac{s\log(2p/\alpha)\log(n)}{n}\leq 2\kappa_4 \frac{s\log(2p/\alpha)\log(n)}{n}.\label{eq:U-diff-2}
\end{align}
\eqref{eq:U-diff-1} and \eqref{eq:U-diff-2} imply that 
\begin{equation*}
    2\kappa_{4} \geq 4\zeta_0(1-2\eta)\sigma^2  ({C'-C_{m^*}}), 
\end{equation*}
or equivalently, 
\begin{equation}
    4+4\sqrt{\frac{C_\phi C_V C'}{c_\phi}}+4C_V - 4\zeta_0(1-2\eta)\sigma^2  ({C'-C_{m^*}})\geq 0.
\end{equation}
This implies that
\begin{equation}\label{eq:def-C-prime}
    C'\leq \bar{M}:= \left(\frac{\sqrt{\frac{C_\phi C_V}{c_\phi}}+\sqrt{\frac{C_\phi C_V}{c_\phi}+ 4\zeta_0 (1-2\eta) \sigma^2 (1+C_V+\zeta_0(1-2\eta)\sigma^2 C_{m^*})}}{2\zeta_0(1-2\eta)\sigma^2} \right)^2.
\end{equation}
Therefore, if $C'>\bar{M} $, for $n$ large enough, the algorithm must stop before $C's\log(n)$, i.e., $M_A^*\precsim s\log(n)$, $M_A^*\leq M_n$ by assumption that $C'\leq C_M$.\\ \\
Now, we show that $\|X(\beta^{M_A^*}-\beta)\|_n^2\precsim \frac{s\log(2p/\alpha)\log(n)}{n}$. A similar bound will hold for $\|(\beta^{M_A^*}-\beta)\|^2$, due to  Assumption \ref{assump:SE}.
When the algorithm stops, then it must hold that
\begin{equation}\label{eq:gamma-A-upper}
    \frac{|\gamma_{j^{M_A^*}}|^2}{\|U^{M_A^*}\|_n^2} < 4\zeta_0\frac{\log(2p/\alpha)}{n}.
\end{equation}
For $n$ large enough, we know that $M_A^*\leq C's\log(n) <  M_n$ from analysis in the above. Therefore, there exists $k$ such that 
$M_A^*\in \{m_{k-1}+1,...,m_k\}$.
In the following, we use the notation from the proof of Theorem \ref{theo:PBA}. For any $m \in \{m_{k-1}+1,...,m_k\}$, for $n$ large enough, we have that:
\begin{align}
    \|U^m\|_n^2 &= \|\mathcal{V}^m+(I-\mathcal{P}[{X_{T^{m_{k-1}+1}}}])(r+\varepsilon)\|_n^2 \nonumber\\
    &\leq 2\|\mathcal{V}^m\|_n^2 + 2\|(I-\mathcal{P}[{X_{T^{m_{k-1}+1}}}])(r+\varepsilon)\|_n^2 \nonumber 
    \\ 
    &\leq 2\|\mathcal{V}^m\|_n^2+ 4 \left(\frac{C_r s\log(2p/\alpha)}{n} + \hat\sigma^2\right)\nonumber\\
    &\leq 2\|\mathcal{V}^m\|_n^2+ 4 (1+2\eta)\sigma^2,\label{eq:U-upper-gamma}
\end{align}
where the last inequality follows from Assumption \ref{assump:tail} and $\frac{s\log(p)}{n}\rightarrow 0$.
Therefore, \eqref{eq:gamma-A-upper} and \eqref{eq:U-upper-gamma} implies that:
\begin{equation}\label{eq:gamma-upper-assump}
    |\gamma_{j^{M_A^*}}|^2 \leq 8\zeta_0\frac{\log(2p/\alpha)}{n}\|\mathcal{V}^{M_A^*}\|_n^2 +{16\zeta_0}(1+2\eta)\sigma^2 \frac{\log(2p/\alpha)}{n}.
\end{equation}
Assume that $\|\mathcal{V}^{M_A^*}\|>\frac{C_V s\log(2p/\alpha)\log(n)}{n}$ for some fixed  constant $C_V$.  Otherwise we already have that $\|X(\beta-\beta^{M_A^*})\|_n^2 = \|\mathcal{V}^m\|_n^2\leq \frac{C_V s\log(2p/\alpha)\log(n)}{n}$, which provides the result.\\
For $C_V$ large enough, by Theorem \ref{theo:PBA}, we know that $M_A^*\leq C_m^* s\log(n)$. Recall that
\begin{equation}
    \mathcal{V}^{M_A^*}= (I-\mathcal{P}[{X_{T^{m_{k-1}+1}}}])X_{\widetilde{T}^{m_{k-1}+1}}\beta_{\widetilde{T}^{m_{k-1}+1}} - X_{T_{M_{A}^*,k}}\delta_{M_{A}^*,k}.
\end{equation}
Note that $\mathcal{V}^m$ can be rewritten as: 
\begin{align}
    &\quad(I-\mathcal{P}[{X_{T^{m_{k-1}+1}}}])X_{\widetilde{T}^{m_{k-1}+1}}\beta_{\widetilde{T}^{m_{k-1}+1}} - X_{T_{M_{A}^*,k}}\delta_{M_{A}^*,k} \\ 
    &= X_{\widetilde{T}^{m_{k-1}+1}}\beta_{\widetilde{T}^{m_{k-1}+1}} - \mathcal{P}[{X_{T^{m_{k-1}+1}}}] X_{\widetilde{T}^{m_{k-1}+1}}\beta_{\widetilde{T}^{m_{k-1}+1}}- X_{T_{M_{A}^*,k}}\delta_{M_{A}^*,k}\\
    &= X_{\hat{T}^{M_A^*}}\xi_{\hat{T}^{M_A^*}},
\end{align}
where $\xi_{\hat{T}^{M_A^*}}$ is a vector of dimension $|\hat{T}^{M_A^*}|$ with $\hat{T}^{M_A^*}:=T^{m_{k-1}+1} \cup  T_{M_A^*,k}\cup   \widetilde{T}^{m_{k-1}+1}$ that has cardinality 
$$|\hat{T}^{M_A^*}| \leq s+m_{k-1}+M < (C_{m^*}+1) s \log(n)\leq M_n.$$
As a result, by optimality of $\gamma_{j^{M_A^*}}$, we have that:
\begin{align}
    |\gamma_{j^{M_A^*}}| \|\xi_{\hat{T}^{M_A^*}}\|_1 
    &\geq <U^{M_A^*},  X_{\hat{T}^{M_A^*}}\xi_{\hat{T}^{M_A^*}}>_n \nonumber\\
    &\geq <X_{\hat{T}^{M_A^*}}\xi_{\hat{T}^{M_A^*}} ,X_{\hat{T}^{M_A^*}}\xi_{\hat{T}^{M_A^*}} >_n \nonumber\\
    &\quad + <X_{\hat{T}^{M_A^*}}\xi_{\hat{T}^{M_A^*}},(I-\mathcal{P}[{X_{T^{m_{k-1}+1}}}])(r+\varepsilon)>_n \nonumber\\
    &\geq \|X_{\hat{T}^{M_A^*}}\xi_{\hat{T}^{M_A^*}}\|_n^2 - \|X_{\hat{T}^{M_A^*}}\xi_{\hat{T}^{M_A^*}} \|_n \| (I-\mathcal{P}[{X_{T^{m_{k-1}+1}}}]) r \|_n \nonumber\\ 
    &\quad + < X_{\hat{T}^{M_A^*}}\xi_{\hat{T}^{M_A^*}}, \varepsilon>_n - < X_{\hat{T}^{M_A^*}}\xi_{\hat{T}^{M_A^*}}, \mathcal{P}[{X_{T^{m_{k-1}+1}}}]\varepsilon>_n \nonumber\\
    & \geq \|X_{\hat{T}^{M_A^*}}\xi_{\hat{T}^{M_A^*}}\|_n^2 - \|X_{\hat{T}^{M_A^*}}\xi_{\hat{T}^{M_A^*}} \|_n \| r \|_n \nonumber \\
    & \quad - \frac{1}{n}\|X_{\hat{T}^{M_A^*}}^\intercal \varepsilon\| \|\xi_{\hat{T}^{M_A^*}}\| - \| X_{\hat{T}^{M_A^*}}\xi_{\hat{T}^{M_A^*}}\|_n \|\mathcal{P}[{X_{T^{m_{k-1}+1}}}]\varepsilon\|_n \nonumber \\
    &\geq \|X_{\hat{T}^{M_A^*}}\xi_{\hat{T}^{M_A^*}}\|_n^2 - \|X_{\hat{T}^{M_A^*}}\xi_{\hat{T}^{M_A^*}} \|_n \| r \|_n \nonumber \\
    &\quad - \sqrt{|\hat{T}^{M_A^*}|}\lambda_n \| \xi_{\hat{T}^{M_A^*}}\| - \sqrt{\frac{|T^{m_{k-1}+1}|}{c_\phi}}\lambda_n \| X_{\hat{T}^{M_A^*}}\xi_{\hat{T}^{M_A^*}}\|_n \nonumber\\
    &\geq \kappa_{5,n} \| X_{\hat{T}^{M_A^*}}\xi_{\hat{T}^{M_A^*}}\|_n^2,\label{eq:xi-bound}
\end{align}
where the second last inequality follows from Assumption \ref{assump:tail}, the last inequality follows from Assumption  \ref{assump:SE}, statement (ii) of Assumption \ref{assump:AS}, and Lemma \ref{lemma:bound-res}, and
\begin{equation}
    \kappa_{5,n}:=1-\sqrt{\frac{C_r}{C_V s\log(n)}} - 2\sqrt{\frac{C_{m^*}+1}{c_\phi C_V}}
\end{equation}
is bounded from below by a fixed positive constant 
\begin{equation}
    \kappa_5:= 1- 2\sqrt{\frac{C_{m^*}+1}{c_\phi C_V}} -\eta_5,
\end{equation}
for arbitrarily small $\eta_5$ and $n$ large enough, given that $C_V>4 \frac{C_{m^*}+1}{c_\phi}$. As a result, \eqref{eq:xi-bound} yields that
\begin{equation}\label{eq:bound-gamma-final}
    |\gamma_{j^{M_A^*}}|\geq \frac{\kappa_5 \| X_{\hat{T}^{M_A^*}}\xi_{\hat{T}^{M_A^*}}\|_n^2}{\|\xi_{\hat{T}^{M_A^*}}\|_1}\geq \frac{\kappa_5\sqrt{c_\phi}\| X_{\hat{T}^{M_A^*}}\xi_{\hat{T}^{M_A^*}}\|_n}{\sqrt{(C_{m^*}+1)s\log(n)}}=\frac{\kappa_5\sqrt{c_\phi}\| \mathcal{V}^{M_A^*}\|_n}{\sqrt{(C_{m^*}+1)s\log(n)}},
\end{equation}
where the last inequality follows from Assumption \ref{assump:SE}. Plugging in \eqref{eq:bound-gamma-final} in \eqref{eq:gamma-upper-assump}, we have:
\begin{equation}
    \frac{\kappa_5^2 c_\phi \|\mathcal{V}^{M_A^*}\|_n^2}{(C_{m^*}+1)s\log(n)}\leq  8\zeta_0\frac{\log(2p/\alpha)}{n}\|\mathcal{V}^{M_A^*}\|_n^2 +{16\zeta_0}(1+2\eta)\sigma^2 \frac{\log(2p/\alpha)}{n},
\end{equation}
which yields that:
\begin{align}
    &\quad\|\mathcal{V}^{M_A^*}\|_n^2 \left(\kappa_5^2 c_\phi - 8\zeta_0(C_{m^*}+1)\frac{s\log(2p/\alpha)\log(n)}{n}\right)\\
    &\leq {16\zeta_0}(C_{m^*+1})(1+2\eta)\sigma^2 \frac{s\log(2p/\alpha)\log(n)}{n}.
\end{align}
Given the growth condition that $\frac{s\log(2p/\alpha)\log(n)}{n}\rightarrow 0$, for $n$ large enough such that $8\zeta_0(C_{m^*}+1)\frac{s\log(2p/\alpha)\log(n)}{n}<\frac{\kappa_5^2 c_\phi}{2}$, we have that:
\begin{equation*}
    \|\mathcal{V}^{M_A^*}\|_n^2\leq \frac{{32\zeta_0}(C_{m^*+1})(1+2\eta)\sigma^2} {\kappa_5^2 c_{\phi}} \frac{s\log(2p/\alpha)\log(n)}{n},
\end{equation*}
which yields the conclusion.

\end{proof}

\begin{proof}[Proof of Theorem \ref{theo:OBA}]
The result of Theorem \ref{theo:OBA} is  special case of Theorem \ref{theo:PBA} with $M=1$. Therefore, we have the conclusion.
\end{proof}

\begin{proof}[Proof of Corollary \ref{coro:oba}]
The result of Corollary \ref{coro:oba} is a special case of Corollary \ref{coro:pba} with $M=1$. Therefore, we have the conclusion.
\end{proof}

\begin{proof}[Proof of Theorem \ref{ControlTheorem}]
According to Theorem  \ref{theo:PBA} and \ref{theo:OBA}, the condition on the prediction rates in HLMS(P) in \cite{BelloniChernozhukovHansen2011} are satisfied by Assumption A.\ref{A3} (i):
\begin{align*}
    ||X(\beta^{m^*}-\beta)||_n\precsim \sqrt{\frac{s\log(p\vee n)\log(n)}{n}} = o\left(n^{-1/4}\right)
\end{align*}
\begin{align*}
    ||X(\gamma^{m^*}-\gamma)||_n\precsim \sqrt{\frac{s\log(p\vee n)\log(n)}{n}} = o\left(n^{-1/4}\right).
\end{align*}
Although we have $||\beta^{m^*}||_0\le Cs\log(n)$ and $||\gamma^{m^*}||_0\le s\log(n)$, the slightly stronger growth conditions in Assumption A.\ref{A3} allow us to apply the proof of Theorem 2 in \cite{BelloniChernozhukovHansen2011}. It is worth to notice that the growth rate $n^{2/q}\frac{s^2 \log (p \vee n)\log(n)^2}{n}\rightarrow 0$ is needed for consistent variance estimation (see Step 5 in the proof of Theorem 2 in \cite{BelloniChernozhukovHansen2011}). Condition ASTE(P) and Condition SE(P) hold due to Assumptions A.\ref{A1}-A.\ref{A3}. Hence, Theorem 2 in \cite{BelloniChernozhukovHansen2011} yields the result.
\end{proof}


\begin{proof}[Proof of Theorem \ref{ivtheorem}]
Assumptions B.\ref{B1}-B.\ref{B3} ensure sufficiently fast convergence rates of the fitted optimal instruments $\hat{D}(z_i)=z_i^\intercal\hat{\gamma}$ via \texttt{I}-\texttt{pBA} or \texttt{oBA} in the first step, i.e., 
\begin{align*}
||\hat{D}(z_i)-D(z_i)||_{n}\precsim\sqrt{\frac{s\log(p\vee n)\log(n)}{n}}
\end{align*}
with probability $1-o(1)$ by Theorem \ref{theo:PBA} and Theorem \ref{theo:OBA}, respectively. Since the maximal sparse eigenvalues are uniformly bounded from above due to Assumption B.\ref{B2},
we conclude
\begin{align*}
||\hat\gamma-\gamma||_2\le C||\hat{D}(z_i)-D(z_i)||_{n}\precsim \sqrt{\frac{s\log(p\vee n)\log(n)}{n}}
\end{align*}
with probability $1-o(1)$ which implies
\begin{align*}
||\hat\gamma-\gamma||_1\le C\sqrt{s\log(n)}||\hat\gamma-\gamma||_2\precsim C\sqrt{\frac{s^2\log(p\vee n)\log(n)^2}{n}}.
\end{align*}
Assuming $\frac{s^2 \log^2(p \vee n)\log^2(n)}{n} \rightarrow 0$ in Assumption B.\ref{B4}, it implies equation (F.2) in \cite{BCCH:2012}:
$$\sqrt{\log(p)}||\hat\gamma-\gamma||_1\precsim C\sqrt{\frac{s^2\log^2(p\vee n)\log(n)^2}{n}}\rightarrow 0.$$
This allows us to apply the proof of Theorem 4 in \cite{BCCH:2012}. It is also worth to notice that the growth rate $\frac{s^2 \log^2 (p \vee n)\log^2(n)}{n}\rightarrow 0$ ensures consistency of the variance estimation (see Step 5 in the proof of Theorem 4 in \cite{BCCH:2012}). This concludes the proof.
\end{proof}

\pagebreak

\section{Additional Results of the Simulation Study}\label{app:add_sim}
\nopagebreak

\begin{table}[h]
\resizebox{0.8\textwidth}{!}{%
\begin{tabular}{|r|rrrrrr|rrrrrr|r|}
  \hline
  &\multicolumn{6}{c|}{iid covariates}&\multicolumn{6}{c|}{correlated covariates}&\\
  \hline
  & \textbf{Lasso} & \textbf{pBA}& \textbf{I-pBA} &  \textbf{oBA} & \textbf{CV-pBA} & \textbf{CV-Lasso} & \textbf{Lasso} & \textbf{pBA}& \textbf{I-pBA} & \textbf{oBA} & \textbf{CV-pBA} & \textbf{CV-Lasso} &  \\
   \hline
  \textbf{p}&\multicolumn{12}{c|}{Mean Absolute Error}&\\
  \hline
  200 & 0.042 & 0.042 & 0.045&0.042 & 0.051 & 0.051 & 0.045 & 0.046 & 0.046&0.045 & 0.049 & 0.047 & \parbox[t]{2mm}{\multirow{5}{*}{\rotatebox[origin=c]{-90}{Control-1}}}\\ 
  600 & 0.046 & 0.044 & 0.045&0.045 & 0.063 & 0.070 & 0.048 & 0.047 & 0.044&0.049 & 0.062 & 0.064 &\\ 
  1000 & 0.055 & 0.050 &0.048& 0.052 & 0.077 & 0.091 & 0.049 & 0.044 &0.049& 0.046 & 0.068 & 0.075 &\\ 
  1400 & 0.062 & 0.047 &0.049& 0.050 & 0.080 & 0.099 & 0.059 & 0.047 &0.049& 0.050 & 0.074 & 0.089& \\ 
  1800 & 0.076 & 0.049 &0.049& 0.052 & 0.089 & 0.112 & 0.068 & 0.050 &0.047& 0.054 & 0.082 & 0.098& \\ \hline
  200 & 0.044 & 0.042 & 0.040&0.042 & 0.049 & 0.049 & 0.040 & 0.040 & 0.042&0.040 & 0.045 & 0.043 & \parbox[t]{2mm}{\multirow{5}{*}{\rotatebox[origin=c]{-90}{Control-2}}}\\ 
  600 & 0.045 & 0.042 & 0.045&0.043 & 0.061 & 0.059 & 0.046 & 0.045 & 0.044&0.045 & 0.057 & 0.056& \\ 
  1000 & 0.048 & 0.046 &0.044& 0.045 & 0.074 & 0.078 & 0.045 & 0.044 &0.043& 0.044 & 0.063 & 0.064& \\ 
  1400 & 0.051 & 0.045 &0.044& 0.046 & 0.075 & 0.073 & 0.049 & 0.044 &0.045& 0.046 & 0.072 & 0.076& \\ 
  1800 & 0.056 & 0.046 &0.046& 0.046 & 0.083 & 0.082 & 0.052 & 0.044 &0.041& 0.045 & 0.072 & 0.074& \\ \hline
  200 & 0.059 & 0.057 & 0.059&0.056 & 0.055 & 0.056 & 0.100 & 0.097 & 0.096&0.097 & 0.082 & 0.120 &  \parbox[t]{2mm}{\multirow{5}{*}{\rotatebox[origin=c]{-90}{IV}}}\\ 
  600 & 0.058 & 0.057 & 0.057&0.056 & 0.055 & 0.056 & 0.140 & 0.118 & 0.117&0.115 & 0.133 & 1.075 &\\ 
  1000 & 0.062 & 0.063 &0.063& 0.061 & 0.060 & 0.061 & 0.145 & 0.123 &0.130& 0.123 & 0.113 & 2.180& \\ 
  1400 & 0.061 & 0.065 &0.063& 0.064 & 0.059 & 0.061 & 0.417 & 0.124 &0.133& 0.125 & 0.256 & 3.089& \\ 
  1800 & 0.062 & 0.064 &0.063& 0.062 & 0.060 & 0.176 & 0.229 & 0.133 &0.135& 0.131 & 0.207 & 2.318& \\ \hline
  \textbf{p}&\multicolumn{12}{c|}{Standard Deviation}&\\
  \hline
  200 & 0.053 & 0.053 & 0.054&0.053 & 0.065 & 0.058 & 0.055 & 0.057 & 0.056&0.055 & 0.061 & 0.055 & \parbox[t]{2mm}{\multirow{5}{*}{\rotatebox[origin=c]{-90}{Control-1}}}\\ 
  600 & 0.057 & 0.054 & 0.054&0.053 & 0.078 & 0.065 & 0.060 & 0.060 & 0.054&0.060 & 0.079 & 0.067 &\\ 
  1000 & 0.062 & 0.057 &0.055& 0.057 & 0.097 & 0.072 & 0.058 & 0.055 &0.058& 0.054 & 0.086 & 0.068& \\ 
  1400 & 0.062 & 0.054 &0.055& 0.055 & 0.100 & 0.075 & 0.064 & 0.058 &0.057& 0.058 & 0.093 & 0.071& \\ 
  1800 & 0.071 & 0.057 &0.056& 0.058 & 0.115 & 0.081 & 0.065 & 0.062 &0.056& 0.059 & 0.106 & 0.072& \\ \hline
  200 & 0.053 & 0.052 & 0.050&0.053 & 0.062 & 0.058 & 0.050 & 0.049 & 0.052&0.050 & 0.057 & 0.053 & \parbox[t]{2mm}{\multirow{5}{*}{\rotatebox[origin=c]{-90}{Control-2}}}\\ 
  600 & 0.055 & 0.052 & 0.056&0.053 & 0.073 & 0.066 & 0.057 & 0.056 & 0.055&0.056 & 0.072 & 0.063 &\\ 
  1000 & 0.060 & 0.057 &0.054& 0.057 & 0.092 & 0.076 & 0.055 & 0.053 &0.053& 0.054 & 0.080 & 0.066& \\ 
  1400 & 0.062 & 0.056 &0.055& 0.057 & 0.093 & 0.074 & 0.061 & 0.055 &0.056& 0.056 & 0.090 & 0.078& \\ 
  1800 & 0.068 & 0.058 &0.057& 0.058 & 0.104 & 0.083 & 0.060 & 0.055 &0.053& 0.055 & 0.089 & 0.072& \\ \hline
  200 & 0.073 & 0.070 & 0.074&0.069 & 0.063 & 0.068 & 0.122 & 0.119 & 0.119&0.118 & 0.094 & 0.322 &  \parbox[t]{2mm}{\multirow{5}{*}{\rotatebox[origin=c]{-90}{IV}}}\\ 
  600 & 0.072 & 0.069 & 0.072&0.068 & 0.059 & 0.067 & 0.296 & 0.140 & 0.136&0.136 & 0.724 & 3.754& \\ 
  1000 & 0.074 & 0.077 &0.076& 0.075 & 0.065 & 0.073 & 0.314 & 0.145 &0.152& 0.145 & 0.283 & 10.787& \\ 
  1400 & 0.072 & 0.079 &0.077& 0.077 & 0.061 & 0.072 & 4.993 & 0.142 &0.156& 0.144 & 1.424 & 9.670& \\ 
  1800 & 0.071 & 0.077 &0.076& 0.074 & 0.061 & 2.551 & 1.443 & 0.153 &0.153& 0.150 & 1.541 & 7.232& \\ 
        \hline
  \textbf{p}&\multicolumn{12}{c|}{Rejection Rate}&\\
  \hline
  200 & 0.070 & 0.076 & 0.074&0.062 & 0.058 & 0.082 & 0.064 & 0.070 & 0.090&0.066 & 0.050 & 0.080 & \parbox[t]{2mm}{\multirow{5}{*}{\rotatebox[origin=c]{-90}{Control-1}}}\\ 
  600 & 0.080 & 0.080 & 0.074&0.088 & 0.038 & 0.230 & 0.090 & 0.102 & 0.066&0.096 & 0.058 & 0.178 &\\ 
  1000 & 0.150 & 0.098 &0.100& 0.114 & 0.062 & 0.362 & 0.096 & 0.072 &0.098& 0.076 & 0.046 & 0.256& \\ 
  1400 & 0.188 & 0.076 &0.076& 0.098 & 0.056 & 0.432 & 0.172 & 0.100 &0.106& 0.108 & 0.042 & 0.340 &\\ 
  1800 & 0.276 & 0.084 &0.096& 0.126 & 0.068 & 0.500 & 0.224 & 0.114 &0.080& 0.134 & 0.066 & 0.420 &\\  \hline
  200 & 0.058 & 0.058 & 0.034&0.050 & 0.038 & 0.090 & 0.054 & 0.032 & 0.050&0.044 & 0.038 & 0.058 & \parbox[t]{2mm}{\multirow{5}{*}{\rotatebox[origin=c]{-90}{Control-2}}}\\   
  600 & 0.078 & 0.058 & 0.064&0.046 & 0.048 & 0.180 & 0.070 & 0.076 & 0.074&0.074 & 0.052 & 0.146 &\\ 
  1000 & 0.094 & 0.074 &0.064& 0.082 & 0.080 & 0.264 & 0.058 & 0.058 &0.048& 0.062 & 0.068 & 0.194 &\\ 
  1400 & 0.118 & 0.072 &0.060& 0.084 & 0.068 & 0.242 & 0.102 & 0.066 &0.060& 0.066 & 0.062 & 0.274& \\ 
  1800 & 0.154 & 0.078 &0.070& 0.084 & 0.064 & 0.296 & 0.118 & 0.060 &0.074& 0.066 & 0.046 & 0.278& \\ \hline
  200 & 0.054 & 0.048 & 0.072&0.048 & 0.064 & 0.056 & 0.070 & 0.060 & 0.062&0.062 & 0.092 & 0.074 &  \parbox[t]{2mm}{\multirow{5}{*}{\rotatebox[origin=c]{-90}{IV}}}\\ 
  600 & 0.068 & 0.056 & 0.062&0.064 & 0.104 & 0.070 & 0.066 & 0.062 & 0.052&0.052 & 0.118 & 0.042 &\\ 
  1000 & 0.072 & 0.072 &0.080& 0.074 & 0.106 & 0.072 & 0.078 & 0.068 &0.080& 0.070 & 0.140 & 0.050& \\ 
  1400 & 0.054 & 0.056 &0.076& 0.064 & 0.098 & 0.062 & 0.070 & 0.048 &0.074& 0.054 & 0.126 & 0.040 &\\ 
  1800 & 0.076 & 0.062 &0.058& 0.060 & 0.138 & 0.064 & 0.070 & 0.068 &0.074& 0.066 & 0.140 & 0.040& \\ 
  \hline
  \textbf{p}&\multicolumn{12}{c|}{Computation Time}&\\
  \hline
  200  & 0.010 & 0.004 & 0.005 & 0.006 & 1.511  & 9.360   & 0.010 & 0.004 & 0.005 & 0.006 & 1.498  & 9.553   & \parbox[t]{2mm}{\multirow{5}{*}{\rotatebox[origin=c]{-90}{Control-1}}}\\ 
  600  & 0.019 & 0.009 & 0.020 & 0.011 & 3.956  & 22.451  & 0.018 & 0.009 & 0.019 & 0.012 & 4.077  & 23.140&  \\
  1000 & 0.032 & 0.028 & 0.018 & 0.028 & 10.032 & 63.795  & 0.029 & 0.027 & 0.015 & 0.028 & 10.313 & 65.936&  \\
  1400 & 0.051 & 0.044 & 0.052 & 0.044 & 14.901 & 100.427 & 0.050 & 0.041 & 0.049 & 0.040 & 15.121 & 97.662 & \\
  1800 & 0.080 & 0.055 & 0.066 & 0.066 & 19.741 & 131.798 & 0.066 & 0.052 & 0.078 & 0.052 & 19.832 & 131.833& \\ \hline
  200  & 0.009 & 0.004 & 0.005 & 0.006 & 1.470  & 9.446   & 0.009 & 0.004 & 0.005 & 0.005 & 1.512  & 9.482   & \parbox[t]{2mm}{\multirow{5}{*}{\rotatebox[origin=c]{-90}{Control-2}}}\\ 
  600  & 0.020 & 0.009 & 0.018 & 0.011 & 3.987  & 22.574  & 0.019 & 0.008 & 0.019 & 0.010 & 4.131  & 22.891 & \\
  1000 & 0.032 & 0.027 & 0.017 & 0.027 & 10.141 & 64.102  & 0.028 & 0.026 & 0.016 & 0.028 & 10.824 & 64.826 & \\
  1400 & 0.050 & 0.043 & 0.048 & 0.044 & 14.911 & 96.500  & 0.047 & 0.042 & 0.058 & 0.041 & 15.006 & 98.096 & \\
  1800 & 0.076 & 0.053 & 0.094 & 0.054 & 19.096 & 127.115 & 0.067 & 0.056 & 0.065 & 0.053 & 19.709 & 136.821& \\ \hline
  200  & 0.009 & 0.005 & 0.005 & 0.009 & 1.504  & 9.935   & 0.009 & 0.004 & 0.005 & 0.006 & 1.484  & 9.857   & \parbox[t]{2mm}{\multirow{5}{*}{\rotatebox[origin=c]{-90}{IV}}}\\ 
  600  & 0.019 & 0.017 & 0.015 & 0.020 & 6.459  & 44.573  & 0.016 & 0.008 & 0.021 & 0.010 & 4.197  & 22.947 & \\
  1000 & 0.035 & 0.028 & 0.015 & 0.033 & 6.756  & 42.428  & 0.028 & 0.029 & 0.021 & 0.028 & 10.607 & 68.366 & \\
  1400 & 0.047 & 0.020 & 0.057 & 0.030 & 10.642 & 91.119  & 0.052 & 0.041 & 0.061 & 0.042 & 15.744 & 100.891& \\
  1800 & 0.092 & 0.065 & 0.097 & 0.072 & 25.276 & 143.655 & 0.055 & 0.061 & 0.082 & 0.065 & 19.885 & 143.689&\\
\hline
  \end{tabular}}
  \caption{Simulation results for $n=400$ and $SNR=1$.} 
  \label{simresults_n400}
\end{table}

\newpage

\begin{table}[h]
\resizebox{0.8\textwidth}{!}{%
\begin{tabular}{|r|rrrrrr|rrrrrr|r|}
  \hline
  &\multicolumn{6}{c|}{iid covariates}&\multicolumn{6}{c|}{correlated covariates}&\\
  \hline
  & \textbf{Lasso} & \textbf{pBA}& \textbf{I-pBA} &  \textbf{oBA} & \textbf{CV-pBA} & \textbf{CV-Lasso} & \textbf{Lasso} & \textbf{pBA}& \textbf{I-pBA} & \textbf{oBA} & \textbf{CV-pBA} & \textbf{CV-Lasso} &  \\
   \hline
  \textbf{p}&\multicolumn{12}{c|}{Mean Absolute Error}&\\
  \hline
  200 & 0.044 & 0.044 & 0.042&0.045 & 0.051 & 0.051 & 0.042 & 0.043 & 0.043&0.044 & 0.051 & 0.046 & \parbox[t]{2mm}{\multirow{5}{*}{\rotatebox[origin=c]{-90}{Control-1}}}\\ 
  600 & 0.044 & 0.043 & 0.046&0.044 & 0.062 & 0.069 & 0.047 & 0.046 & 0.048&0.048 & 0.060 & 0.064 &\\ 
  1000 & 0.051 & 0.045 &0.047& 0.049 & 0.073 & 0.088 & 0.051 & 0.048 &0.046& 0.049 & 0.071 & 0.076& \\ 
  1400 & 0.060 & 0.048 &0.045& 0.051 & 0.077 & 0.099 & 0.055 & 0.044 &0.049& 0.049 & 0.077 & 0.086& \\ 
  1800 & 0.072 & 0.051 &0.048& 0.055 & 0.083 & 0.112 & 0.065 & 0.049 &0.049& 0.054 & 0.079 & 0.097& \\  \hline
  200 & 0.045 & 0.042 & 0.042&0.043 & 0.050 & 0.051 & 0.043 & 0.042 & 0.043&0.043 & 0.048 & 0.046 & \parbox[t]{2mm}{\multirow{5}{*}{\rotatebox[origin=c]{-90}{Control-2}}}\\ 
  600 & 0.045 & 0.044 & 0.044&0.044 & 0.063 & 0.064 & 0.042 & 0.041 & 0.042&0.041 & 0.059 & 0.059 &\\ 
  1000 & 0.046 & 0.045 &0.047& 0.046 & 0.073 & 0.072 & 0.045 & 0.042 &0.044& 0.043 & 0.067 & 0.070& \\ 
  1400 & 0.050 & 0.046 &0.048& 0.046 & 0.078 & 0.079 & 0.048 & 0.045 &0.044& 0.046 & 0.073 & 0.077& \\ 
  1800 & 0.052 & 0.044 &0.044& 0.045 & 0.082 & 0.092 & 0.051 & 0.043 &0.049& 0.043 & 0.081 & 0.082& \\  \hline
  200 & 0.045 & 0.045 & 0.044&0.045 & 0.044 & 0.045 & 0.071 & 0.073 & 0.075&0.072 & 0.069 & 0.072  &  \parbox[t]{2mm}{\multirow{5}{*}{\rotatebox[origin=c]{-90}{IV}}}\\ 
  600 & 0.048 & 0.048 & 0.045&0.048 & 0.048 & 0.048 & 0.080 & 0.082 & 0.076&0.080 & 0.079 & 0.080 &\\ 
  1000 & 0.043 & 0.043 &0.045& 0.042 & 0.044 & 0.043 & 0.078 & 0.080 &0.080& 0.079 & 0.076 & 0.076& \\ 
  1400 & 0.045 & 0.045 &0.046& 0.045 & 0.045 & 0.045 & 0.079 & 0.081 &0.088& 0.080 & 0.078 & 0.078& \\ 
  1800 & 0.047 & 0.047 &0.048& 0.047 & 0.047 & 0.047 & 0.080 & 0.084 &0.081& 0.081 & 0.079 & 0.080& \\ 
  \hline
  \textbf{p}&\multicolumn{12}{c|}{Standard Deviation}&\\
  \hline
  200 & 0.054 & 0.054 & 0.053&0.055 & 0.062 & 0.059 & 0.053 & 0.053 & 0.053&0.053 & 0.064 & 0.056 & \parbox[t]{2mm}{\multirow{5}{*}{\rotatebox[origin=c]{-90}{Control-1}}}\\ 
  600 & 0.053 & 0.052 & 0.054&0.052 & 0.076 & 0.063 & 0.057 & 0.055 & 0.057&0.056 & 0.076 & 0.068 &\\ 
  1000 & 0.056 & 0.054 &0.055& 0.055 & 0.092 & 0.072 & 0.060 & 0.058 &0.053& 0.058 & 0.090 & 0.074& \\ 
  1400 & 0.060 & 0.054 &0.054& 0.055 & 0.098 & 0.073 & 0.059 & 0.051 &0.055& 0.055 & 0.100 & 0.074& \\ 
  1800 & 0.067 & 0.057 &0.054& 0.060 & 0.109 & 0.082 & 0.062 & 0.055 &0.056& 0.057 & 0.100 & 0.076& \\  \hline
  200 & 0.053 & 0.053 & 0.052&0.054 & 0.063 & 0.059 & 0.053 & 0.051 & 0.055&0.051 & 0.059 & 0.054 & \parbox[t]{2mm}{\multirow{5}{*}{\rotatebox[origin=c]{-90}{Control-2}}}\\ 
  600 & 0.055 & 0.055 & 0.056&0.055 & 0.079 & 0.065 & 0.053 & 0.051 & 0.053&0.052 & 0.075 & 0.064 &\\ 
  1000 & 0.058 & 0.057 &0.059& 0.057 & 0.090 & 0.069 & 0.057 & 0.053 &0.056& 0.054 & 0.084 & 0.072& \\ 
  1400 & 0.061 & 0.056 &0.060& 0.057 & 0.096 & 0.075 & 0.060 & 0.057 &0.055& 0.057 & 0.092 & 0.075& \\ 
  1800 & 0.061 & 0.055 &0.055& 0.055 & 0.103 & 0.079 & 0.062 & 0.054 &0.061& 0.054 & 0.103 & 0.077& \\  \hline
  200 & 0.056 & 0.056 & 0.054&0.056 & 0.054 & 0.056 & 0.088 & 0.089 & 0.095&0.088 & 0.082 & 0.087  &  \parbox[t]{2mm}{\multirow{5}{*}{\rotatebox[origin=c]{-90}{IV}}}\\ 
  600 & 0.059 & 0.059 & 0.055&0.059 & 0.057 & 0.059 & 0.098 & 0.101 & 0.094&0.099 & 0.089 & 0.096& \\ 
  1000 & 0.053 & 0.053 &0.056& 0.053 & 0.051 & 0.053 & 0.092 & 0.096 &0.099& 0.094 & 0.078 & 0.089& \\ 
  1400 & 0.056 & 0.057 &0.057& 0.057 & 0.053 & 0.055 & 0.090 & 0.096 &0.103& 0.095 & 0.079 & 0.090 &\\ 
  1800 & 0.057 & 0.059 &0.058& 0.058 & 0.055 & 0.057 & 0.096 & 0.104 &0.100& 0.100 & 0.083 & 0.096 &\\ 
        \hline
  \textbf{p}&\multicolumn{12}{c|}{Rejection Rate}&\\
  \hline
  200 & 0.040 & 0.048 & 0.054&0.054 & 0.040 & 0.090 & 0.064 & 0.060 & 0.054&0.064 & 0.066 & 0.076 & \parbox[t]{2mm}{\multirow{5}{*}{\rotatebox[origin=c]{-90}{Control-1}}}\\ 
  600 & 0.064 & 0.056 & 0.086&0.072 & 0.032 & 0.210 & 0.078 & 0.076 & 0.084&0.080 & 0.058 & 0.160 &\\ 
  1000 & 0.104 & 0.090 &0.078& 0.096 & 0.060 & 0.334 & 0.110 & 0.094 &0.072& 0.106 & 0.078 & 0.244& \\ 
  1400 & 0.170 & 0.068 &0.074& 0.098 & 0.056 & 0.410 & 0.134 & 0.076 &0.076& 0.092 & 0.050 & 0.288& \\ 
  1800 & 0.220 & 0.100 &0.102& 0.146 & 0.048 & 0.454 & 0.172 & 0.072 &0.104& 0.114 & 0.052 & 0.364& \\  \hline
  200 & 0.070 & 0.062 & 0.052&0.066 & 0.052 & 0.126 & 0.054 & 0.042 & 0.066&0.042 & 0.046 & 0.048 & \parbox[t]{2mm}{\multirow{5}{*}{\rotatebox[origin=c]{-90}{Control-2}}}\\ 
  600 & 0.076 & 0.072 & 0.072&0.092 & 0.066 & 0.200 & 0.056 & 0.060 & 0.064&0.060 & 0.054 & 0.148& \\ 
  1000 & 0.084 & 0.090 &0.084& 0.098 & 0.058 & 0.236 & 0.070 & 0.062 &0.066& 0.066 & 0.052 & 0.240& \\ 
  1400 & 0.088 & 0.066 &0.092& 0.078 & 0.058 & 0.288 & 0.088 & 0.068 &0.064& 0.074 & 0.060 & 0.262& \\ 
  1800 & 0.100 & 0.068 &0.066& 0.060 & 0.054 & 0.336 & 0.092 & 0.056 &0.088& 0.074 & 0.052 & 0.314& \\  \hline
  200 & 0.042 & 0.044 & 0.050&0.044 & 0.042 & 0.040 & 0.062 & 0.058 & 0.072&0.060 & 0.072 & 0.064  &  \parbox[t]{2mm}{\multirow{5}{*}{\rotatebox[origin=c]{-90}{IV}}}\\ 
  600 & 0.052 & 0.046 & 0.048&0.048 & 0.064 & 0.058 & 0.072 & 0.072 & 0.048&0.080 & 0.100 & 0.080 &\\ 
  1000 & 0.056 & 0.044 &0.052& 0.044 & 0.062 & 0.054 & 0.058 & 0.060 &0.066& 0.050 & 0.076 & 0.054& \\ 
  1400 & 0.056 & 0.054 &0.050& 0.064 & 0.072 & 0.054 & 0.062 & 0.046 &0.072& 0.054 & 0.098 & 0.064& \\ 
  1800 & 0.054 & 0.058 &0.052& 0.056 & 0.082 & 0.050 & 0.090 & 0.056 &0.060& 0.056 & 0.110 & 0.060& \\ 
  \hline
  \textbf{p}&\multicolumn{12}{c|}{Computation Time}&\\
  \hline
200  & 0.009 & 0.004 & 0.005 & 0.006 & 1.485  & 10.039  & 0.009 & 0.005 & 0.005 & 0.008 & 1.481  & 9.349   & \parbox[t]{2mm}{\multirow{5}{*}{\rotatebox[origin=c]{-90}{Control-1}}}\\
600  & 0.021 & 0.009 & 0.018 & 0.011 & 4.128  & 23.135  & 0.017 & 0.009 & 0.019 & 0.010 & 4.031  & 22.996&  \\
1000 & 0.034 & 0.028 & 0.015 & 0.029 & 10.037 & 64.201  & 0.027 & 0.030 & 0.018 & 0.029 & 10.237 & 64.420 & \\
1400 & 0.050 & 0.044 & 0.047 & 0.044 & 15.091 & 97.301  & 0.044 & 0.043 & 0.063 & 0.042 & 15.084 & 98.632  &\\
1800 & 0.067 & 0.054 & 0.065 & 0.055 & 19.509 & 128.696 & 0.061 & 0.054 & 0.080 & 0.054 & 19.521 & 130.089& \\ \hline
200  & 0.009 & 0.004 & 0.005 & 0.006 & 1.485  & 9.563   & 0.009 & 0.004 & 0.005 & 0.005 & 1.504  & 9.659   & \parbox[t]{2mm}{\multirow{5}{*}{\rotatebox[origin=c]{-90}{Control-2}}}\\
600  & 0.018 & 0.009 & 0.018 & 0.010 & 3.941  & 22.460  & 0.018 & 0.009 & 0.018 & 0.018 & 4.073  & 22.845 & \\
1000 & 0.029 & 0.028 & 0.016 & 0.030 & 9.959  & 64.120  & 0.027 & 0.028 & 0.016 & 0.029 & 10.039 & 64.334 & \\
1400 & 0.049 & 0.043 & 0.055 & 0.044 & 14.815 & 96.972  & 0.046 & 0.041 & 0.050 & 0.042 & 15.173 & 98.414 & \\
1800 & 0.065 & 0.055 & 0.068 & 0.055 & 19.333 & 129.302 & 0.061 & 0.053 & 0.091 & 0.054 & 19.484 & 130.222& \\ \hline
200  & 0.009 & 0.004 & 0.005 & 0.006 & 1.509  & 9.469   & 0.010 & 0.004 & 0.005 & 0.006 & 1.575  & 9.984   &  \parbox[t]{2mm}{\multirow{5}{*}{\rotatebox[origin=c]{-90}{IV}}} \\
600  & 0.020 & 0.016 & 0.011 & 0.017 & 6.261  & 43.109  & 0.016 & 0.008 & 0.018 & 0.010 & 4.050  & 24.002 & \\
1000 & 0.033 & 0.028 & 0.015 & 0.029 & 6.446  & 38.138  & 0.030 & 0.026 & 0.016 & 0.027 & 10.601 & 67.820 & \\
1400 & 0.051 & 0.026 & 0.048 & 0.031 & 9.239  & 81.808  & 0.050 & 0.036 & 0.049 & 0.044 & 15.391 & 97.780 & \\
1800 & 0.058 & 0.049 & 0.062 & 0.052 & 19.554 & 128.446 & 0.054 & 0.055 & 0.065 & 0.055 & 20.042 & 144.281&\\
\hline
  \end{tabular}}
  \caption{Simulation results for $n=400$ and $SNR=3$.} 
  \label{simresults_n400_SNR3}
\end{table}

\newpage

\begin{table}[h]
\resizebox{0.8\textwidth}{!}{%
\begin{tabular}{|r|rrrrrr|rrrrrr|r|}
  \hline
  &\multicolumn{6}{c|}{iid covariates}&\multicolumn{6}{c|}{correlated covariates}&\\
  \hline
 & \textbf{Lasso} & \textbf{pBA}& \textbf{I-pBA} &  \textbf{oBA} & \textbf{CV-pBA} & \textbf{CV-Lasso} & \textbf{Lasso} & \textbf{pBA}& \textbf{I-pBA} & \textbf{oBA} & \textbf{CV-pBA} & \textbf{CV-Lasso} &  \\ 
   \hline
  \textbf{p}&\multicolumn{12}{c|}{Mean Absolute Error}&\\
  \hline
  200 & 0.033 & 0.033 & 0.035&0.034 & 0.037 & 0.038 & 0.035 & 0.036 & 0.035&0.036 & 0.038 & 0.037 & \parbox[t]{2mm}{\multirow{5}{*}{\rotatebox[origin=c]{-90}{Control-1}}}\\ 
  600 & 0.033 & 0.034 & 0.037&0.034 & 0.042 & 0.050 & 0.035 & 0.036 & 0.035&0.036 & 0.041 & 0.044 &\\ 
  1000 & 0.037 & 0.036 &0.036& 0.036 & 0.049 & 0.059 & 0.036 & 0.036 &0.036& 0.037 & 0.044 & 0.052& \\ 
  1400 & 0.038 & 0.037 &0.038& 0.038 & 0.050 & 0.066 & 0.038 & 0.035 &0.035& 0.037 & 0.048 & 0.061& \\ 
  1800 & 0.042 & 0.037 &0.039& 0.038 & 0.053 & 0.073 & 0.041 & 0.037 &0.037& 0.038 & 0.050 & 0.066& \\  \hline
  200 & 0.036 & 0.036 & 0.033&0.036 & 0.039 & 0.039 & 0.035 & 0.035 & 0.033&0.035 & 0.038 & 0.035& \parbox[t]{2mm}{\multirow{5}{*}{\rotatebox[origin=c]{-90}{Control-2}}} \\ 
  600 & 0.036 & 0.034 & 0.035&0.034 & 0.043 & 0.048 & 0.036 & 0.035 & 0.036&0.035 & 0.044 & 0.044& \\ 
  1000 & 0.035 & 0.034 &0.034& 0.034 & 0.048 & 0.051 & 0.035 & 0.035 &0.035& 0.035 & 0.045 & 0.047& \\ 
  1400 & 0.038 & 0.037 &0.035& 0.038 & 0.052 & 0.056 & 0.037 & 0.037 &0.035& 0.038 & 0.047 & 0.053 &\\ 
  1800 & 0.036 & 0.035 &0.036& 0.035 & 0.052 & 0.059 & 0.036 & 0.035 &0.034& 0.035 & 0.050 & 0.054& \\  \hline
  200 & 0.045 & 0.044 & 0.044&0.044 & 0.043 & 0.044 & 0.079 & 0.079 & 0.079&0.078 & 0.070 & 0.077  &  \parbox[t]{2mm}{\multirow{5}{*}{\rotatebox[origin=c]{-90}{IV}}}\\ 
  600 & 0.046 & 0.045 & 0.045&0.046 & 0.045 & 0.047 & 0.091 & 0.086 & 0.087&0.086 & 0.073 & 0.098 &\\ 
  1000 & 0.043 & 0.042 &0.042& 0.042 & 0.044 & 0.044 & 0.093 & 0.090 &0.091& 0.090 & 0.073 & 0.208& \\ 
  1400 & 0.046 & 0.046 &0.045& 0.046 & 0.047 & 0.047 & 0.096 & 0.095 &0.095& 0.094 & 0.078 & 0.380& \\ 
  1800 & 0.046 & 0.046 &0.044& 0.046 & 0.048 & 0.047 & 0.098 & 0.102 &0.096& 0.100 & 0.080 & 0.895& \\ 
  \hline
  \textbf{p}&\multicolumn{12}{c|}{Standard Deviation}&\\
  \hline
  200 & 0.042 & 0.041 & 0.043&0.041 & 0.047 & 0.044 & 0.044 & 0.045 & 0.043&0.045 & 0.048 & 0.046 & \parbox[t]{2mm}{\multirow{5}{*}{\rotatebox[origin=c]{-90}{Control-1}}}\\ 
  600 & 0.040 & 0.040 & 0.045&0.040 & 0.052 & 0.046 & 0.043 & 0.043 & 0.044&0.043 & 0.051 & 0.046& \\ 
  1000 & 0.045 & 0.044 &0.044& 0.044 & 0.061 & 0.056 & 0.044 & 0.044 &0.042& 0.044 & 0.056 & 0.050& \\ 
  1400 & 0.045 & 0.044 &0.045& 0.045 & 0.064 & 0.053 & 0.044 & 0.042 &0.041& 0.043 & 0.061 & 0.055& \\ 
  1800 & 0.047 & 0.044 &0.044& 0.045 & 0.068 & 0.059 & 0.044 & 0.043 &0.044& 0.044 & 0.063 & 0.053& \\  \hline
  200 & 0.043 & 0.043 & 0.043&0.043 & 0.048 & 0.045 & 0.044 & 0.044 & 0.043&0.044 & 0.049 & 0.045 & \parbox[t]{2mm}{\multirow{5}{*}{\rotatebox[origin=c]{-90}{Control-2}}}\\ 
  600 & 0.044 & 0.044 & 0.044&0.044 & 0.054 & 0.054 & 0.044 & 0.044 & 0.044&0.044 & 0.055 & 0.050 &\\ 
  1000 & 0.044 & 0.043 &0.043& 0.043 & 0.059 & 0.054 & 0.044 & 0.043 &0.043& 0.044 & 0.057 & 0.051& \\ 
  1400 & 0.048 & 0.046 &0.044& 0.047 & 0.065 & 0.060 & 0.046 & 0.046 &0.044& 0.047 & 0.060 & 0.054 &\\ 
  1800 & 0.045 & 0.043 &0.044& 0.043 & 0.063 & 0.058 & 0.046 & 0.045 &0.043& 0.045 & 0.062 & 0.054& \\  \hline
  200 & 0.056 & 0.054 & 0.054&0.055 & 0.052 & 0.055 & 0.097 & 0.095 & 0.094&0.093 & 0.080 & 0.093  &  \parbox[t]{2mm}{\multirow{5}{*}{\rotatebox[origin=c]{-90}{IV}}}\\ 
  600 & 0.057 & 0.057 & 0.057&0.057 & 0.053 & 0.056 & 0.112 & 0.105 & 0.106&0.105 & 0.076 & 0.195 &\\ 
  1000 & 0.053 & 0.053 &0.052& 0.053 & 0.048 & 0.052 & 0.114 & 0.106 &0.107& 0.106 & 0.072 & 0.782 &\\ 
  1400 & 0.057 & 0.058 &0.057& 0.057 & 0.051 & 0.057 & 0.119 & 0.115 &0.111& 0.114 & 0.079 & 2.407& \\ 
  1800 & 0.057 & 0.058 &0.056& 0.058 & 0.052 & 0.057 & 0.117 & 0.119 &0.113& 0.117 & 0.077 & 5.709 &\\ 
        \hline
  \textbf{p}&\multicolumn{12}{c|}{Rejection Rate}&\\
  \hline
  200 & 0.056 & 0.058 & 0.064&0.056 & 0.048 & 0.070 & 0.064 & 0.072 & 0.066&0.072 & 0.064 & 0.072 & \parbox[t]{2mm}{\multirow{5}{*}{\rotatebox[origin=c]{-90}{Control-1}}}\\ 
  600 & 0.052 & 0.050 & 0.088&0.052 & 0.032 & 0.184 & 0.062 & 0.072 & 0.060&0.074 & 0.046 & 0.126 &\\ 
  1000 & 0.082 & 0.076 &0.072& 0.072 & 0.062 & 0.268 & 0.064 & 0.074 &0.058& 0.072 & 0.056 & 0.186& \\ 
  1400 & 0.080 & 0.076 &0.086& 0.084 & 0.052 & 0.314 & 0.074 & 0.064 &0.062& 0.068 & 0.060 & 0.248& \\ 
  1800 & 0.110 & 0.078 &0.072& 0.086 & 0.060 & 0.358 & 0.104 & 0.082 &0.070& 0.098 & 0.044 & 0.312& \\  \hline
  200 & 0.066 & 0.056 & 0.060&0.050 & 0.048 & 0.076 & 0.070 & 0.074 & 0.060&0.068 & 0.074 & 0.068 & \parbox[t]{2mm}{\multirow{5}{*}{\rotatebox[origin=c]{-90}{Control-2}}}\\ 
  600 & 0.062 & 0.064 & 0.060&0.064 & 0.062 & 0.176 & 0.074 & 0.068 & 0.054&0.060 & 0.054 & 0.124& \\ 
  1000 & 0.080 & 0.064 &0.056& 0.066 & 0.058 & 0.210 & 0.062 & 0.058 &0.052& 0.058 & 0.070 & 0.148& \\ 
  1400 & 0.088 & 0.068 &0.076& 0.084 & 0.084 & 0.206 & 0.088 & 0.074 &0.058& 0.086 & 0.064 & 0.224& \\ 
  1800 & 0.072 & 0.044 &0.066& 0.056 & 0.052 & 0.238 & 0.082 & 0.066 &0.054& 0.074 & 0.058 & 0.226 &\\  \hline
  200 & 0.046 & 0.042 & 0.042&0.044 & 0.052 & 0.048 & 0.070 & 0.070 & 0.062&0.072 & 0.088 & 0.074 &  \parbox[t]{2mm}{\multirow{5}{*}{\rotatebox[origin=c]{-90}{IV}}} \\ 
  600 & 0.066 & 0.066 & 0.046&0.072 & 0.110 & 0.068 & 0.072 & 0.074 & 0.056&0.082 & 0.122 & 0.084 &\\ 
  1000 & 0.044 & 0.052 &0.048& 0.052 & 0.062 & 0.046 & 0.068 & 0.054 &0.072& 0.054 & 0.116 & 0.052 &\\ 
  1400 & 0.066 & 0.072 &0.064& 0.068 & 0.090 & 0.072 & 0.070 & 0.074 &0.076& 0.078 & 0.126 & 0.066& \\ 
  1800 & 0.066 & 0.066 &0.052& 0.078 & 0.106 & 0.078 & 0.062 & 0.066 &0.078& 0.064 & 0.142 & 0.060& \\ 
  \hline
  \textbf{p}&\multicolumn{12}{c|}{Computation Time}&\\
  \hline
200  & 0.017 & 0.006 & 0.007 & 0.008 & 2.569  & 14.904  & 0.017 & 0.005 & 0.008 & 0.007 & 2.628  & 14.978 & \parbox[t]{2mm}{\multirow{5}{*}{\rotatebox[origin=c]{-90}{Control-1}}} \\
600  & 0.048 & 0.016 & 0.014 & 0.021 & 7.242  & 34.499  & 0.045 & 0.014 & 0.015 & 0.015 & 7.540  & 34.767 & \\
1000 & 0.084 & 0.029 & 0.026 & 0.031 & 17.650 & 83.566  & 0.082 & 0.024 & 0.034 & 0.025 & 12.618 & 72.873 & \\
1400 & 0.147 & 0.050 & 0.041 & 0.045 & 19.214 & 110.761 & 0.108 & 0.038 & 0.044 & 0.038 & 17.078 & 89.647 & \\
1800 & 0.213 & 0.096 & 0.066 & 0.096 & 23.171 & 140.031 & 0.188 & 0.091 & 0.076 & 0.089 & 23.637 & 145.171& \\ \hline
200  & 0.018 & 0.007 & 0.007 & 0.008 & 2.648  & 15.093  & 0.016 & 0.007 & 0.007 & 0.008 & 2.708  & 16.294& \parbox[t]{2mm}{\multirow{5}{*}{\rotatebox[origin=c]{-90}{Control-2}}}  \\
600  & 0.056 & 0.014 & 0.019 & 0.015 & 7.525  & 34.156  & 0.058 & 0.013 & 0.016 & 0.015 & 8.743  & 37.906  &\\
1000 & 0.084 & 0.022 & 0.033 & 0.026 & 12.087 & 72.064  & 0.083 & 0.032 & 0.026 & 0.030 & 13.759 & 88.049  &\\
1400 & 0.136 & 0.046 & 0.048 & 0.045 & 16.950 & 91.942  & 0.125 & 0.036 & 0.044 & 0.037 & 21.957 & 97.750  &\\
1800 & 0.220 & 0.099 & 0.058 & 0.100 & 26.852 & 157.284 & 0.193 & 0.095 & 0.067 & 0.091 & 23.613 & 145.881 &\\ \hline
200  & 0.021 & 0.006 & 0.007 & 0.008 & 2.620  & 14.720  & 0.014 & 0.005 & 0.009 & 0.006 & 2.699  & 14.693  &  \parbox[t]{2mm}{\multirow{5}{*}{\rotatebox[origin=c]{-90}{IV}}} \\
600  & 0.049 & 0.013 & 0.015 & 0.015 & 7.270  & 41.529  & 0.041 & 0.024 & 0.022 & 0.026 & 7.279  & 34.232 & \\
1000 & 0.081 & 0.047 & 0.033 & 0.054 & 18.754 & 88.751  & 0.072 & 0.022 & 0.028 & 0.023 & 16.718 & 82.223 & \\
1400 & 0.139 & 0.044 & 0.042 & 0.047 & 17.031 & 93.072  & 0.088 & 0.062 & 0.047 & 0.062 & 16.578 & 86.673 & \\
1800 & 0.188 & 0.090 & 0.062 & 0.087 & 33.180 & 241.334 & 0.189 & 0.090 & 0.066 & 0.084 & 23.154 & 142.315&\\
\hline
  \end{tabular}}
  \caption{Simulation results for $n=600$ and $SNR=1$.} 
\label{simresults_n600}
\end{table}

\newpage

\begin{table}[h]
\resizebox{0.8\textwidth}{!}{%
\begin{tabular}{|r|rrrrrr|rrrrrr|r|}
  \hline
  &\multicolumn{6}{c|}{iid covariates}&\multicolumn{6}{c|}{correlated covariates}&\\
  \hline
 & \textbf{Lasso} & \textbf{pBA}& \textbf{I-pBA} &  \textbf{oBA} & \textbf{CV-pBA} & \textbf{CV-Lasso} & \textbf{Lasso} & \textbf{pBA}& \textbf{I-pBA} & \textbf{oBA} & \textbf{CV-pBA} & \textbf{CV-Lasso} &  \\
   \hline
  \textbf{p}&\multicolumn{12}{c|}{Mean Absolute Error}&\\
  \hline
  200 & 0.033 & 0.034 & 0.035&0.034 & 0.038 & 0.039 & 0.035 & 0.035 & 0.035&0.035 & 0.038 & 0.037 & \parbox[t]{2mm}{\multirow{5}{*}{\rotatebox[origin=c]{-90}{Control-1}}}\\ 
  600 & 0.036 & 0.036 & 0.038&0.037 & 0.044 & 0.052 & 0.035 & 0.035 & 0.036&0.036 & 0.042 & 0.046 &\\ 
  1000 & 0.039 & 0.039 &0.036& 0.039 & 0.050 & 0.061 & 0.035 & 0.035 &0.037& 0.036 & 0.043 & 0.050&\\ 
  1400 & 0.039 & 0.036 &0.038& 0.038 & 0.048 & 0.071 & 0.039 & 0.036 &0.036& 0.038 & 0.048 & 0.061 &\\ 
  1800 & 0.040 & 0.036 &0.038& 0.037 & 0.053 & 0.071 & 0.041 & 0.038 &0.036& 0.039 & 0.053 & 0.066 &\\  \hline
  200 & 0.033 & 0.033 & 0.034&0.034 & 0.038 & 0.038 & 0.035 & 0.035 & 0.035&0.035 & 0.037 & 0.037& \parbox[t]{2mm}{\multirow{5}{*}{\rotatebox[origin=c]{-90}{Control-2}}} \\ 
  600 & 0.034 & 0.034 & 0.035&0.035 & 0.043 & 0.048 & 0.035 & 0.033 & 0.035&0.034 & 0.040 & 0.043& \\ 
  1000 & 0.035 & 0.034 &0.035& 0.034 & 0.045 & 0.057 & 0.034 & 0.034 &0.037& 0.035 & 0.045 & 0.050& \\ 
  1400 & 0.034 & 0.033 &0.035& 0.034 & 0.049 & 0.061 & 0.036 & 0.035 &0.034& 0.035 & 0.052 & 0.054 &\\ 
  1800 & 0.038 & 0.036 &0.036& 0.037 & 0.054 & 0.065 & 0.036 & 0.035 &0.035& 0.036 & 0.049 & 0.057& \\  \hline
  200 & 0.035 & 0.035 & 0.039&0.035 & 0.034 & 0.035 & 0.058 & 0.058 & 0.059&0.058 & 0.058 & 0.059  &  \parbox[t]{2mm}{\multirow{5}{*}{\rotatebox[origin=c]{-90}{IV}}}\\ 
  600 & 0.036 & 0.036 & 0.040&0.036 & 0.037 & 0.036 & 0.058 & 0.060 & 0.064&0.059 & 0.058 & 0.059 &\\ 
  1000 & 0.038 & 0.038 &0.038& 0.038 & 0.038 & 0.038 & 0.062 & 0.062 &0.063& 0.061 & 0.062 & 0.062& \\ 
  1400 & 0.037 & 0.037 &0.037& 0.037 & 0.037 & 0.037 & 0.061 & 0.060 &0.058& 0.060 & 0.063 & 0.061& \\ 
  1800 & 0.037 & 0.038 &0.036& 0.038 & 0.038 & 0.037 & 0.062 & 0.061 &0.064& 0.062 & 0.064 & 0.063& \\ 
  \hline
  \textbf{p}&\multicolumn{12}{c|}{Standard Deviation}&\\
  \hline
  200 & 0.042 & 0.043 & 0.042&0.043 & 0.048 & 0.046 & 0.042 & 0.043 & 0.043&0.043 & 0.046 & 0.045 & \parbox[t]{2mm}{\multirow{5}{*}{\rotatebox[origin=c]{-90}{Control-1}}}\\ 
  600 & 0.044 & 0.044 & 0.045&0.045 & 0.056 & 0.050 & 0.044 & 0.043 & 0.043&0.043 & 0.053 & 0.049& \\ 
  1000 & 0.047 & 0.046 &0.043& 0.045 & 0.063 & 0.054 & 0.042 & 0.042 &0.044& 0.042 & 0.053 & 0.047& \\ 
  1400 & 0.044 & 0.043 &0.044& 0.044 & 0.060 & 0.055 & 0.045 & 0.043 &0.044& 0.044 & 0.061 & 0.052& \\ 
  1800 & 0.044 & 0.042 &0.045& 0.043 & 0.066 & 0.055 & 0.047 & 0.045 &0.042& 0.046 & 0.067 & 0.055& \\  \hline
  200 & 0.041 & 0.041 & 0.043&0.043 & 0.047 & 0.045 & 0.044 & 0.044 & 0.045&0.044 & 0.046 & 0.045 & \parbox[t]{2mm}{\multirow{5}{*}{\rotatebox[origin=c]{-90}{Control-2}}}\\ 
  600 & 0.041 & 0.042 & 0.044&0.043 & 0.053 & 0.049 & 0.043 & 0.042 & 0.044&0.043 & 0.051 & 0.048& \\ 
  1000 & 0.044 & 0.043 &0.043& 0.043 & 0.057 & 0.055 & 0.043 & 0.043 &0.046& 0.044 & 0.057 & 0.052& \\ 
  1400 & 0.043 & 0.043 &0.045& 0.042 & 0.062 & 0.054 & 0.045 & 0.044 &0.042& 0.044 & 0.064 & 0.053& \\ 
  1800 & 0.047 & 0.045 &0.045& 0.046 & 0.066 & 0.058 & 0.045 & 0.043 &0.044& 0.044 & 0.059 & 0.054& \\  \hline
  200 & 0.043 & 0.043 & 0.049&0.043 & 0.042 & 0.043 & 0.072 & 0.072 & 0.073&0.072 & 0.070 & 0.073 &  \parbox[t]{2mm}{\multirow{5}{*}{\rotatebox[origin=c]{-90}{IV}}} \\ 
  600 & 0.045 & 0.044 & 0.048&0.045 & 0.044 & 0.044 & 0.072 & 0.073 & 0.080&0.072 & 0.069 & 0.073& \\ 
  1000 & 0.047 & 0.047 &0.047& 0.047 & 0.045 & 0.047 & 0.076 & 0.077 &0.077& 0.076 & 0.070 & 0.074& \\ 
  1400 & 0.047 & 0.047 &0.046& 0.047 & 0.045 & 0.046 & 0.075 & 0.076 &0.072& 0.076 & 0.069 & 0.075& \\ 
  1800 & 0.046 & 0.047 &0.045& 0.047 & 0.045 & 0.047 & 0.074 & 0.076 &0.079& 0.076 & 0.068 & 0.075& \\ 
        \hline
  \textbf{p}&\multicolumn{12}{c|}{Rejection Rate}&\\
  \hline
  200 & 0.056 & 0.060 & 0.060&0.064 & 0.060 & 0.086 & 0.050 & 0.046 & 0.064&0.050 & 0.050 & 0.060 & \parbox[t]{2mm}{\multirow{5}{*}{\rotatebox[origin=c]{-90}{Control-1}}}\\ 
  600 & 0.048 & 0.058 & 0.082&0.060 & 0.052 & 0.178 & 0.066 & 0.068 & 0.070&0.060 & 0.064 & 0.134 &\\ 
  1000 & 0.106 & 0.082 &0.070& 0.088 & 0.060 & 0.276 & 0.052 & 0.052 &0.076& 0.062 & 0.038 & 0.178 &\\ 
  1400 & 0.102 & 0.080 &0.080& 0.096 & 0.038 & 0.348 & 0.094 & 0.080 &0.080& 0.094 & 0.054 & 0.258 &\\ 
  1800 & 0.080 & 0.056 &0.094& 0.060 & 0.048 & 0.350 & 0.104 & 0.084 &0.076& 0.106 & 0.066 & 0.298 &\\  \hline
  200 & 0.068 & 0.052 & 0.064&0.054 & 0.048 & 0.074 & 0.072 & 0.074 & 0.080&0.084 & 0.048 & 0.072 & \parbox[t]{2mm}{\multirow{5}{*}{\rotatebox[origin=c]{-90}{Control-2}}}\\ 
  600 & 0.054 & 0.050 & 0.066&0.056 & 0.044 & 0.158 & 0.060 & 0.066 & 0.070&0.058 & 0.056 & 0.136 &\\ 
  1000 & 0.076 & 0.066 &0.048& 0.064 & 0.046 & 0.232 & 0.062 & 0.064 &0.082& 0.066 & 0.064 & 0.176 &\\ 
  1400 & 0.064 & 0.060 &0.066& 0.066 & 0.050 & 0.256 & 0.060 & 0.056 &0.046& 0.058 & 0.058 & 0.200 &\\ 
  1800 & 0.082 & 0.056 &0.064& 0.064 & 0.060 & 0.284 & 0.064 & 0.056 &0.064& 0.058 & 0.040 & 0.220 &\\  \hline
  200 & 0.038 & 0.036 & 0.066&0.040 & 0.038 & 0.038 & 0.054 & 0.050 & 0.044&0.058 & 0.072 & 0.060 &  \parbox[t]{2mm}{\multirow{5}{*}{\rotatebox[origin=c]{-90}{IV}}} \\ 
  600 & 0.034 & 0.040 & 0.052&0.036 & 0.050 & 0.036 & 0.042 & 0.048 & 0.084&0.050 & 0.068 & 0.050 &\\ 
  1000 & 0.052 & 0.054 &0.054& 0.052 & 0.064 & 0.056 & 0.058 & 0.052 &0.058& 0.050 & 0.088 & 0.056& \\ 
  1400 & 0.058 & 0.044 &0.044& 0.050 & 0.074 & 0.058 & 0.058 & 0.048 &0.040& 0.052 & 0.082 & 0.052 &\\ 
  1800 & 0.052 & 0.048 &0.038& 0.050 & 0.064 & 0.046 & 0.060 & 0.050 &0.056& 0.056 & 0.104 & 0.060 &\\ 
  \hline
  \textbf{p}&\multicolumn{12}{c|}{Computation Time}&\\
  \hline
200  & 0.018 & 0.011 & 0.006 & 0.015 & 2.632  & 15.112  & 0.016 & 0.011 & 0.007 & 0.017 & 2.644  & 16.353  & \parbox[t]{2mm}{\multirow{5}{*}{\rotatebox[origin=c]{-90}{Control-1}}}\\
600  & 0.052 & 0.014 & 0.014 & 0.014 & 7.264  & 33.826  & 0.051 & 0.026 & 0.014 & 0.025 & 7.998  & 38.480&  \\
1000 & 0.066 & 0.027 & 0.027 & 0.031 & 11.879 & 71.832  & 0.090 & 0.027 & 0.034 & 0.027 & 15.192 & 94.239 & \\
1400 & 0.111 & 0.045 & 0.048 & 0.043 & 16.823 & 88.479  & 0.129 & 0.057 & 0.050 & 0.043 & 21.691 & 112.505& \\
1800 & 0.187 & 0.109 & 0.066 & 0.102 & 22.687 & 139.379 & 0.191 & 0.102 & 0.085 & 0.103 & 26.678 & 160.596 &\\ \hline
200  & 0.016 & 0.008 & 0.007 & 0.010 & 2.619  & 15.614  & 0.015 & 0.007 & 0.008 & 0.010 & 2.637  & 15.442 & \parbox[t]{2mm}{\multirow{5}{*}{\rotatebox[origin=c]{-90}{Control-2}}} \\
600  & 0.054 & 0.021 & 0.015 & 0.019 & 7.557  & 35.584  & 0.048 & 0.013 & 0.015 & 0.018 & 7.577  & 35.965 & \\
1000 & 0.071 & 0.025 & 0.026 & 0.033 & 12.927 & 76.753  & 0.076 & 0.031 & 0.034 & 0.026 & 13.128 & 77.762 & \\
1400 & 0.127 & 0.048 & 0.046 & 0.042 & 18.283 & 107.774 & 0.120 & 0.038 & 0.063 & 0.048 & 18.235 & 108.451& \\
1800 & 0.202 & 0.104 & 0.062 & 0.106 & 25.190 & 150.992 & 0.186 & 0.101 & 0.077 & 0.102 & 25.740 & 157.942 &\\ \hline
200  & 0.015 & 0.006 & 0.007 & 0.007 & 2.587  & 14.741  & 0.015 & 0.005 & 0.007 & 0.006 & 2.651  & 14.758  &  \parbox[t]{2mm}{\multirow{5}{*}{\rotatebox[origin=c]{-90}{IV}}} \\
600  & 0.045 & 0.027 & 0.014 & 0.028 & 7.215  & 33.472  & 0.040 & 0.023 & 0.015 & 0.024 & 7.098  & 33.478  &\\
1000 & 0.080 & 0.045 & 0.027 & 0.045 & 17.049 & 82.625  & 0.072 & 0.023 & 0.026 & 0.027 & 16.565 & 82.740  &\\
1400 & 0.133 & 0.036 & 0.041 & 0.037 & 16.372 & 85.120  & 0.088 & 0.062 & 0.045 & 0.063 & 16.539 & 86.188  &\\
1800 & 0.189 & 0.088 & 0.058 & 0.083 & 31.102 & 211.774 & 0.170 & 0.090 & 0.068 & 0.084 & 22.727 & 141.746&\\
\hline
  \end{tabular}}
  \caption{Simulation results for $n=600$ and $SNR=3$.} 
\label{simresults_n600_SNR3}
\end{table}

\pagebreak


\begin{table}[h]
\resizebox{0.8\textwidth}{!}{%
  \begin{tabular}{|r|rrrrrr|rrrrrr|r|}
  \hline
  &\multicolumn{6}{c|}{iid covariates}&\multicolumn{6}{c|}{correlated covariates}&\\
  \hline
 & \textbf{Lasso} & \textbf{pBA}& \textbf{I-pBA} &  \textbf{oBA} & \textbf{CV-pBA} & \textbf{CV-Lasso} & \textbf{Lasso} & \textbf{pBA}& \textbf{I-pBA} & \textbf{oBA} & \textbf{CV-pBA} & \textbf{CV-Lasso} &  \\ 
  \hline
  \textbf{p}&\multicolumn{12}{c|}{Mean Absolute Error}&\\
  \hline
  200 & 0.029 & 0.030 & 0.028&0.030 & 0.031 & 0.032 & 0.030 & 0.030 & 0.027&0.031 & 0.032 & 0.032 & \parbox[t]{2mm}{\multirow{5}{*}{\rotatebox[origin=c]{-90}{Control-1}}}\\  
  600 & 0.028 & 0.028 & 0.030&0.028 & 0.033 & 0.038 & 0.030 & 0.030 & 0.030&0.031 & 0.034 & 0.037& \\ 
  1000 & 0.032 & 0.032 &0.030& 0.032 & 0.038 & 0.047 & 0.029 & 0.029 &0.031& 0.030 & 0.036 & 0.041& \\ 
  1400 & 0.033 & 0.032 &0.030& 0.033 & 0.042 & 0.051 & 0.031 & 0.031 &0.030& 0.032 & 0.035 & 0.047 &\\ 
  1800 & 0.032 & 0.031 &0.032& 0.032 & 0.039 & 0.058 & 0.029 & 0.028 &0.032& 0.029 & 0.039 & 0.049& \\ \hline
  200 & 0.031 & 0.029 & 0.028&0.029 & 0.032 & 0.031 & 0.028 & 0.029 & 0.029&0.029 & 0.031 & 0.029& \parbox[t]{2mm}{\multirow{5}{*}{\rotatebox[origin=c]{-90}{Control-2}}}\\  
  600 & 0.029 & 0.028 & 0.031&0.028 & 0.032 & 0.035 & 0.029 & 0.029 & 0.028&0.029 & 0.033 & 0.034 &\\ 
  1000 & 0.031 & 0.031 &0.029& 0.031 & 0.037 & 0.041 & 0.031 & 0.030 &0.029& 0.031 & 0.036 & 0.037& \\ 
  1400 & 0.031 & 0.030 &0.030& 0.030 & 0.040 & 0.044 & 0.029 & 0.029 &0.031& 0.029 & 0.039 & 0.038& \\ 
  1800 & 0.030 & 0.029 &0.030& 0.030 & 0.041 & 0.047 & 0.030 & 0.029 &0.030& 0.029 & 0.038 & 0.041& \\ \hline
  200 & 0.038 & 0.037 & 0.042&0.037 & 0.038 & 0.038 & 0.060 & 0.061 & 0.062&0.061 & 0.056 & 0.059& \parbox[t]{2mm}{\multirow{5}{*}{\rotatebox[origin=c]{-90}{IV}}} \\ 
  600 & 0.038 & 0.038 & 0.039&0.038 & 0.039 & 0.039 & 0.073 & 0.072 & 0.069&0.071 & 0.064 & 0.070& \\ 
  1000 & 0.041 & 0.041 &0.040& 0.041 & 0.040 & 0.041 & 0.078 & 0.076 &0.072& 0.075 & 0.071 & 0.075& \\ 
  1400 & 0.041 & 0.041 &0.039& 0.042 & 0.042 & 0.042 & 0.079 & 0.078 &0.075& 0.076 & 0.071 & 0.242& \\ 
  1800 & 0.039 & 0.039 &0.041& 0.039 & 0.041 & 0.040 & 0.074 & 0.074 &0.077& 0.073 & 0.070 & 0.078& \\ \hline
  \textbf{p}&\multicolumn{12}{c|}{Standard Deviation}&\\
  \hline
  200 & 0.036 & 0.037 & 0.035&0.037 & 0.039 & 0.037 & 0.037 & 0.037 & 0.034&0.037 & 0.041 & 0.039 & \parbox[t]{2mm}{\multirow{5}{*}{\rotatebox[origin=c]{-90}{Control-1}}}\\ 
  600 & 0.036 & 0.035 & 0.035&0.035 & 0.041 & 0.038 & 0.036 & 0.036 & 0.036&0.036 & 0.042 & 0.039& \\ 
  1000 & 0.038 & 0.037 &0.036& 0.037 & 0.047 & 0.042 & 0.036 & 0.037 &0.038& 0.037 & 0.045 & 0.042& \\ 
  1400 & 0.039 & 0.038 &0.036& 0.039 & 0.052 & 0.045 & 0.037 & 0.036 &0.037& 0.037 & 0.045 & 0.043& \\ 
  1800 & 0.037 & 0.036 &0.036& 0.036 & 0.050 & 0.045 & 0.035 & 0.034 &0.037& 0.034 & 0.048 & 0.042& \\ \hline
  200 & 0.037 & 0.037 & 0.036&0.037 & 0.040 & 0.038 & 0.036 & 0.036 & 0.036&0.036 & 0.039 & 0.036 & \parbox[t]{2mm}{\multirow{5}{*}{\rotatebox[origin=c]{-90}{Control-2}}}\\  
  600 & 0.035 & 0.035 & 0.038&0.035 & 0.041 & 0.038 & 0.037 & 0.036 & 0.035&0.036 & 0.042 & 0.039& \\ 
  1000 & 0.038 & 0.038 &0.037& 0.038 & 0.046 & 0.045 & 0.038 & 0.038 &0.037& 0.038 & 0.046 & 0.041& \\ 
  1400 & 0.038 & 0.037 &0.037& 0.037 & 0.050 & 0.045 & 0.037 & 0.037 &0.038& 0.037 & 0.048 & 0.042& \\ 
  1800 & 0.038 & 0.037 &0.038& 0.037 & 0.050 & 0.048 & 0.037 & 0.036 &0.038& 0.036 & 0.046 & 0.044& \\ \hline
  200 & 0.047 & 0.046 & 0.052&0.046 & 0.045 & 0.046 & 0.073 & 0.074 & 0.077&0.074 & 0.065 & 0.071& \parbox[t]{2mm}{\multirow{5}{*}{\rotatebox[origin=c]{-90}{IV}}} \\ 
  600 & 0.047 & 0.047 & 0.049&0.047 & 0.044 & 0.047 & 0.090 & 0.086 & 0.081&0.086 & 0.070 & 0.085& \\ 
  1000 & 0.050 & 0.051 &0.051& 0.051 & 0.046 & 0.050 & 0.091 & 0.088 &0.085& 0.087 & 0.069 & 0.086& \\ 
  1400 & 0.050 & 0.050 &0.049& 0.051 & 0.047 & 0.050 & 0.094 & 0.090 &0.088& 0.088 & 0.066 & 3.653& \\ 
  1800 & 0.049 & 0.049 &0.051& 0.049 & 0.045 & 0.049 & 0.086 & 0.086 &0.087& 0.084 & 0.063 & 0.117& \\ 
  \hline
  \textbf{p}&\multicolumn{12}{c|}{Rejection Rate}&\\
  \hline
  200 & 0.058 & 0.056 & 0.048&0.058 & 0.054 & 0.060 & 0.042 & 0.060 & 0.038&0.064 & 0.060 & 0.074 & \parbox[t]{2mm}{\multirow{5}{*}{\rotatebox[origin=c]{-90}{Control-1}}}\\ 
  600 & 0.054 & 0.056 & 0.060&0.054 & 0.036 & 0.136 & 0.050 & 0.062 & 0.064&0.052 & 0.050 & 0.122& \\ 
  1000 & 0.068 & 0.064 &0.048& 0.068 & 0.046 & 0.228 & 0.058 & 0.060 &0.062& 0.064 & 0.044 & 0.172& \\ 
  1400 & 0.084 & 0.078 &0.052& 0.086 & 0.068 & 0.272 & 0.062 & 0.062 &0.062& 0.062 & 0.044 & 0.206& \\ 
  1800 & 0.086 & 0.074 &0.062& 0.080 & 0.052 & 0.336 & 0.078 & 0.066 &0.078& 0.070 & 0.050 & 0.240& \\ \hline
  200 & 0.080 & 0.058 & 0.046&0.060 & 0.052 & 0.060 & 0.052 & 0.054 & 0.042&0.058 & 0.060 & 0.060 & \parbox[t]{2mm}{\multirow{5}{*}{\rotatebox[origin=c]{-90}{Control-2}}}\\  
  600 & 0.042 & 0.042 & 0.062&0.040 & 0.040 & 0.098 & 0.060 & 0.054 & 0.046&0.058 & 0.046 & 0.110 &\\ 
  1000 & 0.070 & 0.072 &0.056& 0.062 & 0.050 & 0.148 & 0.070 & 0.048 &0.048& 0.056 & 0.072 & 0.118& \\ 
  1400 & 0.056 & 0.048 &0.054& 0.046 & 0.070 & 0.210 & 0.058 & 0.062 &0.060& 0.066 & 0.060 & 0.148& \\ 
  1800 & 0.054 & 0.052 &0.066& 0.056 & 0.056 & 0.210 & 0.058 & 0.064 &0.068& 0.056 & 0.044 & 0.164& \\ \hline
  200 & 0.046 & 0.048 & 0.054&0.050 & 0.056 & 0.048 & 0.042 & 0.052 & 0.060&0.048 & 0.060 & 0.042 & \parbox[t]{2mm}{\multirow{5}{*}{\rotatebox[origin=c]{-90}{IV}}}\\ 
  600 & 0.046 & 0.044 & 0.052&0.050 & 0.056 & 0.048 & 0.070 & 0.076 & 0.058&0.076 & 0.122 & 0.096& \\ 
  1000 & 0.054 & 0.058 &0.064& 0.056 & 0.082 & 0.054 & 0.072 & 0.084 &0.056& 0.078 & 0.154 & 0.088& \\ 
  1400 & 0.068 & 0.070 &0.052& 0.074 & 0.112 & 0.078 & 0.056 & 0.060 &0.066& 0.062 & 0.136 & 0.058& \\ 
  1800 & 0.070 & 0.066 &0.070& 0.064 & 0.098 & 0.072 & 0.064 & 0.058 &0.060& 0.060 & 0.142 & 0.062& \\ 
  \hline
  \textbf{p}&\multicolumn{12}{c|}{Computation Time}&\\
  \hline
  200  & 0.025 & 0.007 & 0.008 & 0.009 & 2.655  & 18.334  & 0.020 & 0.006 & 0.009 & 0.008 & 2.704  & 18.442  & \parbox[t]{2mm}{\multirow{5}{*}{\rotatebox[origin=c]{-90}{Control-1}}}\\
  600  & 0.071 & 0.021 & 0.019 & 0.022 & 5.766  & 49.771  & 0.064 & 0.018 & 0.022 & 0.019 & 5.334  & 48.332&  \\
  1000 & 0.125 & 0.037 & 0.042 & 0.040 & 17.081 & 88.104  & 0.107 & 0.033 & 0.043 & 0.035 & 15.216 & 81.753 & \\
  1400 & 0.164 & 0.061 & 0.113 & 0.063 & 24.334 & 217.421 & 0.176 & 0.056 & 0.114 & 0.058 & 23.137 & 212.172& \\
  1800 & 0.241 & 0.087 & 0.179 & 0.088 & 33.859 & 203.568 & 0.206 & 0.076 & 0.180 & 0.077 & 33.169 & 208.790& \\ \hline
  200  & 0.025 & 0.010 & 0.008 & 0.012 & 2.709  & 18.179  & 0.021 & 0.006 & 0.009 & 0.008 & 2.641  & 18.279  & \parbox[t]{2mm}{\multirow{5}{*}{\rotatebox[origin=c]{-90}{Control-2}}}\\
  600  & 0.068 & 0.019 & 0.020 & 0.020 & 5.316  & 47.188  & 0.058 & 0.018 & 0.023 & 0.019 & 5.273  & 48.248  &\\
  1000 & 0.116 & 0.038 & 0.040 & 0.037 & 15.084 & 80.883  & 0.100 & 0.033 & 0.043 & 0.034 & 15.320 & 82.305&  \\
  1400 & 0.179 & 0.057 & 0.120 & 0.059 & 23.083 & 205.908 & 0.166 & 0.058 & 0.135 & 0.060 & 23.082 & 216.171& \\
  1800 & 0.220 & 0.083 & 0.158 & 0.080 & 30.138 & 195.389 & 0.250 & 0.076 & 0.169 & 0.076 & 37.373 & 207.889 &\\ \hline
  200  & 0.028 & 0.007 & 0.009 & 0.010 & 2.733  & 18.698  & 0.028 & 0.008 & 0.008 & 0.012 & 2.802  & 19.098 & \parbox[t]{2mm}{\multirow{5}{*}{\rotatebox[origin=c]{-90}{IV}}} \\
  600  & 0.065 & 0.019 & 0.022 & 0.020 & 12.978 & 47.585  & 0.066 & 0.020 & 0.022 & 0.021 & 13.616 & 50.228 & \\
  1000 & 0.108 & 0.062 & 0.041 & 0.061 & 22.438 & 81.243  & 0.100 & 0.033 & 0.042 & 0.036 & 23.171 & 86.667 & \\
  1400 & 0.158 & 0.109 & 0.135 & 0.104 & 39.719 & 296.523 & 0.142 & 0.058 & 0.153 & 0.055 & 30.675 & 225.399& \\
  1800 & 0.260 & 0.084 & 0.174 & 0.092 & 41.156 & 202.758 & 0.185 & 0.140 & 0.195 & 0.134 & 38.929 & 212.553&\\
  \hline
  \end{tabular}}
\caption{Simulation results for $n=800$ and $SNR=1$.} 
\label{simresults_n800}
\end{table}

\pagebreak

\begin{table}[h]
\resizebox{0.8\textwidth}{!}{%
  \begin{tabular}{|r|rrrrrr|rrrrrr|r|}
  \hline
  &\multicolumn{6}{c|}{iid covariates}&\multicolumn{6}{c|}{correlated covariates}&\\
  \hline
 & \textbf{Lasso} & \textbf{pBA}& \textbf{I-pBA} &  \textbf{oBA} & \textbf{CV-pBA} & \textbf{CV-Lasso} & \textbf{Lasso} & \textbf{pBA}& \textbf{I-pBA} & \textbf{oBA} & \textbf{CV-pBA} & \textbf{CV-Lasso} &  \\
  \hline
  \textbf{p}&\multicolumn{12}{c|}{Mean Absolute Error}&\\
  \hline
  200 & 0.029 & 0.029 & 0.030&0.029 & 0.032 & 0.033 & 0.029 & 0.028 & 0.027&0.029 & 0.030 & 0.030 & \parbox[t]{2mm}{\multirow{5}{*}{\rotatebox[origin=c]{-90}{Control-1}}}\\ 
  600 & 0.029 & 0.029 & 0.032&0.030 & 0.034 & 0.039 & 0.031 & 0.031 & 0.029&0.031 & 0.034 & 0.039& \\ 
  1000 & 0.031 & 0.032 &0.029& 0.032 & 0.037 & 0.047 & 0.029 & 0.030 &0.029& 0.030 & 0.035 & 0.041& \\ 
  1400 & 0.030 & 0.030 &0.030& 0.030 & 0.039 & 0.051 & 0.030 & 0.030 &0.031& 0.030 & 0.037 & 0.046& \\ 
  1800 & 0.031 & 0.031 &0.031& 0.031 & 0.040 & 0.054 & 0.033 & 0.032 &0.032& 0.033 & 0.039 & 0.052& \\ \hline
  200 & 0.030 & 0.030 & 0.030&0.029 & 0.032 & 0.032 & 0.030 & 0.031 & 0.030&0.030 & 0.032 & 0.031 & \parbox[t]{2mm}{\multirow{5}{*}{\rotatebox[origin=c]{-90}{Control-2}}}\\ 
  600 & 0.030 & 0.029 & 0.029&0.029 & 0.037 & 0.039 & 0.031 & 0.030 & 0.028&0.030 & 0.034 & 0.034& \\ 
  1000 & 0.032 & 0.031 &0.032& 0.031 & 0.040 & 0.045 & 0.028 & 0.029 &0.029& 0.030 & 0.038 & 0.039& \\ 
  1400 & 0.030 & 0.030 &0.031& 0.030 & 0.038 & 0.045 & 0.030 & 0.029 &0.030& 0.029 & 0.036 & 0.043& \\ 
  1800 & 0.030 & 0.030 &0.030& 0.030 & 0.040 & 0.047 & 0.029 & 0.029 &0.030& 0.028 & 0.040 & 0.048& \\ \hline
  200 & 0.031 & 0.031 & 0.031&0.031 & 0.031 & 0.031 & 0.053 & 0.053 & 0.050&0.053 & 0.052 & 0.053& \parbox[t]{2mm}{\multirow{5}{*}{\rotatebox[origin=c]{-90}{IV}}} \\ 
  600 & 0.034 & 0.034 & 0.032&0.034 & 0.034 & 0.034 & 0.053 & 0.053 & 0.052&0.053 & 0.053 & 0.053& \\ 
  1000 & 0.034 & 0.034 &0.033& 0.035 & 0.035 & 0.034 & 0.055 & 0.056 &0.056& 0.055 & 0.055 & 0.055& \\ 
  1400 & 0.031 & 0.031 &0.032& 0.031 & 0.031 & 0.031 & 0.056 & 0.055 &0.054& 0.055 & 0.056 & 0.056 &\\ 
  1800 & 0.032 & 0.032 &0.031& 0.032 & 0.033 & 0.032 & 0.054 & 0.054 &0.053& 0.054 & 0.053 & 0.053& \\ \hline
  \textbf{p}&\multicolumn{12}{c|}{Standard Deviation}&\\
  \hline
  200 & 0.037 & 0.037 & 0.038&0.037 & 0.040 & 0.040 & 0.035 & 0.035 & 0.034&0.035 & 0.038 & 0.036 & \parbox[t]{2mm}{\multirow{5}{*}{\rotatebox[origin=c]{-90}{Control-1}}}\\ 
  600 & 0.036 & 0.036 & 0.038&0.036 & 0.043 & 0.040 & 0.037 & 0.037 & 0.037&0.037 & 0.042 & 0.040& \\ 
  1000 & 0.037 & 0.038 &0.035& 0.038 & 0.046 & 0.044 & 0.036 & 0.036 &0.035& 0.037 & 0.043 & 0.041& \\ 
  1400 & 0.036 & 0.036 &0.036& 0.036 & 0.048 & 0.044 & 0.037 & 0.037 &0.036& 0.037 & 0.047 & 0.043 &\\ 
  1800 & 0.038 & 0.038 &0.037& 0.038 & 0.051 & 0.047 & 0.039 & 0.038 &0.038& 0.039 & 0.050 & 0.046& \\ \hline
  200 & 0.037 & 0.037 & 0.037&0.037 & 0.040 & 0.039 & 0.037 & 0.038 & 0.038&0.038 & 0.040 & 0.038& \parbox[t]{2mm}{\multirow{5}{*}{\rotatebox[origin=c]{-90}{Control-2}}} \\ 
  600 & 0.037 & 0.037 & 0.036&0.037 & 0.046 & 0.043 & 0.037 & 0.037 & 0.035&0.037 & 0.043 & 0.041& \\ 
  1000 & 0.039 & 0.039 &0.040& 0.039 & 0.050 & 0.047 & 0.036 & 0.037 &0.037& 0.038 & 0.048 & 0.042& \\ 
  1400 & 0.038 & 0.038 &0.039& 0.038 & 0.049 & 0.044 & 0.037 & 0.037 &0.038& 0.037 & 0.046 & 0.043& \\ 
  1800 & 0.037 & 0.037 &0.038& 0.037 & 0.048 & 0.047 & 0.037 & 0.037 &0.037& 0.036 & 0.050 & 0.046& \\ \hline
  200 & 0.038 & 0.039 & 0.039&0.039 & 0.038 & 0.038 & 0.065 & 0.066 & 0.061&0.065 & 0.064 & 0.065 & \parbox[t]{2mm}{\multirow{5}{*}{\rotatebox[origin=c]{-90}{IV}}}\\ 
  600 & 0.042 & 0.042 & 0.040&0.042 & 0.041 & 0.042 & 0.066 & 0.066 & 0.066&0.066 & 0.062 & 0.065& \\ 
  1000 & 0.042 & 0.043 &0.042& 0.043 & 0.042 & 0.042 & 0.070 & 0.071 &0.070& 0.070 & 0.066 & 0.069& \\ 
  1400 & 0.040 & 0.039 &0.040& 0.039 & 0.038 & 0.039 & 0.069 & 0.069 &0.067& 0.069 & 0.065 & 0.069& \\ 
  1800 & 0.040 & 0.040 &0.039& 0.040 & 0.039 & 0.040 & 0.066 & 0.066 &0.066& 0.066 & 0.062 & 0.065& \\ 
  \hline
  \textbf{p}&\multicolumn{12}{c|}{Rejection Rate}&\\
  \hline
  200 & 0.068 & 0.062 & 0.062&0.070 & 0.072 & 0.078 & 0.038 & 0.042 & 0.034&0.040 & 0.048 & 0.048 & \parbox[t]{2mm}{\multirow{5}{*}{\rotatebox[origin=c]{-90}{Control-1}}}\\ 
  600 & 0.046 & 0.050 & 0.086&0.050 & 0.042 & 0.138 & 0.068 & 0.068 & 0.068&0.062 & 0.042 & 0.146& \\ 
  1000 & 0.066 & 0.068 &0.054& 0.076 & 0.046 & 0.206 & 0.060 & 0.066 &0.058& 0.068 & 0.044 & 0.150& \\ 
  1400 & 0.064 & 0.068 &0.060& 0.078 & 0.044 & 0.296 & 0.060 & 0.086 &0.052& 0.072 & 0.062 & 0.188& \\ 
  1800 & 0.078 & 0.066 &0.062& 0.064 & 0.050 & 0.262 & 0.090 & 0.080 &0.090& 0.084 & 0.060 & 0.278& \\ \hline
  200 & 0.068 & 0.064 & 0.058&0.068 & 0.048 & 0.082 & 0.046 & 0.058 & 0.062&0.050 & 0.052 & 0.068 & \parbox[t]{2mm}{\multirow{5}{*}{\rotatebox[origin=c]{-90}{Control-2}}}\\ 
  600 & 0.066 & 0.068 & 0.050&0.064 & 0.070 & 0.146 & 0.052 & 0.062 & 0.040&0.062 & 0.054 & 0.098& \\ 
  1000 & 0.070 & 0.060 &0.068& 0.058 & 0.068 & 0.210 & 0.054 & 0.062 &0.050& 0.066 & 0.064 & 0.158& \\ 
  1400 & 0.066 & 0.064 &0.062& 0.066 & 0.056 & 0.214 & 0.064 & 0.066 &0.064& 0.068 & 0.058 & 0.176& \\ 
  1800 & 0.072 & 0.050 &0.060& 0.046 & 0.040 & 0.234 & 0.068 & 0.056 &0.060& 0.060 & 0.056 & 0.230& \\ \hline
  200 & 0.024 & 0.028 & 0.048&0.032 & 0.034 & 0.032 & 0.074 & 0.068 & 0.062&0.066 & 0.074 & 0.070 & \parbox[t]{2mm}{\multirow{5}{*}{\rotatebox[origin=c]{-90}{IV}}}\\ 
  600 & 0.048 & 0.042 & 0.052&0.040 & 0.056 & 0.044 & 0.054 & 0.058 & 0.060&0.058 & 0.068 & 0.054 &\\ 
  1000 & 0.052 & 0.054 &0.056& 0.054 & 0.056 & 0.058 & 0.072 & 0.076 &0.070& 0.072 & 0.086 & 0.078& \\ 
  1400 & 0.054 & 0.060 &0.054& 0.060 & 0.048 & 0.064 & 0.058 & 0.064 &0.054& 0.062 & 0.090 & 0.058& \\ 
  1800 & 0.036 & 0.038 &0.042& 0.040 & 0.048 & 0.040 & 0.048 & 0.052 &0.054& 0.054 & 0.072 & 0.052& \\ 
  \hline
  \textbf{p}&\multicolumn{12}{c|}{Computation Time}&\\
  \hline
200  & 0.025 & 0.010 & 0.009 & 0.015 & 2.656  & 18.189  & 0.023 & 0.008 & 0.008 & 0.014 & 2.648  & 18.404& \parbox[t]{2mm}{\multirow{5}{*}{\rotatebox[origin=c]{-90}{Control-1}}}\\  
600  & 0.067 & 0.046 & 0.022 & 0.039 & 5.251  & 47.713  & 0.058 & 0.044 & 0.021 & 0.035 & 5.223  & 49.271 & \\
1000 & 0.111 & 0.042 & 0.040 & 0.048 & 14.878 & 80.413  & 0.093 & 0.045 & 0.045 & 0.053 & 16.434 & 82.359 & \\
1400 & 0.175 & 0.114 & 0.108 & 0.123 & 22.739 & 206.075 & 0.157 & 0.149 & 0.134 & 0.159 & 23.500 & 209.792 &\\
1800 & 0.210 & 0.169 & 0.161 & 0.166 & 30.224 & 195.819 & 0.180 & 0.179 & 0.181 & 0.202 & 30.523 & 203.132& \\ \hline
200  & 0.026 & 0.008 & 0.008 & 0.014 & 2.692  & 18.623  & 0.023 & 0.008 & 0.008 & 0.013 & 2.724  & 18.790 & \parbox[t]{2mm}{\multirow{5}{*}{\rotatebox[origin=c]{-90}{Control-2}}}\\
600  & 0.067 & 0.039 & 0.020 & 0.032 & 5.332  & 50.050  & 0.062 & 0.043 & 0.027 & 0.037 & 5.330  & 50.029 & \\
1000 & 0.102 & 0.049 & 0.046 & 0.058 & 16.122 & 90.967  & 0.112 & 0.051 & 0.054 & 0.062 & 17.141 & 95.673&  \\
1400 & 0.185 & 0.137 & 0.128 & 0.154 & 24.310 & 214.960 & 0.164 & 0.118 & 0.113 & 0.128 & 23.210 & 216.107& \\
1800 & 0.223 & 0.165 & 0.163 & 0.162 & 30.335 & 201.714 & 0.201 & 0.251 & 0.218 & 0.230 & 32.531 & 211.216& \\ \hline
200  & 0.028 & 0.009 & 0.008 & 0.013 & 2.746  & 18.605  & 0.028 & 0.009 & 0.008 & 0.013 & 2.759  & 20.042 & \parbox[t]{2mm}{\multirow{5}{*}{\rotatebox[origin=c]{-90}{IV}}}\\
600  & 0.068 & 0.042 & 0.020 & 0.031 & 11.119 & 49.314  & 0.062 & 0.039 & 0.021 & 0.032 & 12.818 & 51.969  &\\
1000 & 0.110 & 0.042 & 0.041 & 0.050 & 20.740 & 90.738  & 0.098 & 0.043 & 0.043 & 0.051 & 23.254 & 91.486 & \\
1400 & 0.168 & 0.108 & 0.108 & 0.122 & 41.661 & 303.407 & 0.152 & 0.110 & 0.109 & 0.123 & 29.069 & 231.115& \\
1800 & 0.272 & 0.157 & 0.151 & 0.156 & 36.197 & 198.744 & 0.210 & 0.164 & 0.170 & 0.187 & 38.697 & 213.886&\\
  \hline
  \end{tabular}}
\caption{Simulation results for $n=800$ and $SNR=3$.} 
\label{simresults_n800_SNR3}
\end{table}

\pagebreak

\begin{figure}[h]
\centering
\includegraphics[width=\linewidth]{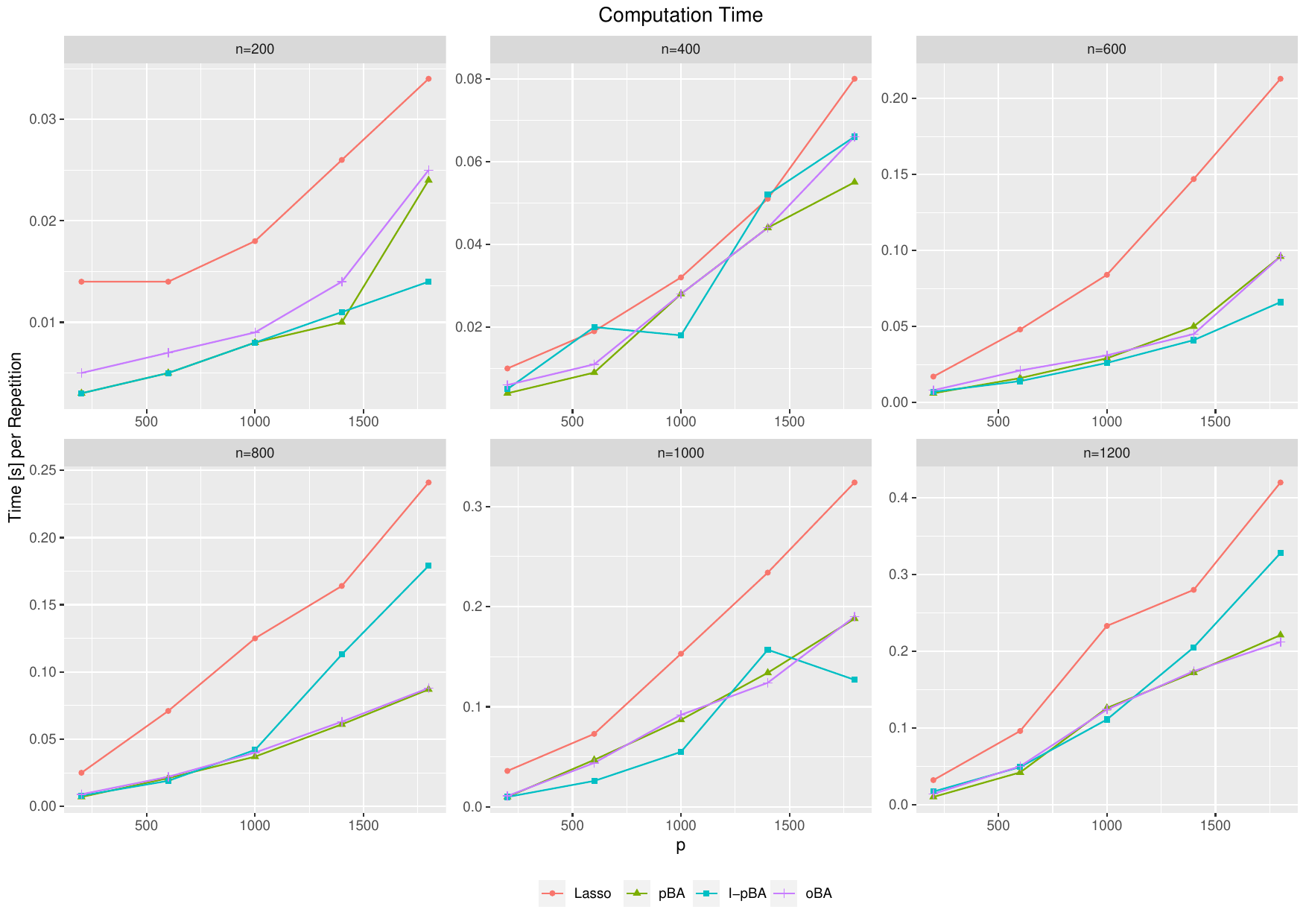}
\caption{Computation time for increasing sample size in the sparse i.i.d. setting (Control-1).}
\label{time_all}
\end{figure}
\ \\
\pagebreak

\section{Additional Details of the Application}\label{details_application}

\cite{levine2020bank} construct FNNs and SNNs as potential IVs. The following table gives the three sets of FNNs that they constructed but did not use.

For a given acquirer-target-year observation, one can list the 51-element vector of states that this acquirer state can access under that year’s bank branching deregulation laws. Each of the 51 elements equals one if the represented state can be accessed by the acquirer state and equals zero otherwise, similar to the construction of OLS independent variables in \cite{levine2020bank}. For any state X, we can also list the vector of states that state X can access.
\begin{table}[htbp]
  \centering
    \begin{tabular}{p{1.915em}p{32.5em}}
    \hline\hline \\[-2ex]
    FNN1 & 
     For a given acquirer-target-year observation, we can list the $51$-element vector of states that this acquirer state can access under that year’s bank branching deregulation laws. Each of the $51$ elements equals one if the represented state can be accessed by the acquirer state and equals zero otherwise, similar to the construction of OLS independent variables in \cite{levine2020bank}. For any state $S_0$, we can also list the vector of states that state $S_0$ can access. Then $FNN1_{S_0}$ is defined as the \textit{Overlap}, \textit{Correlation Coefficient}, or \textit{Cosine Distance} of the two vectors, multiplied by the geographic distance ratio (GDR1), which is equal to (distance between acquirer state and target state)/(distance between acquirer state and state $S_0$ + distance between target state and state $S_0$). 
     Formally, the FNN1 IV for \textit{Overlap} is defined as:
     \begin{equation*}
     k\textrm{-}NN(S_0,S_1,S_2):=\frac{\sum_{S\in \mathcal{S}} 1(S\in N_k(S_1) \cap N_k(S_0))   }{\sum_{S\in \mathcal{S}} 1(S\in N_k(S_1)\cup N_k(S_0)) }GDR1(S_0,S_1,S_2).
     \end{equation*}
     For example, $FNN1_{NJ}$ is the multiplication product between 1) the \textit{Overlap}, \textit{Correlation Coefficient}, or \textit{Cosine Distance} of acquirer-state-accessible states and New Jersey-accessible states, and 2) the distance ratio. Since $S_0$ can be any state among $50$ states and the District of Columbia ($DC$),  $51$ FNN1s are constructed.\\
    \hline\\[-2ex]
    FNN2   & 
    For any state $S_0$, $FNN2_{S_0}$ is similar to $FNN1_{S_0}$; the only difference is that the geographic distance ratio (GDR2) is replaced with 1/(distance between acquirer state and state $S_0$ + distance between target state and state $S_0$). 
    Formally, the FNN2 IV for \textit{Overlap} is defined as:
    \begin{equation*}
    k\textrm{-}NN(S_0,S_1,S_2):=\frac{\sum_{S\in \mathcal{S}} 1(S\in N_k(S_1) \cap N_k(S_0))   }{\sum_{S\in \mathcal{S}} 1(S\in N_k(S_1)\cup N_k(S_0)) }GDR2(S_0,S_1,S_2).
    \end{equation*}
    Likewise, $51$ $FNN2$s are constructed.\\
    \hline\\[-2ex]
    FNN3   & 
    For a given acquirer-target-year observation, $FNN3_{S_0}$ is an indicator variable that equals one if the acquirer state and the target state can both access state $S_0$ in that year, multiplied by the geographic distance ratio (GDR1), which is equal to (distance between acquirer state and target state)/(distance between acquirer state and state $S_0$ + distance between target state and state $S_0$). 
    Formally, the FNN3 IV is defined as:
    \begin{equation*}
    k\textrm{-}NN(S_0,S_1,S_2):= 1(S_0\in N_k(S_1) \cap N_k(S_2))   GDR1(S_0,S_1,S_2).
    \end{equation*}
    $51$ $FNN3$s are constructed.\\ \\[-2ex]
    \hline\hline
    \end{tabular}
    \caption{Description of 153 FNN IVs.}
  \label{IV_construction}
\end{table}\ \\
\newpage\ \\
\newpage

\bibliographystyle{aea}
\bibliography{Literatur_NR, mybibAR}

@article{levine2020bank,
  title={Bank Networks and Acquisitions},
  author={Levine, Ross and Lin, Chen and Wang, Zigan},
  journal={Management Science},
  volume={66},
  number={11},
  pages={5216--5241},
  year={2020},
  publisher={INFORMS}
}

@article{mayr2014evolution,
  title={The evolution of boosting algorithms-from machine learning to statistical modelling},
  author={Mayr, Andreas and Binder, Harald and Gefeller, Olaf and Schmid, Matthias},
  journal={arXiv preprint arXiv:1403.1452},
  year={2014}
}

@article{buhlmann2006sparse,
  title={Sparse Boosting.},
  author={B{\"u}hlmann, Peter and Yu, Bin and Singer, Yoram and Wasserman, Larry},
  journal={Journal of Machine Learning Research},
  volume={7},
  number={6},
  year={2006}
}

@article{hepp2016approaches,
  title={Approaches to regularized regression--a comparison between gradient boosting and the lasso},
  author={Hepp, Tobias and Schmid, Matthias and Gefeller, Olaf and Waldmann, Elisabeth and Mayr, Andreas},
  journal={Methods of information in medicine},
  volume={55},
  number={05},
  pages={422--430},
  year={2016},
  publisher={Schattauer GmbH}
}

@article{knaus2020double,
  title={Double machine learning based program evaluation under unconfoundedness},
  author={Knaus, Michael C},
  journal={arXiv preprint arXiv:2003.03191},
  year={2020}
}

@article{lin2013agnostic,
  title={Agnostic notes on regression adjustments to experimental data: Reexamining Freedman’s critique},
  author={Lin, Winston},
  journal={Annals of Applied Statistics},
  volume={7},
  number={1},
  pages={295--318},
  year={2013},
  publisher={Institute of Mathematical Statistics}
}

@book{friedman2001elements,
  title={The elements of statistical learning},
  author={Friedman, Jerome and Hastie, Trevor and Tibshirani, Robert},
  volume={1},
  number={10},
  year={2001},
  publisher={Springer series in statistics New York}
}

@article{efron2004least,
  title={Least angle regression},
  author={Efron, Bradley and Hastie, Trevor and Johnstone, Iain and Tibshirani, Robert and others},
  journal={The Annals of statistics},
  volume={32},
  number={2},
  pages={407--499},
  year={2004},
  publisher={Institute of Mathematical Statistics}
}

@article{van2009conditions,
  title={On the conditions used to prove oracle results for the Lasso},
  author={Van De Geer, Sara A and B{\"u}hlmann, Peter},
  journal={Electronic Journal of Statistics},
  volume={3},
  pages={1360--1392},
  year={2009},
  publisher={The Institute of Mathematical Statistics and the Bernoulli Society}
}

@ARTICLE{nr:buhlmann:2006,
  author = {Peter B\"{u}hlmann},
  title = {Boosting for High-Dimensional Linear Models},
  journal = {The Annals of Statistics},
  year = {2006},
  volume = {34},
  pages = {559--583},
  number = {2}
}

@ARTICLE{nr:buhlmann.hothorn:2007,
  author = {Peter B\"{u}hlmann and Torsten Hothorn},
  title = {Boosting Algorithms: Regularization, Prediction and Model Fitting},
  journal = {Statistical Science},
  year = {2007},
  volume = {22},
  pages = {477--505},
  number = {4},
  note = {with discussion},
  arxiv = {0804.2752},
  doi = {10.1214/07-STS242},
  sici = {0883-4237(2007)22:4<477:BARPAM>2.0.CO;2-5},
  url = {http://dx.doi.org/10.1214/07-STS242}
}

@ARTICLE{nr:buhlmann.yu:2003,
  author = {Peter B\"{u}hlmann and Bin Yu},
  title = {Boosting with the $\uppercase{l}_2$ {L}oss: {R}egression and Classification},
  journal = {Journal of the American Statistical Association},
  year = {2003},
  volume = {98},
  pages = {324--339},
  number = {462},
  issn = {01621459},
  publisher = {American Statistical Association},
  url = {http://www.jstor.org/stable/30045243}
}

@ARTICLE{BCCH:2012,
  author = {Belloni, A. and Chen, D. and Chernozhukov, V. and Hansen, C.},
  title = {Sparse Models and Methods for Optimal Instruments With an Application
	to Eminent Domain},
  journal = {Econometrica},
  year = {2012},
  volume = {80},
  pages = {2369--2429},
  number = {6},
  doi = {10.3982/ECTA9626},
  issn = {1468-0262},
  keywords = {Inference on a low-dimensional parameter after model selection, imperfect
	model selection, instrumental variables, Lasso, post-Lasso, data-driven
	penalty, heteroscedasticity, non-Gaussian errors, moderate deviations
	for self-normalized sums},
  publisher = {Blackwell Publishing Ltd},
  url = {http://dx.doi.org/10.3982/ECTA9626}
}

@ARTICLE{belloni:2013,
  author = {Belloni, Alexandre and Chernozhukov, Victor},
  title = {Least squares after model selection in high-dimensional sparse models},
  journal = {Bernoulli},
  year = {2013},
  volume = {19},
  pages = {521--547},
  number = {2},
  month = {05},
  doi = {10.3150/11-BEJ410},
  fjournal = {Bernoulli},
  publisher = {Bernoulli Society for Mathematical Statistics and Probability},
  url = {http://dx.doi.org/10.3150/11-BEJ410}
}

@ARTICLE{nr:breiman:1998,
  author = {Leo Breiman},
  title = {Arcing Classifiers},
  journal = {The Annals of Statistics},
  year = {1998},
  volume = {26},
  pages = {801--824},
  number = {3},
  publisher = {Institute of Mathematical Statistics}
}

@ARTICLE{nr:breiman:1996,
  author = {Breiman, Leo},
  title = {Bagging Predictors},
  journal = {Machine Learning},
  year = {1996},
  volume = {24},
  pages = {123--140},
  affiliation = {Statistics Department University of California 94720 Berkeley CA},
  doi = {10.1007/BF00058655},
  issn = {0885-6125},
  issue = {2},
  keyword = {Computer Science},
  publisher = {Springer Netherlands},
  url = {http://dx.doi.org/10.1007/BF00058655}
}

@ARTICLE{CCK:2014,
  author = {Chernozhukov, Victor and Chetverikov, Denis and Kato, Kengo},
  title = {Gaussian approximation of suprema of empirical processes},
  journal = {The Annals of Statistics},
  year = {2014},
  volume = {42},
  pages = {1564--1597},
  number = {4},
  publisher = {Institute of Mathematical Statistics}
}

@ARTICLE{freund:1997,
  author = {Freund, Yoav and Schapire, Robert E},
  title = {A decision-theoretic generalization of on-line learning and an application
	to boosting},
  journal = {Journal of computer and system sciences},
  year = {1997},
  volume = {55},
  pages = {119--139},
  number = {1},
  publisher = {Academic Press}
}

@ARTICLE{nr:friedman:2001,
  author = {Jerome H. Friedman},
  title = {Greedy Function Approximation: {A} Gradient Boosting Machine},
  journal = {The Annals of Statistics},
  year = {2001},
  volume = {29},
  pages = {1189--1232},
  number = {5},
  issn = {00905364},
  language = {English},
  publisher = {Institute of Mathematical Statistics},
  url = {http://www.jstor.org/stable/2699986}
}

@ARTICLE{nr:friedman.hastie.ea:2000,
  author = {J. H. Friedman and T. Hastie and R. Tibshirani},
  title = {Additive Logistic Regression: A Statistical View of Boosting},
  journal = {The Annals of Statistics},
  year = {2000},
  volume = {28},
  pages = {337--407},
  note = {with discussion},
  doi = {10.1214/aos/1016218223},
  url = {http://projecteuclid.org/euclid.aos/1016218223}
}

@ARTICLE{nr:leeb.potscher:2005,
  author = {Leeb, Hannes and P\"{o}tscher, Benedikt M.},
  title = {Model Selection and Inference: Facts and Fiction},
  journal = {Econometric Theory},
  year = {2005},
  volume = {21},
  pages = {21--59},
  number = {1},
  copyright = {Copyright 2005 Cambridge University Press},
  doi = {10.1017/S0266466605050036},
  issn = {02664666},
  jstor_formatteddate = {Feb., 2005},
  jstor_issuetitle = {ET 20th Anniversary Colloquium: Automated Inference and the Future
	of Econometrics},
  language = {English},
  publisher = {Cambridge University Press},
  url = {http://www.jstor.org/stable/3533623}
}

@BOOK{delapena,
  title = {Self-normalized processes},
  publisher = {Springer-Verlag},
  year = {2009},
  author = {de la Pe{\~n}a, Victor H. and Lai, Tze Leung and Shao, Qi-Man},
  pages = {xiv+275},
  series = {Probability and its Applications (New York)},
  address = {Berlin},
  note = {Limit theory and statistical applications},
  isbn = {978-3-540-85635-1},
  mrclass = {60-01 (60E15 60F05 60F10 60G07 62F40 62G20 62L05)},
  mrnumber = {2488094 (2010d:60002)},
  mrreviewer = {Fuchang Gao}
}

@ARTICLE{nr:zhang.yu:2005,
  author = {T. Zhang and B. Yu},
  title = {Boosting with Early Stopping: Convergence and Consistency},
  journal = {The Annals of Statistics},
  year = {2005},
  volume = {33},
  pages = {1538--1579},
  doi = {10.1214/009053605000000255},
  url = {http://projecteuclid.org/euclid.aos/1123250222}
}

@TechReport{SL_2016,
  author      = {Ye Luo and Martin Spindler },
  title       = {High-Dimensional L2Boosting: Rate of Convergence},
  institution = {arxiv},
  year        = {2016},
  url         = {https://arxiv.org/abs/1602.08927},
}

@Article{CHS:2016,
  author  = {Victor Chernozhukov and Christian Hansen and Martin Spindler},
  title   = {Valid Post-Selection and Post-Regularization Inference: An Elementary, General Approach},
  journal = {Annual Review of Economics},
  year    = {2015},
  volume  = {7},
  number  = {1},
  pages   = {649-688},
  doi     = {10.1146/annurev-economics-012315-015826},
  eprint  = {http://dx.doi.org/10.1146/annurev-economics-012315-015826},
  url     = { 
        http://dx.doi.org/10.1146/annurev-economics-012315-015826
    
},
}

@Article{CCDDHN:2016,
  author        = {{Chernozhukov}, V. and {Chetverikov}, D. and {Demirer}, M. and {Duflo}, E. and {Hansen}, C. and {Newey}, a.~W.},
  title         = {{Double Machine Learning for Treatment and Causal Parameters}},
  journal       = {ArXiv e-prints},
  year          = {2016},
  month         = jul,
  adsnote       = {Provided by the SAO/NASA Astrophysics Data System},
  adsurl        = {http://adsabs.harvard.edu/abs/2016arXiv160800060C},
  archiveprefix = {arXiv},
  eprint        = {1608.00060},
  keywords      = {Statistics - Machine Learning, 62G},
  primaryclass  = {stat.ML},
}

@Article{BelloniChernozhukovHansen2011,
  author  = {Alexandre Belloni and Victor Chernozhukov and Christian Hansen},
  title   = {Inference on Treatment Effects After Selection Amongst High-Dimensional Controls},
  journal = {Review of Economic Studies},
  year    = {2014},
  volume  = {81},
  pages   = {608-650},
%   note    = {ArXiv, 2011},
%   issue   = {2},
}

@Article{cox1958,
  author    = {Cox, D. R.},
  title     = {Some Problems Connected with Statistical Inference},
  journal   = {Ann. Math. Statist.},
  year      = {1958},
  volume    = {29},
  number    = {2},
  pages     = {357--372},
  month     = {06},
  doi       = {10.1214/aoms/1177706618},
  fjournal  = {The Annals of Mathematical Statistics},
  publisher = {The Institute of Mathematical Statistics},
  url       = {https://doi.org/10.1214/aoms/1177706618},
}

@Book{ImbensRubin:2015,
  title     = {Causal Inference for Statistics, Social, and Biomedical Sciences: An Introduction},
  publisher = {Cambridge University Press},
  year      = {2015},
  author    = {Imbens, Guido W. and Rubin, Donald B.},
  address   = {New York, NY, USA},
  isbn      = {0521885884, 9780521885881},
}

@article{Jayaratne,
    author = {Jayaratne, Jith and Strahan, Philip E.},
    title = "{The Finance-Growth Nexus: Evidence from Bank Branch Deregulation*}",
    journal = {The Quarterly Journal of Economics},
    volume = {111},
    number = {3},
    pages = {639-670},
    year = {1996},
    month = {08},
    issn = {0033-5533},
    doi = {10.2307/2946668},
    url = {https://doi.org/10.2307/2946668}
}

@InCollection{Chandrasekhar,
  author    = {Arun G. Chandrasekhar},
  title     = {Econometrics of network formation},
  booktitle = {The Oxford handbook of the economics of networks},
  publisher = {Oxford University Press},
  year      = {2016}
}

@article{hansen2008estimation,
  title={Estimation with many instrumental variables},
  author={Hansen, Christian and Hausman, Jerry and Newey, Whitney},
  journal={Journal of Business \& Economic Statistics},
  volume={26},
  number={4},
  pages={398--422},
  year={2008},
  publisher={Taylor \& Francis}
}

\end{document}